\newcolumntype{x}[1]{>{\centering\arraybackslash\hspace{0pt}}p{#1}}
\newcommand{\santinote}[1]{}
\newcommand{\expcrossu}{{\small{\sf{EXP3.CL-U}}}}
\newcommand{\expcrossk}{{\small{\sf{EXP3.CL}}}}
\newcommand{\ucbcross}{{\small{\sf{UCB1.CL}}}}
\newcommand{\D}{\mathcal{D}}
\newcommand{\F}{\mathcal{F}}
\newcommand{\R}{\mathbb{R}}
\newcommand{\eps}{\varepsilon}
\def\E{\mathbb{E}}
\def \Reg  {{\sf Reg}}
\def \post {{\sf{\small{Post}}}}
\def \ante {{\sf{\small{Ante}}}}
\def \pre {{\sf{\small{Pre}}}}
\def \stoch {{\sf{\small{Stoc}}}}
\def \adv {{\sf{\small{Adv}}}}
\newenvironment{policy}[1][htb]
  {% Update algorithm name
   \begin{algorithm}[#1]%
  }{\end{algorithm}}
\def\EMAIL#1{\href{mailto:#1}{#1}}% When hyperref is used, otherwise outcomment 
\begin{document}
%%%%%%%%%%%%%%%%

% Outcomment only when entries are known. Otherwise leave as is and 
%   default values will be used.
%\setcounter{page}{1}
%\VOLUME{00}%
%\NO{0}%
%\MONTH{Xxxxx}% (month or a similar seasonal id)
%\YEAR{0000}% e.g., 2005
%\FIRSTPAGE{000}%
%\LASTPAGE{000}%
%\SHORTYEAR{00}% shortened year (two-digit)
%\ISSUE{0000} %
%\LONGFIRSTPAGE{0001} %
%\DOI{10.1287/xxxx.0000.0000}%

% Author's names for the running heads
% Sample depending on the number of authors;
% \RUNAUTHOR{Jones}
% \RUNAUTHOR{Jones and Wilson}
% \RUNAUTHOR{Jones, Miller, and Wilson}
% \RUNAUTHOR{Jones et al.} % for four or more authors
% Enter authors following the given pattern:
%\RUNAUTHOR{}

%Title or shortened title suitable for running heads. Sample:
\RUNTITLE{Contextual Bandits with Cross-Learning}

% Full title. Sample:
% \TITLE{Bundling Information Goods of Decreasing Value}
% Enter the full title:
\TITLE{Contextual Bandits with Cross-Learning}
 %\RUNTITLE{Bundling Information Goods of Decreasing Value}
% Enter the (shortened) title:
%\RUNTITLE{}

% Full title. Sample:
% \TITLE{Bundling Information Goods of Decreasing Value}
% Enter the full title:
%\TITLE{}

% Block of authors and their affiliations starts here:
% NOTE: Authors with same affiliation, if the order of authors allows, 
%   should be entered in ONE field, separated by a comma. 
%   \EMAIL field can be repeated if more than one author
\ARTICLEAUTHORS{%
\AUTHOR{Santiago Balseiro}
\AFF{Columbia Business School, Columbia University, New York, NY, \EMAIL{srb2155@columbia.edu}}
\AUTHOR{Negin Golrezaei}
\AFF{Sloan School of Management, Massachusetts Institute of Technology, Cambridge, MA, \EMAIL{golrezae@mit.edu}}
\AUTHOR{Mohammad Mahdian, Vahab Mirrokni, and Jon Schneider}
\AFF{Google Research, New York, NY, \EMAIL{{mahdian, mirrokni, jschnei}@google.com}}
% Enter all authors
} % end of the block

\ABSTRACT{%
In the classical contextual bandits problem, in each round $t$, a learner observes some context $c$, chooses some action $i$ to perform, and receives some reward $r_{i,t}(c)$. We consider the variant of this problem where in addition to receiving the reward $r_{i,t}(c)$, the learner also learns the values of $r_{i,t}(c')$ for some other contexts $c'$ in set $\mathcal{O}_i(c)$; i.e., the rewards that would have been achieved by performing that action under different contexts $c'\in \mathcal{O}_i(c)$. This variant arises in several strategic settings, such as learning how to bid in non-truthful repeated auctions, which has gained a lot of attention lately as many platforms have switched to running first-price auctions. We call this problem the contextual bandits problem with cross-learning. The best algorithms for the classical contextual bandits problem achieve $\tilde{O}(\sqrt{CKT})$ regret against all stationary policies, where $C$ is the number of contexts, $K$ the number of actions, and $T$ the number of rounds. {\color{black}We design and analyze  new algorithms for the contextual bandits problem with cross-learning and show that their regret has  better dependence on the number of contexts. Under \emph{complete cross-learning} where the rewards for all contexts are learned when choosing an action, i.e.,  set $\mathcal{O}_i(c)$ contains all contexts, we show that our algorithms 
 achieve regret $\tilde{O}(\sqrt{KT})$, removing the dependence on $C$. For any other cases, i.e., under \emph{partial cross-learning} where $|\mathcal{O}_i(c)|< C$ for some context-action pair of $(i,c)$, the regret bounds depend on how the sets $\mathcal O_i(c)$ impact the degree to which cross-learning between contexts is possible.} We simulate our algorithms on real auction data from an ad exchange running first-price auctions and show  that they outperform traditional contextual bandit algorithms.
}%

% Sample
%\KEYWORDS{deterministic inventory theory; infinite linear programming duality; 
%  existence of optimal policies; semi-Markov decision process; cyclic schedule}
%\MSCCLASS{Primary: 90B05; secondary: 90C40, 90C90}
%\ORMSCLASS{Primary: Inventory/production: deterministic multi-item;
%  secondary: dynamic programming/optimal control: deterministic 
%  semi-Markov; programming: infinite dimensional}
%\HISTORY{Received November 20, 2003; revised March 8, 2004, and March 26, 2004.}

% Fill in data. If unknown, outcomment the field
\KEYWORDS{contextual bandits, cross learning, bidding, first-price auctions\footnote{Part of this
work has appeared in \cite{crosslearningNIPS2019}.}}
%\MSCCLASS{}
%\ORMSCLASS{Primary: ; secondary: }
%\HISTORY{}

\maketitle
%%%%%%%%%%%%%%%%%%%%%%%%%%%%%%%%%%%%%%%%%%%%%%%%%%%%%%%%%%%%%%%%%%%%%%

% Samples of sectioning (and labeling) in MOOR.
% NOTE: (1) all section levels end with a period,
%       (2) capitalization is as shown (sentence style, not title style).
%
%\section{Introduction.}\label{intro} %%1.
%\subsection{Duality and the classical EOQ problem.}\label{class-EOQ} %% 1.1.
%\subsection{Outline.}\label{outline1} %% 1.2.
%\subsubsection{Cyclic schedules for the general deterministic SMDP.}
%  \label{cyclic-schedules} %% 1.2.1
%\section{Problem description.}\label{problemdescription} %% 2.

% Text of your paper here

% Appendix here
% Options are (1) APPENDIX (with or without general title) or 
%             (2) APPENDICES (if it has more than one unrelated sections)
% Outcomment the appropriate case if necessary
%
% \begin{APPENDIX}{<Title of the Appendix>}
% \end{APPENDIX}
%
%   or 
%
% \begin{APPENDICES}
% \section{<Title of Section A>}
% \section{<Title of Section B>}
% etc
% \end{APPENDICES}
\section{Introduction}
In the contextual bandits problem, a learner repeatedly observes some context and, depending on the context, the learner takes some action and receives some reward. The learner's goal is to maximize their total reward over some number of rounds. The contextual bandits problem is a fundamental problem in online learning: it is a simplified (yet analyzable) variant of reinforcement learning and it captures a large class of repeated decision problems. In addition, the algorithms developed for the contextual bandits problem have been successfully applied in domains like ad placement, news recommendation, and clinical trials \citep{kale2010non, li2010contextual, villar2015multi}.

Ideally, one would like an algorithm for the contextual bandits problem which performs approximately as well as the best stationary strategy (i.e., the best fixed mapping from contexts to actions). This can be accomplished by running a separate instance of some low-regret algorithm for the non-contextual bandits problem (e.g., EXP3 proposed in \citealt{AuerCNS03}) for every context. This algorithm achieves regret $\tilde{O}(\sqrt{CKT})$,  where $C$ is the number of contexts, $K$ the number of actions, and $T$ the number of rounds. This bound can be shown to be tight \citep{Bubeck12}. Since the number of contexts can be very large, these algorithms can be impractical to use, and modern current research on the contextual bandits problem instead aims to achieve low regret with respect to some smaller set of policies \citep{AuerCNS03, LangfordZ08, beygelzimer2011contextual}.

Some settings, however, possess additional structure between the rewards and contexts that allow one to achieve less than $\tilde{O}(\sqrt{CKT})$ regret while still competing with the best stationary strategy. In this paper, we look at a specific type of structure we call \emph{cross-learning between contexts} that is particularly common in strategic settings with private information in which agents can compute counterfactual rewards under different contexts. In variants of the contextual bandits problem with this structure, playing an action $i$ in some context $c$ at round $t$ not only reveals the reward $r_{i,t}(c)$ of playing this action in this context (which the learner receives), but also reveals to the learner the rewards $r_{i,t}(c')$ for  some other context $c'$ in some set $\cal O_{i}(c)$, which depends on the action played. The set $\cal O_{i}(c)$ can include every context, a case which we refer to as \emph{complete cross-learning}, or only a subset of contexts, which we refer to as \emph{partial cross-learning}. Partial cross learning setting can be used to model conservative learners that only use local information obtained from contexts that are ``close" to each other in some sense. Such a conservative learner might be concerned about inaccuracies in their cross-learning model due to a significant differences in the contexts.

Contextual bandits with cross learning appear in many settings including (i) bidding in non-truthful auctions, (ii) multi-armed bandits with exogenous costs, (iii) dynamic pricing with variable cost, (iv) sleeping bandits, and (v) repeated Bayesian games with private types. While our model and results are general, the main application of interest  in this paper is the problem of bidding in non-truthful auctions (such as first-price auctions), which we describe in detail below.  We refer the reader to  Section~\ref{sect:applications} for more details about the other applications.

\subsection{Bidding in Non-truthful Auctions}

The problem of bidding in non-truthful auctions has gained a lot of attention recently as online advertising platforms have recently switched from running second-price to  first-price auctions. Many online publishers have adopted header bidding, in which publishers offer ad impressions to multiple ad exchanges simultaneously using a first-price auction, rather than  offering ad impressions sequentially to different exchanges, which would typically auction impressions using second-price auctions, in a waterfall fashion. Additionally, some major ad exchanges have adopted first-price auctions to sell all their inventory \citep{google2019firstprice}. In a first-price auction, the highest bidder is the winner and pays their bid (as opposed to second-price auctions where the winner pays the second highest-bid). First-price auctions are non-truthful mechanisms  as bidders have incentives to shade bids so that they enjoy a positive utility when they win \citep{vickrey1961counterspeculation}. As opposed to second-price auctions in which bidding is simple, determining bids in first-price auctions is challenging as bidders need to take into account the competitive landscape, which is typically unknown.

More formally, in the problem of bidding in repeated non-truthful auctions, at every round, the bidder receives a (private) value for the current item, and based on this, must submit a bid for the item. The auctioneer then collects the bids from all participants, and decides whether to allocate the item to our bidder, and if so, how much to charge the bidder. The bidding problem in first-price auctions can be seen as a contextual bandits problem for the bidder where the context $c$ is the bidder's value for the item, the action $i$ is their bid, and their reward is their net utility from the auction: zero if they do not win, and their value for the item minus their payment $p$ if they do win. Note that this problem also allows for cross-learning between contexts -- the net utility $r_{i,t}(c')$ that would have been received if they had value $c'$ instead of value $c$ is just $(c'-p)\cdot \mathbbm{I}(\mbox{win item})$, 
which can be  computed from the outcome of the auction assuming the value and highest competing bid (that influences $\mathbbm{I}(\mbox{win item})$) are independent of each other; that is $\mathcal O_{i}(c)$ is the set of all possible contexts.

This independence assumption, however, may not hold in practice, as we also observe in our empirical studies in Section \ref{sect:experiments}. When the bidder's value (context) is correlated with the highest competing bid, 
 with the same action/bid, the chance of winning (i.e., $\mathbbm{I}(\mbox{win item})$) under two values that are far from each other  may not be the same. For instance, when there is a positive correlation between values and highest competing bids, as the value increases, the highest competing bid may increase as well, reducing the chance of winning. 
To handle this, such learners can only allow for cross-learning between close values. That is, $\mathcal O_{i}(c)$ can be chosen to be the set of contexts $c'$ that are close  enough to $c$.

\iffalse
\begin{table}
\footnotesize{
\begin{center}
 \begin{tabular}{c|c| c| c |c|} 
 \cline{2-5}
 &Bidding in  & Multi-armed  & Dynamic & Sleeping \\ [0.5ex] 
&First-price & Bandits with & Pricing with & Bandits \\
&Auctions&  Exogenous Costs & Variable Cost &\\ \hline
 \multicolumn{1}{|c|} {Our}&  
 &  \\ 
\multicolumn{1}{|c|} {Bound} \\
 \hline
 \multicolumn{1}{|c|} {Best Prior } \\
  \multicolumn{1}{|c|} {Bound}   \\
 \hline
\end{tabular}
\end{center}}
\end{table}
\fi
\subsection{Main Contributions}
We introduce and study contextual bandit  problems  with cross-learning between contexts. In this problem,  for every action $i \in [K]$, there is a directed graph $G_i$ over the set of contexts $[C]$, where  an edge $c \rightarrow c'$ in $G_i$ indicates that playing action $i$ in context $c$, reveals  the reward of action $i$ in context $c'$. 
We refer to these graphs as cross-learning (CL) graphs, where these graphs are known to the learner. (Set $\cal O_i(c)$, defined before, is  the set of vertices of out-neighbors of node/context $c$ in CL graph $G_i$.) We study to what extent cross-learning between contexts can improve regret bounds.  In many settings, the number of possible contexts $C$ can be huge: exponential in the number of actions $K$ or uncountably infinite. This makes the naive $O(\sqrt{CKT})$-regret algorithm undesirable in these settings.  We show that the extent with which cross-learning can improve the regret bounds depend on how ``well-connected" the CL graphs are. For instance, when CL graphs are complete graphs, i.e., under {complete cross-learning} between contexts, we show that    it is possible to design algorithms which completely remove the dependence on the number of contexts $C$ in their regret bound. For any other general CL graphs, i.e., under {partial cross-learning},  in addition to  the CL graphs themselves, the improvement in obtained regret depends on how contexts and rewards are generated. 

We consider both settings where the contexts are generated stochastically (from some distribution $\D$ that may or may not be known to the learner) and settings where the contexts are chosen adversarially. Similarly, we also consider settings where the rewards are generated stochastically and settings where they are chosen adversarially. By considering stochastic rewards,  we can capture environments under which the obtained reward (given a context) is stationary and predictable. Whereas by considering adversarial rewards, we can capture environments with non-stationary and hard-to-predict rewards. Similarly, stochastic (respectively adversarial) contexts model environments with stationary and predictable (respectively unpredictable) side information.   
Our results, which are also summarized in Table \ref{table:results},   include:

\begin{itemize}[leftmargin=*]
\item \textbf{Stochastic rewards, stochastic or adversarial contexts}: We design an algorithm called~\ucbcross{} with regret of $\tilde O(\sqrt{\overline{\kappa}KT})$, where $\overline{\kappa}$ is the average size of the \textit{minimum clique cover} of the CL graphs; the minimum clique cover of a directed graph is defined in Definition~\ref{def:clique}. %Here, ``CL" stands for Cross Learning. 
Observe that for complete CL graphs, $\overline{\kappa}=1$, and hence    \ucbcross{} obtains regret $\tilde O(\sqrt{KT})$ under complete cross-learning,  removing the dependence of the regret bound on the number of contexts $C$. On the other hand, when CL graphs contain only self-loops, i.e.,  under \emph{no cross-learning},  $\overline{\kappa}=C$, which leads to a regret of $\tilde O(\sqrt{CKT})$, as expected. 
\item \textbf{Adversarial rewards, stochastic contexts with known distribution}: 
We design an algorithm called \expcrossk{} with  regret of  $\tilde O(\sqrt{\overline{\lambda}KT})$, where $\overline{\lambda}$ is the average size of the \textit{maximum acyclic subgraph} of the CL graphs; see Definition \ref{def:acyclic}. We again note that for complete CL graphs, we have $\overline{\lambda} =1$, which implies that \expcrossk{} obtains regret of $\tilde O(\sqrt{KT})$ under complete cross-learning. 
\item \textbf{Adversarial rewards, stochastic contexts with unknown distribution}: We design an algorithm called \expcrossu{} with regret $\tilde{O}(\bar \lambda K^{1/3}T^{2/3})$.\footnote{For this result to hold we need an assumption that CL graph $G_i =G$ for any $i\in [K]$.}   Here, ``U" stands for Unknown distribution. 
\item \textbf{Lower bound for adversarial rewards, adversarial contexts}: We show that when both rewards and contexts are controlled by an adversary, even under complete cross-learning, any algorithm must obtain a regret of at least $\tilde \Omega(\sqrt{CKT})$.

\end{itemize}\medskip

\renewcommand{\arraystretch}{1.3}
\begin{table}
\begin{center}\footnotesize
 %\begin{tabular}{c c|x{2cm} x{2cm} x{2cm} x{2cm}|x{2cm}|} 
 \begin{tabular}{c c|c c c c|c|} 
 
 \cline{3-7} 
  & & \multicolumn{2}{c|}{\multirow{ 2}{*}{Stoc. Rewards}} & \multicolumn{2}{c|}{Adv. Rewards \& }  & \multirow{ 2}{*}{Adv. Rewards \&}\\ [0.5ex] 
  && &\multicolumn{1}{c|}{}
  & \multicolumn{2}{c|}{Stoc. Contexts}  &\\
  \cline{3-6} &&\multicolumn{1}{c|}{Stoc.} & \multicolumn{1}{c|}{Adv.} &  \multicolumn{1}{c|}{Known} & Unknown &   \multicolumn{1}{c|}{\multirow{ 2}{*}{Adv. Contexts}}\\ 
  &&\multicolumn{1}{c|}{Contexts} & \multicolumn{1}{c|}{Contexts} & \multicolumn{1}{c|}{Context Dist.} & Context Dist. &\\
 \hline 
 \multicolumn{1}{|c|}{\multirow{2}{*}{Complete CL}} & Upper Bound &  \multicolumn{3}{c|}{$\tilde{O}(\sqrt{KT})$} %& \multicolumn{1}{c|}{$\tilde{O}(\sqrt{KT})$}
 & $\tilde{O}(K^{1/3}T^{2/3})$ 
 &  $\tilde O(\sqrt{CKT})$ \\   \cline{2-7}
 \multicolumn{1}{|c|}{}& Lower Bound & \multicolumn{4}{c|}{$\tilde{\Omega}(\sqrt{KT})$} 
 &  $\tilde \Omega(\sqrt{CKT})$ \\   \cline{1-7}
\multicolumn{1}{|c|}{\multirow{2}{*}{Partial CL}} & Upper Bound & \multicolumn{2}{c|}{$\tilde O(\sqrt{\overline{\kappa}KT\log T})$} &\multicolumn{1}{c|}{$\tilde O(\sqrt{\overline{\lambda}KT})$} & $\tilde{O}(\bar \lambda K^{1/3}T^{2/3})$ & $\tilde O(\sqrt{CKT})$\\ \cline{2-7}
\multicolumn{1}{|c|}{}& Lower Bound & \multicolumn{1}{c|}{$\tilde \Omega(\sqrt{\nu_2(G)KT})$} & \multicolumn{1}{c|}{$\tilde \Omega(\sqrt{\lambda (G)KT})$} & \multicolumn{2}{c|}{$\tilde \Omega(\sqrt{\nu_2(G)KT})$}  & $\tilde \Omega(\sqrt{CKT})$ \\

 %\hline
 %\multicolumn{1}{|c|} {Best Prior } & Complete CL & \multicolumn{4}{c|}{$\tilde{O}(\sqrt{CKT})$}   \\ \cline{2-6}
%  \multicolumn{1}{|c|} {Upper Bound} & Partial CL & \multicolumn{4}{c|}{$\tilde{O}(\sqrt{CKT})$}    \\
 \hline
\end{tabular}
\end{center}
\caption{\label{table:results} Here, $\overline{\kappa} = \frac{1}{K}\sum_{i \in [K]} \kappa(G_i)$, and $\kappa(G_i)$, which  is the clique covering number of the CL graph $G_i$, is defined in Definition \ref{def:clique}. Moreover, $\bar{\lambda} = \frac{1}{K}\sum_{i \in [K]}\lambda(G_i)$, and $\lambda(G_i)$, which is the maximum acyclic subgraph number of $G_i$, is defined in Definition \ref{def:acyclic}. Finally, $\nu_2(G)$, which  is defined in Definition \ref{def:nu_2}, can be seen as $L_2$ variant of the maximum acyclic subgraph number $\lambda(G)$. (Our regret upper bound  for partial CL with adversarial rewards and stochastic context with unknown distribution (i.e., regret of \expcrossu) hold when all CL graphs $G_i$, $i\in [K]$ are the same cross all the actions.) The best known regret upper bound prior to this work is $\tilde O(\sqrt{CKT})$.   In our regret lower bounds, we assume that $G_i = G$ for any $i\in [K]$. }
\end{table}

All of these algorithms are easy to implement, in the sense that they can be obtained via simple modifications from existing multi-armed bandit algorithms like EXP3 (\cite{AuerCNS03}) and UCB1 \citep{Robbins52, LaiR85}, and efficient, in the sense that all algorithms run in time at most $O(C+K)$ per round (and for many of the applications  mentioned above, this can be further improved to $O(K)$ time per round). Our main technical contribution is our analysis of \ucbcross{}, which requires arguing that UCB1 can effectively use the information from cross-learning despite it being drawn from a distribution that differs from the desired exploration distribution. We accomplish this by constructing a linear program whose value upper bounds (one of the terms in) the regret of \ucbcross{}, and bounding the value of this linear program.

{\color{black}Our \expcrossk{} and \expcrossu{} algorithms that are designed for the adversarial rewards and stochastic contexts setting are modifications of the EXP3 algorithm.   These algorithms maintain a weight for each action in each context, and update the weights via multiplicative updates by an exponential of an estimator of the reward, taking advantage of cross-learning between contexts via their update rules and estimators. 
 The main difference between  \expcrossk{} and \expcrossu{} is how their reward estimators are constructed. In  \expcrossk{}, which is designed for the case of known context distribution $\mathcal D$, the estimator is unbiased, and crucially uses the knowledge of $\mathcal D$ to obtain minimal variance. In  \expcrossu{}, which is designed for the case of unknown context distribution $\mathcal D$, the estimator is only unbiased under the complete cross-learning setting. The estimator, however, is \emph{consistently biased} for the partial cross-learning setting, easing the analysis. See Section \ref{sec:unknown_dist} for more details.  }

To shed lights on how tight regret bounds of our algorithms are, we further present regret lower bound for the aforementioned settings; see Table \ref{table:results}.  We show that when both rewards and contexts are generated stochastically, any algorithm must obtain a regret of at least $\tilde \Omega(\sqrt{\nu_2(G)KT})$, where $\nu_2(G)$, which  is defined in Definition \ref{def:nu_2}, can be seen as $L_2$ variant of the maximum acyclic subgraph number $\lambda(G)$. In all of regret lower bounds, we assume that CL graphs do not depend on actions; that is, $G_i = G$ for any $i\in [K]$.
We further show a regret lower bound of $\tilde \Omega(\sqrt{\lambda(G)KT})$ when rewards  are generated stochastically and contexts are generated adversarially. This regret lower bound  also leads to the same regret lower bound for the case of adversarial rewards and stochastic contexts. 

By comparing our regret lower bounds with the regret of our algorithms, we observe that regret of our algorithms  are indeed tight for many CL graphs in various settings. 
Consider settings where either (i) rewards are generated stochastically and contexts are either stochastic  or adversarial, or  (ii) rewards  are adversarial and contexts are stochastic (with known context distribution). Then,  for these settings, 
 our regret upper bounds are tight for CL graphs that are the union of $r$ disjoint cliques. In fact, our regret upper bounds are tight for an even larger family of graphs, including line graphs, and any undirected graph that is \textit{perfect}.  Perfect graphs include all bipartite graphs, forests, interval graphs, and comparability graphs of posets (see, e.g., \citealt{west1996introduction}).

 {Our regret upper bound for the settings with adversarial rewards and stochastic contexts (with unknown context distribution), i.e., regret of \expcrossu{} algorithm,  does not match its associated regret lower bound. In Appendix \ref{sec:discuss}, we present another candidate algorithm for this setting, which can be viewed as a variant of \expcrossk{} algorithm that uses an empirical estimate of  context distribution in place of the true context distribution, which is not available. While this variant performs very well in our empirical studies in Section \ref{sect:experiments}, because of several technical  challenges explained in Appendix \ref{sec:discuss}, we are not able to show the regret of this algorithm matches the regret lower bound. We leave analyzing the regret of this variant as a future research direction.     }

We also apply our results to some of the applications listed above. In each case, our algorithms obtain optimal regret bounds with asymptotically less regret than a naive application of contextual bandits algorithms. In particular, for the problem of learning to bid in a first-price auction, standard contextual bandit algorithms get regret $O(T^{3/4})$. Our algorithms achieve regret $O(T^{2/3})$. This is optimal even when there is only a single context (value). Note that in this problem, the set of contexts (values) and actions (bids) are infinite. Thus, one needs to discretize  the set of contexts and actions, and such discretizations increases regret. Since our  algorithms have regret bounds that are independent of $C$, discretizing the context space arbitrarily finely does not deteriorate performance (indeed, as we show in Section \ref{sect:applications}, we can often implement our algorithms for infinite context spaces). We discuss the results for the other applications in Section~\ref{sect:applications}.

Finally, we test the performance of these algorithms on real auction data from a first-price ad exchange. In order for cross-learning to be effective in first-price auctions, the bidder should be able to determine the counterfactual utility for different values. That is, after observing the outcome of the auction, the bidder should predict how would their utility change if their value was different. As stated earlier, this is possible when the bidder's values are independent of other players' bid. In practice, however, one would expect certain degree of correlation between these quantities and, thus, the independence assumption might not hold. Even though our algorithms under the assumption of complete cross-learning  do not explicitly account for correlation, numerical results show that our algorithms are somewhat robust to errors in the cross-learning hypothesis and outperform traditional bandit algorithms. 

\subsection{Related Work}

For a general overview of research on the multi-armed bandit problem, we recommend the reader to the survey by \citet{Bubeck12}. Our algorithms build off of pre-existing algorithms in the bandits literature, such as EXP3 \citep{AuerCNS03} and UCB1 \citep{Robbins52, LaiR85}. Contextual bandits were first introduced under that name in \citet{LangfordZ08}, although similar ideas were present in previous works (e.g., the EXP4 algorithm was proposed in \citealt{AuerCNS03}). 

One line of research related to ours studies bandit problems under other structural assumptions on the problem instances which allow for improved regret bounds.  \cite{slivkins2011contextual}   study a setting where contexts and actions belong to a joint metric space, and context/action pairs that are close to each other give similar rewards, thus allowing for some amount of ``cross-learning.'' See also \cite{hazan2007online} and  \cite{lu2009showing} for works that  consider local smoothness over a continuum of contexts that lie in a known metric space. Some other structural assumptions widely studied in the contextual bandit literature include contextual Gaussian process bandits \citep{krause2011contextual}, linear bandits \citep{li2010contextual}, and contextual bandits with covariates \citep{rigollet2010nonparametric, perchet2013multi, qian2016kernel, guan2018nonparametric}. These structural assumptions allow for some amount of cross-learning between contexts; however, they do not capture the general cross-learning setting we study in this paper, nor do they fit into our setting as we require the learner to be able to obtain sample rewards from other contexts.\footnote{See also \cite{van2020optimal} for the work that leverages convex structural information  and \cite{ kakade2009playing,niazadeh2021online} for work that  exploits combinatorial structures.  }

Several works \citep{mannor2011bandits, alon2015online} study a partial-feedback variant of the (non-contextual) multi-armed bandit problem where performing some action provides some information on the rewards of performing other actions (thus interpolating between the bandits and experts settings). Our setting can be thought of as a contextual version of this variant, and our results in the partial cross-learning setting share similarities with these results (indeed, three of the four graph invariants we consider -- $\kappa(G)$, $\lambda(G)$, and the independence number of graph $G$, denoted by $\iota(G)$ -- appear prominently in the bounds of \citealt{mannor2011bandits, alon2015online}, albeit applied to different graphs). However, since the learner cannot choose the context each round, these two settings are qualitatively different. As far as we are aware, the specific problem of contextual bandits with cross-learning between contexts has not appeared in the literature before.

Recently there has been a surge of interest in applying methods from online learning and bandits to auction design. 
The majority of the work in this area has been from the perspective of the auctioneer \citep{MorgensternR16, mohri2016learning, CaiD17, DudikHLSSV17,kanoria2017dynamic, golrezaei2019IC, golrezaei2018dynamic} in which the goal is to learn how to design an auction over time based on bidder behavior. In fact, many papers have looked at this problem in a simple posted price auction; see, for example 
 \cite{ araman2009dynamic, farias2010dynamic, cheung2017dynamic,den2013simultaneously}, and \cite{besbes2009dynamic}.
Some recent work, which is at the intersection of learning and auction design, studies this problem from the perspective of a buyer learning how to bid \citep{weed2016online, feng2018learning, braverman2017selling}. (See \cite{golrezaei2021bidding} for a recent work that considers  both perspectives of a buyer and a seller.) In particular, \cite{weed2016online} studies the problem of learning to bid in a second-price auction over time, but where the bidder's value remains constant (so there is no context). More generally, ideas from online learning (in particular, the concept of no-regret learning) have been applied to the study of general Bayesian games, where one can characterize the set of equilibria attainable when all players are running low-regret learning algorithms; see, for example,  \citep{hartline2015no, golrezaei2020no}.

\section{Model and Preliminaries}

We start with providing a short overview of non-contextual multi-armed bandits. We then present contextual multi-armed bandits problems, which will be our focus. 

\subsection{Non-Contextual Multi-Armed Bandits}

In the classic (non-contextual) multi-armed bandit problem, a learner chooses one of $K$ arms per round over the course of $T$ rounds. On round $t$, the learner receives some reward $r_{i,t} \in [0, 1]$ for pulling arm $i$ (where the rewards $r_{i,t}$ may be chosen adversarially and {\color{black}may depend on $T$}). The learner's goal is to maximize their total reward.

{\color{black}Let $I_t$ denote the arm pulled by the decision maker's algorithm $\mathcal A$ at round $t$.  The algorithm maps the history set $\{(I_1, r_{I_1, 1}), (I_2, r_{I_2, 2}), \ldots, (I_{t-1},r_{I_{t-1}, t-1})\}$ of pulled arms and their realized rewards during the first $t-1$ rounds, any (realized) randomness in the first $t-1$ rounds, and the total number of rounds $T$ to an arm $I_t$, where this mapping can be deterministic or random.  Throughput this work, we assume that all the algorithms know the total number of rounds $T$. This assumption, which is common in the literature, can be relaxed via the \emph{doubling trick} (see, e.g., \citealt{AuerCNS03, Bubeck12, lattimore2020bandit}). This trick, which can be applied in a black-box fashion, can  efficiently deal with an unknown number of rounds $T$ by repeatedly running an algorithm with horizons of increasing length.}

\subsection{Contextual Multi-Armed Bandits}\label{sect:contextual_prelims}

In our model, we consider a \textit{contextual multi-armed bandits} problem. In the contextual bandits problem, in each round $t$,  the learner is additionally provided with a \textit{context} $c_t$, and the learner now receives reward $r_{i,t}(c_t)$ if they pull arm $i$ on round $t$ while having context $c_t$. The contexts $c_t$ are either chosen adversarially at the beginning of the game or drawn independently each round from some distribution $\D$. Similarly, the rewards $r_{i,t}(c)$ are either chosen adversarially or each independently drawn from some distribution $\F_i(c)$. We assume as is standard that $r_{i,t}(c)$ is always bounded in $[0,1]$.

{\color{black}Again, let $I_t$ denote the arm pulled by the decision maker's algorithm $\mathcal A$ at round $t$ under context $c_t$. Here, at around $t$, the algorithm maps the history set $\{(I_1, r_{I_1, 1}, c_1), (I_2, r_{I_2, 2}, c_2), \ldots, (I_{t-1},r_{I_{t-1}, t-1}, c_{t-1}), c_t\}$ of contexts, the pulled arms and their realized rewards during the first $t-1$ rounds, as well as, the current context $c_t$, any (realized) randomness in the first $t-1$ rounds, and the total number of rounds $T$ to an arm $I_t$, where this mapping can be deterministic or random.}  

In the contextual bandits setting, we  define the regret of an algorithm $\cal A$ in terms of regret against the best stationary benchmark $\pi:\{1,\ldots,C\} \rightarrow \{1,\ldots,K\}$, mapping a context $c$ to an action $\pi(c)$. That is, the regret is defined as  $ 
\sum_{t=1}^{T}r_{\pi(c_t),t}(c_t) - \sum_{t=1}^{T}r_{I_{t}, t}(c_t)$, where $I_t$ 
is the arm pulled by $\cal A$ on round $t$. The definition of best stationary policy $\pi$ depends slightly on how contexts and rewards are generated. {\color{black} In all of these definitions, as  is common in the bandit literature (e.g., the seminal work of \citealt{lai1985asymptotically}),  we assume that the best stationary policy $\pi$ is unique. Nonetheless, all of our gap-independent results continue to hold even when the best stationary policy is non-unique.} %{\color{red} @Jon, OK with this? Any thing else to add about uniqueness. Could we say this is only for technical reasons and algorithms still work if this assumption fails? } \jonnote{Added a sentence about this.}

%{\color{red}Page 8 has 2 definitions of regret (l44 and 25), this is a bit confusing. ‘$p_i( c) =\arg \max$’ discuss uniqueness of the $\arg\max$. Similar comment for ``the" best stationary benchmark, p10l16, and many other places in the paper where uniqueness of optimal arms is assumed.}

%{\color{red}@Jon, What do you think about this comment? \emph{``Page 9 second bullet point: derive briefly why this $\pi$ is the best benchmark. And why couldn’t the adversary provide ‘misleading’ information for the rewards of non-observed contexts? Shouldn’t it be a property of an (optimal) algorithm to show that this information is neglected?  More generally, it may help the reader to define $\pi$ less informally, and emphasize that in some settings $\pi$ depends on information in all time periods $1,\ldots,T$." %The paragraph ‘We consider a variant’ seems misplaced. The last sentence of 2.2 `We define the’ is a bit vague, it would help the reader to provide a clear and formal definition of algorithm, regret, and the benchmark $\pi$. }} \jonnote{I think talked about this a bit last time -- I don't really see what the reviewer is getting at here, and I think what is currently written is fine.}

\begin{itemize}[leftmargin=*]
    \item {\color{black}\textbf{Benchmark under stochastic rewards.}} {When rewards are stochastic, i.e., $r_{i,t}(c)$ is drawn independently\footnote{As is the case with standard stochastic bandits, our proofs work even when the rewards $r_{i,t}(c)$ are correlated across actions and contexts, as long as they are independent across rounds.} from $\F_i(c)$ with mean $\mu_i(c)$, we define $\pi$ to be the stationary policy that maximizes the expectation of performance over rewards $\sum_{t=1}^T \E_{r_{i,c_t} \sim \F_i(c_t)}[ r_{\pi(c_t), t}(c_t)] = \sum_{t=1}^T \mu_{\pi(c_t)}(c_t)$, which leads to the optimal policy $\pi(c) = \arg\max_{i\in [K]}\mu_i(c)$.} That is,  under the best stationary policy, for every context $c$, an arm with the highest average reward is pulled. {\color{black} We highlight that unlike our algorithms, the benchmark $\pi$ has full knowledge of the reward  distributions of all the arms under any contexts and given this knowledge, under context $c$, it chooses arm $i$ with the highest average reward $\mu_i(c)$.  }
    
    \item {\color{black}\textbf{Benchmark under adversarial rewards and stochastic contexts.}} { When rewards are adversarial but contexts are stochastic, we define $\pi$ to be the stationary policy that maximizes the expectation of performance over contexts $\sum_{t=1}^T \E_{c_t \sim \D}[ r_{\pi(c_t), t}(c_t)] = \sum_{c \in [C]} \Pr[c] \sum_{t=1}^T r_{\pi(c), t}(c)$, where the last equation follows because contexts are identically distributed across all the rounds. This is achieved by the policy $\pi(c) = \arg\max_{i\in [K]}\sum_{t=1}^T r_{i, t}(c)$ since the benchmark is separable over contexts.} % ({\color{black} Note that $\pi(c)$ depend on $r_{i,t}(c)$, $i\in [K]$ and $t\in[T]$.})%; see Appendix~\ref{sect:regret} for further details.  

    \item {\color{black}\textbf{Benchmark under adversarial rewards and  contexts.}} When both rewards and contexts are adversarial, we define $\pi(c)$ to be the stationary policy which maximizes $\sum_{t=1}^T r_{\pi(c_t), t}(c_t)$. Precisely, $\pi(c) = \arg\max_{i\in [K]}\sum_{t=1}^T r_{i, t}(c) \mathbbm{I}(c_t =c)$. {In this case, it suffices for the adversary to only specify $\{r_{i,t}(c_t)\}_{i=1}^K$ in each round $t$, i.e., the $K$ rewards for context $c_t$, as the other rewards are never realized}. 
\end{itemize}

Our choices of benchmarks are unified in the following way: in all of the above cases, $\pi$ is the best stationary policy in expectation for someone who knows all the decisions of the adversary and details of the system ahead of time, but not the randomness in the instantiations of contexts/rewards from distributions. This matches commonly studied notions of regret in the contextual bandits literature.

{\color{black}We now comment on our benchmark when rewards are adversarial and  contexts are stochastic. In this case, there are two different natural ways to define ``the best stationary policy.'' The first maximizes the empirical cumulative rewards or, equivalently, the rewards the specific contexts $c_t$ we observed in the run of our algorithm:
$\pi'(c) = \arg\max_{i\in [K]} \sum_{t=1}^{T}r_{i,t}(c)\mathbbm{I}({c_{t} = c})\,.$
The second way that we consider in this work  simply maximizes the reward of this strategy in expectation over all time 
$\pi(c) = \arg\max_{i\in [K]} \sum_{t=1}^{T}r_{i,t}(c).$} 
{\color{black} 
Under $\pi$, at the beginning of the game, the adversary knows all the rewards, but not when each context will occur, and hence, $\pi$ is the best stationary strategy in expectation, where the expectation is taken with respect to contexts. (Recall that  $\sum_{t=1}^T \E_{c_t \sim \D}[ r_{\pi(c_t), t}(c_t)]$ is maximized under the best stationary strategy $\pi$.) Under $\pi'$, on the other hand, at the beginning of the game,  the adversary knows all the rewards and  contexts in each round, and hence  $\pi'$ is  the best stationary strategy in hindsight.

In this paper, when rewards are adversarial and  contexts are stochastic, all bounds we show are with respect to the best stationary strategy in expectation, i.e.,  $\pi(c) = \arg\max_{i\in [K]} \sum_{t=1}^{T}r_{i,t}(c).$ This is because 
the best stationary strategy in hindsight $\pi'$  is too strong when contexts are stochastically drawn from a known distribution. With the latter strategy as a benchmark, no algorithm can be shown to achieve sub-linear regret when the number of contexts is large enough (see Theorem~\ref{theorem:expost-lower} in Appendix~\ref{sect:regret}).} {\color{black} That being said, when rewards and contexts are chosen adversarially, policy $\pi(c) = \arg\max_{i\in [K]} \sum_{t=1}^{T}r_{i,t}(c)$ is no longer well motivated as contexts are not generated stochastically. Hence,  when rewards and contexts are chosen adversarially, 
the stationary policy we consider is the best policy in hindsight (i.e., $\pi'(c) = \arg\max_{i\in [K]} \sum_{t=1}^{T}r_{i,t}(c)\mathbbm{I}({c_{t} = c}).$)}

We conclude by noting that there is a simple way to construct an algorithm $\cal A'$ with sublinear regret of $o(T)$ for the contextual bandits problem from a sublinear-regret  algorithm $\cal A$ for the classic bandits problem: simply maintain a separate instance of $\cal A$ for every different context $c$. In the contextual bandits literature, this is sometimes referred to as the $S$-EXP3 algorithm when $\cal A$ is EXP3 \citep{Bubeck12}. The $S$-EXP3 algorithm has  regret of order  $\tilde{O}(\sqrt{CKT})$. We define the $S$-UCB1 algorithm similarly, which also obtains  $\tilde{O}(\sqrt{CKT})$ regret when rewards are generated stochastically. { Our goal in this work is to develop algorithms with better dependence on the number of contexts by exploiting the possibility of cross learning between them. See the formal definition of cross learning between contexts in the next section.}

\subsection{Contextual Multi-Armed Bandits with Cross-Learning between Contexts}
We consider a variant of the contextual bandits problem we call (partial) \textit{contextual bandits with cross-learning}. {In this variant, whenever the learner pulls arm $i$ in round $t$ while having context $c$ and receives reward $r_{i,t}(c)$, they also learn the value of $r_{i,t}(c')$ for some subset of contexts $c'$ (e.g., contexts similar to $c$).}  
{More precisely, for every action $i \in [K]$, we specify a directed graph $G_i$ over the set of contexts $[C]$. An edge $c \rightarrow c'$ in $G_i$ indicates that playing action $i$ in context $c$, reveals  the reward of action $i$ in context $c'$. We assume that all self-loops $c \rightarrow c$ are present in all graphs $G_i$ (i.e., playing action $i$ in context $c$, reveals the reward of action $i$ in context $c$). We refer to these graphs as cross-learning (CL) graphs, and we assume that the CL graphs are known to the learner. }
 
Throughout the paper, we pay a special attention to two particular cases of contextual bandits with cross learning: (i) contextual bandits with no cross-learning and (ii)  contextual bandits with {complete cross-learning}. In the former, the graphs $G_i$ for every $i\in [K]$ only contain self-loops. In the latter, the graphs $G_i$ are fully connected, that is, there is an edge between every pair of contexts. When we are not in either of these two special cases, we  refer to the setting as  contextual bandits with {partial cross-learning.}
Figure \ref{fig:CL_graph} depicts three examples of CL graphs: (i) a CL graph with three contexts ($C= 3$) and no cross-learning between contexts (see Figure \ref{fig:no_CL}), (ii) a CL graph with three contexts ($C= 3$) and complete cross-learning between contexts (see Figure \ref{fig:full_CL}), and (iii) a CL graph with partial cross-learning (see Figure \ref{fig:partial}). 
In Figure \ref{fig:partial}, contexts are ordered and cross learning happens  when $|c-c'|\le 1$, i.e., between consecutive contexts.

\begin{figure}[t]
  \centering
  \begin{subfigure}[b]{0.48\textwidth}
    \centering
    \includegraphics[width=0.5\textwidth]{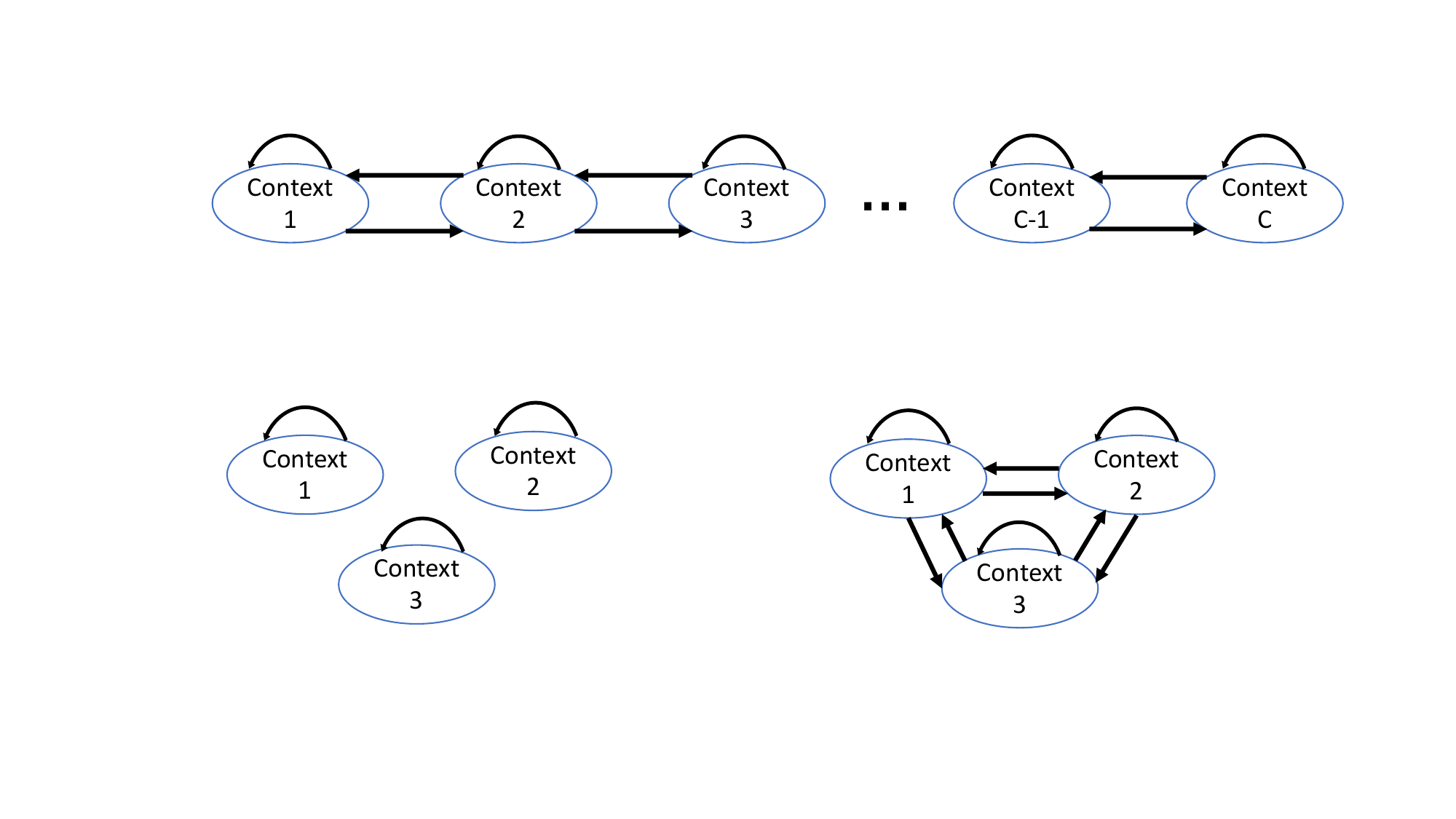}
    \caption{A CL graph with no cross-learning }
    \label{fig:no_CL}
  \end{subfigure}
  \begin{subfigure}[b]{0.48\textwidth}
    \centering
    \includegraphics[width=0.5\textwidth]{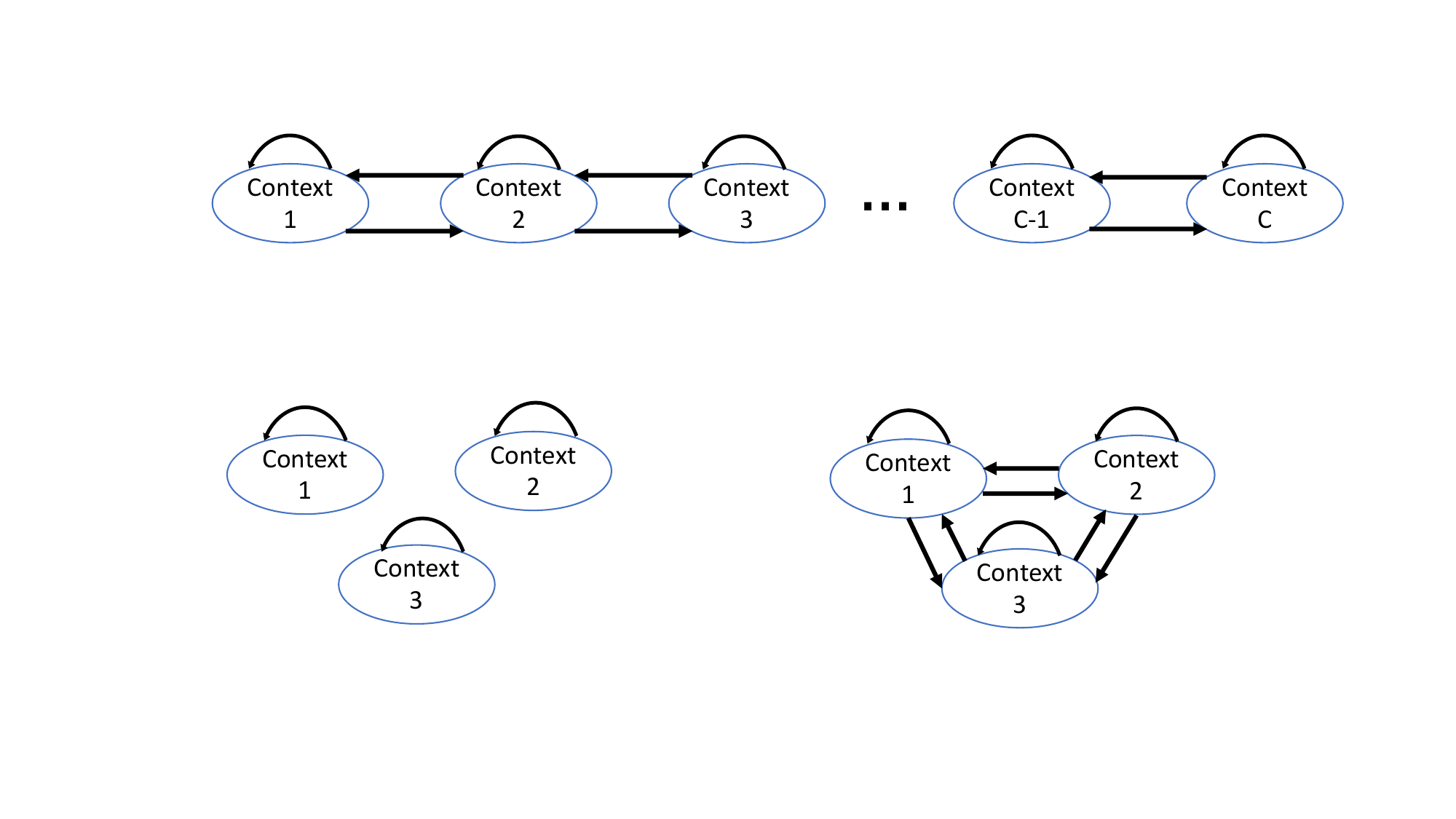}
    \caption{A CL graph with complete cross-learning }
    \label{fig:full_CL}
  \end{subfigure}\vspace{2em}
    \begin{subfigure}[b]{\textwidth}
    \centering
    \includegraphics[width=0.65\textwidth]{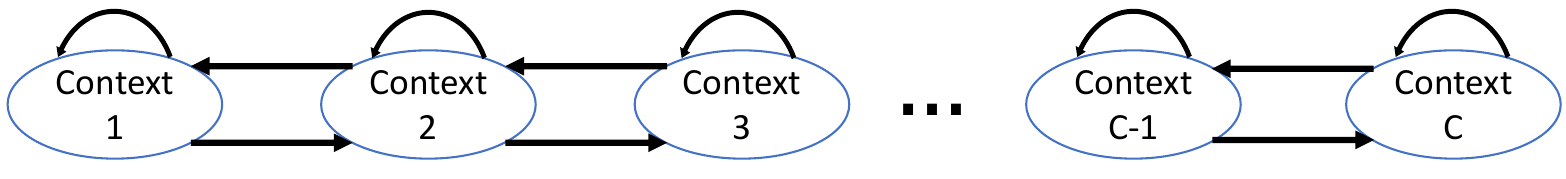}
    \caption{A CL graph with partial cross-learning }
    \label{fig:partial}
  \end{subfigure}
  \caption{Examples of CL graphs \label{fig:CL_graph}}
\end{figure}

{\color{black} In most of the applications we consider in this work (see Section \ref{sect:applications}), one can consider complete cross-learning between contexts. {For example, in the problem of bidding in non-truthful auctions, if a bidder wins an auction with a certain bid, it is generally possible to evaluate the counterfactual outcome ``how much utility would have been obtained if the valuation was different but the bid was the same.'' Such counterfactual outcomes can be estimated if, for the same bid, a change in the bidder's value would not affect the bids of competitors. In many applications, however, such counterfactual outcomes cannot be estimated for all contexts but, instead, for contexts that are ``close" to each other in some sense. In the bidding example, it is reasonable to assume that small changes in the bidder's valuation should not drastically impact the competitive landscape. Therefore, partial cross-learning can be used to model conservative learners that only use local information obtained from cross-learning.} Such a conservative
learner might be concerned about inaccuracies in their cross-learning model due to a significant
differences in the contexts.}

{\color{black}We highlight that in defining our regret bounds for the contextual bandit setting with cross-learning between contexts, we use our stationary benchmarks defined in Section \ref{sect:contextual_prelims}. 
We finish this section by presenting graph invariants. These invariants appear in our analysis of our algorithms, as well as, our regret bounds.} 
%{\color{red} Is this the best place to put the section on ``Graph Invariants"?}

\subsubsection{Graph Invariants} \label{sec:graph}

  Throughout the remainder of this section, we will assume that all graphs $G$ are directed and contain all self-loops. Given a vertex $v$ in $G$, let $\cal I (v)$ denote the set of in-neighbors, i.e., the set of  vertices $w$ such that there exists an edge $w \rightarrow v$, and let $\cal O(v)$ denote the set of vertices of out-neighbors, i.e., the set of vertices $w$ such that there exists an edge $v \rightarrow w$. Because all our graphs contain self-loops, $v \in \cal O(v)$ and $v \in \cal I(v)$. 

We define the following quantities of graph $G$: (i)  clique covering number, denoted by $\kappa(G)$, (ii)  independence number, denoted by  $\iota(G)$, and (iii) maximum acyclic subgraph number, denoted by  $\lambda(G)$. These quantities will appear in our regret bounds. We further define another metric, denoted by $\nu_2(G)$, which {can be thought of as an ``$L_2$ variant'' of $\lambda(G)$ (see Lemma \ref{lem:mas_equiv})}. This function will appear in our lower bounds, provided in Section \ref{sec:lb_partial}. Finally, at the end of this section, we present a lemma comparing these quantities. To wit, we show that for any graph $G$, we have $\iota(G) \leq \nu_2(G) \leq \lambda(G) \leq \kappa(G)$.

\begin{definition}[Clique Covering Number]\label{def:clique}
A \emph{subclique} of a graph $G$ is a subset of vertices $S$ such that for any two vertices $u, v \in S$, there exists an edge $u \rightarrow v$. A \emph{clique cover} of a graph $G$ is a partition of its set of vertices into subcliques $S_1, S_2, \dots, S_r$ (we say $r$ is the size of the clique cover). The \emph{clique covering number} $\kappa(G)$ is the minimum size of a clique cover of $G$.
\end{definition}

For CL graphs associated with complete cross-learning (see Figure \ref{fig:full_CL}), $\kappa(G)=1$. For CL graphs associated with no cross-learning (see Figure \ref{fig:no_CL}), $\kappa(G)=C$.  For the CL graph in Figure \ref{fig:CL_graph}, $\kappa(G)$ is $\lceil C/2 \rceil$.

\begin{definition}[Independence Number]
An \emph{independent set} in a graph $G$ is a subset of vertices $S$ such that for any two distinct vertices $u, v \in S$, the edge $u \rightarrow v$ does not exist in $G$. The \emph{independence number} $\iota(G)$ is the maximum size of an independent set of $G$. 
\end{definition}

Again, it is easy to see that for the CL graphs in Figures \ref{fig:no_CL}, \ref{fig:full_CL}, and \ref{fig:partial}, $\iota(G)$ is $C$, $1$, and $\lceil C/2 \rceil$, respectively.

\begin{definition}[Maximum Acyclic Subgraph Number] \label{def:acyclic}
An \emph{acyclic subgraph} of a graph $G$ is a set of vertices that can be ordered $v_1, v_2, \dots, v_r$ such that for any $i > j$, there is no edge $v_i \rightarrow v_j$. The \emph{maximum acyclic subgraph number} $\lambda(G)$ is the size of the largest acyclic subgraph of $G$.
\end{definition}

We note that for the CL graphs in  in Figures \ref{fig:no_CL}, \ref{fig:full_CL} and \ref{fig:partial}, $\lambda(G)$ is again $C$, $1$, and $\lceil C/2 \rceil$, respectively. The following lemma sheds light on the maximum acyclic subgraph number $\lambda(G)$.

\begin{lemma}\label{lem:mas_equiv}
For all directed graphs $G$ with self-loops,
$$\lambda(G) = \sup_{f : V \rightarrow \R^{+}} \sum_{v \in V} \frac{f(v)}{\sum_{w \in \cal I(v)} f(w)}\,.$$ 
\end{lemma}

Next, we define another graph quantity  and compare all the quantities that we have defined so far.

\begin{definition}\label{def:nu_2}
The value $\nu_2(G)$ of a graph $G$ (with vertex set $V$) is given by

$$\nu_2(G) = \sup_{\substack{f : V \rightarrow \R^{+} \\ \sum_{v\in V} f(v) = 1}} \left(\sum_{v \in V} \frac{f(v)}{\sqrt{\sum_{w \in \cal I(v)} f(w)}}\right)^2.$$ 

\end{definition}

 Comparing the definition of 
$\nu_2(G)$ with $\lambda(G)$ in Lemma \ref{lem:mas_equiv} confirms that $\nu_2(G)$ can be seen as $L_2$ variant of the maximum acyclic subgraph number $\lambda(G)$. For the CL graphs   in Figures \ref{fig:no_CL}, \ref{fig:full_CL} and \ref{fig:partial}, $\nu_2(G)$ is again $C$, $1$, and $\lceil C/2 \rceil$, respectively. 
The result for the CL graph in Figure \ref{fig:partial}  can be obtained by choosing $f(v) = 1/\lceil C/2 \rceil$ for all $v$ in an independent set of $G$ in Definition \ref{def:nu_2}; alternatively, it follows from the fact that $\iota(G) = \kappa(G) = \lceil C/2 \rceil$; see the following lemma.

\begin{lemma}[Comparing Graph Quantities]\label{lem:orderlemma}
For all graphs $G$,
$$\iota(G) \leq \nu_2(G) \leq \lambda(G) \leq \kappa(G).$$
Furthermore, when $G$ is the union of $r$ disjoint cliques, equality holds for all inequalities and all invariants equal $r$. 
\end{lemma}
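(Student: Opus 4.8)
The plan is to establish the four-term chain $\iota(G) \le \nu_2(G) \le \lambda(G) \le \kappa(G)$ by proving the three consecutive inequalities one at a time, and then to obtain the equality statement for a disjoint union of $r$ cliques by a squeezing argument: I would compute directly that $\iota(G) = r$ and $\kappa(G) = r$ for such a graph, after which the chain forces $\nu_2(G) = \lambda(G) = r$ as well. Throughout I would write $s(v) = \sum_{w \in \cal I(v)} f(w)$ for the denominator appearing in the definitions of $\nu_2$ and in the variational formula for $\lambda$ from Lemma \ref{lem:mas_equiv}.

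For the bottom inequality $\iota(G) \le \nu_2(G)$, I would plug a near-optimal test function into the variational definition of $\nu_2$. Let $S$ be a maximum independent set, so $|S| = \iota(G)$, and take $f$ placing mass $1/|S|$ on each vertex of $S$ (and, to respect the strict positivity in the domain of the supremum, an infinitesimal $\eps$ on every other vertex, with $\eps \to 0$ at the end). Since $S$ is independent, the only mass-carrying in-neighbor of $v \in S$ in the limit is $v$ itself via the self-loop, so $s(v) \to 1/|S|$, and the objective tends to $\sum_{v \in S} (1/|S|)/\sqrt{1/|S|} = \sqrt{|S|}$, whose square is $\iota(G)$. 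For the middle inequality $\nu_2(G) \le \lambda(G)$, I would apply Cauchy--Schwarz after splitting each summand as $\sqrt{f(v)}\cdot \sqrt{f(v)}/\sqrt{s(v)}$, giving $\left(\sum_v f(v)/\sqrt{s(v)}\right)^2 \le \left(\sum_v f(v)\right)\left(\sum_v f(v)/s(v)\right)$. On the domain of $\nu_2$ the first factor equals $1$ and the second is at most $\lambda(G)$ by Lemma \ref{lem:mas_equiv}, so taking the supremum over $f$ yields the claim.

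For the top inequality $\lambda(G) \le \kappa(G)$, I would again use Lemma \ref{lem:mas_equiv} together with clique-cover structure. Fix a minimum clique cover with parts $S_1, \dots, S_r$ where $r = \kappa(G)$, and any $f$. Since each $S_j$ is a subclique, every $u \in S_j$ has $u \to v$ for all $v \in S_j$, so $S_j \subseteq \cal I(v)$ and $s(v) \ge \sum_{w \in S_j} f(w)$ for $v \in S_j$; hence $\sum_{v \in S_j} f(v)/s(v) \le \left(\sum_{v \in S_j} f(v)\right)/\left(\sum_{w \in S_j} f(w)\right) = 1$, and summing over the $r$ parts gives $\sum_v f(v)/s(v) \le r$, so $\lambda(G) \le \kappa(G)$. (Equivalently, a maximum acyclic subgraph meets each subclique in at most one vertex, since two subclique vertices form a $2$-cycle.) Finally, for a disjoint union of $r$ cliques a subclique cannot span two components, so the components form the minimum clique cover and $\kappa(G) = r$, while an independent set meets each component in at most one vertex with one per component achievable, so $\iota(G) = r$; the chain then squeezes $\nu_2(G) = \lambda(G) = r$.

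I expect the only delicate point to be the lower bound $\iota(G) \le \nu_2(G)$: the natural test function is supported on $S$ and thus lies on the boundary of the strictly positive feasible set, so I must justify approaching it by a limit and verify that the extra mass on $\cal I(v) \setminus \{v\}$ vanishes as $\eps \to 0$. The remaining three steps are short and routine, being one application of Cauchy--Schwarz and one exploitation of the clique-cover structure through the variational formula of Lemma \ref{lem:mas_equiv}.
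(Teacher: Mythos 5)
Your proof is correct and follows essentially the same route as the paper's: the same limiting test function for $\iota(G)\le\nu_2(G)$, Cauchy--Schwarz for the middle inequality where the paper invokes Jensen (the two are interchangeable in this weighted form), and for $\lambda(G)\le\kappa(G)$ your parenthetical combinatorial remark is exactly the paper's argument, while your primary variational argument via Lemma \ref{lem:mas_equiv} is an equally valid alternative. The equality case for a disjoint union of cliques is handled identically, by computing $\iota(G)=\kappa(G)=r$ and squeezing.
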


Lemma \ref{lem:orderlemma} shows that for any graph $G$, its  independence number  $\iota(G)$ is smaller than its $L_2$ variant of the maximum acyclic number, i.e.,  $\nu_2(G)$, and the latter is smaller than or equal to the maximum acyclic number of the graph, i.e.,  $\lambda(G)$. Finally, $\lambda(G)$ is smaller than or equal to the clique covering number of the graph $\kappa(G)$. These inequalities will help us compare our regret bounds under the  cross learning; see Section \ref{sec:lb_partial}.  In Lemma \ref{lem:orderlemma}, we further show that if graph $G$ is the union of $r$ disjoint cliques, then  $\iota(G) = \nu_2(G) = \lambda(G) = \kappa(G)= r$.

\section{Algorithms for Cross-Learning Between Contexts}\label{sect:crosslearning}

{\color{black}In this section, we present three algorithms for the contextual bandits problem with cross-learning. The first algorithm called  \ucbcross{} is designed for  settings with stochastic rewards and adversarial contexts (Section \ref{sect:ucb1cross}). The two other algorithms  called \expcrossk{} and \expcrossu{} are designed for settings with adversarial rewards and stochastic contexts. While \expcrossk{} (Section \ref{sect:exp3cross2}) has full knowledge of the context distribution, \expcrossu{} (Section \ref{sec:unknown_dist}) does not have this knowledge. (``U" in \expcrossu{} stands for ``Unknown" context distribution.)}
 Then, in Section \ref{sect:advlb}, we show that {even with complete cross-learning,} it is impossible to achieve regret better than $\tilde{O}(\sqrt{CKT})$ when both rewards and contexts are controlled by an adversary (in particular, when both rewards and contexts are adversarial, cross-learning may not be beneficial at all).

\subsection{ \ucbcross{}  Algorithm for Stochastic Rewards}\label{sect:ucb1cross}

In this section, we present a low-regret algorithm, called \ucbcross{}, for the contextual bandits problem with  cross-learning  in the stochastic reward setting: i.e., every reward $r_{i,t}(c)$ is drawn independently from an unknown distribution $\F_i(c)$ supported on $[0,1]$.   Importantly, this algorithm works even when the contexts are chosen adversarially, unlike our algorithms for the adversarial reward setting. We call this algorithm \ucbcross{} (Algorithm \ref{alg:UCB1PC}). For simplicity, we will begin by describing \ucbcross{} and the intuition behind it in the complete cross-learning setting (when all the graphs $G_i$ are the complete directed graph). We will then describe how to modify it for the case of partial cross-learning.

\subsubsection{\ucbcross{} Algorithm for Complete Cross-learning} The \ucbcross{} algorithm is a generalization of $S$-UCB1; both algorithms maintain a mean and upper confidence bound for each action in each context, and always choose the action with the highest upper confidence bound. The difference being that \ucbcross{} uses cross-learning to update the means and confidence bounds for every context.  Namely, when arm $i$ is pulled in a round under context $c$, we update the means and confidence bounds for arm $i$ in every other context $c'$. (Recall that we under complete cross-learning, the CL graphs are complete directed graphs.)

	\begin{policy}[htb]
\begin{center}
\fbox{ 
\begin{minipage}{0.9\textwidth}
{
\parbox{\columnwidth}{ \vspace*{3mm}
\setlength{\parskip}{0.5em}
\textbf{\ucbcross{} Algorithm}
\begin{itemize}[itemsep=0.25em]
    \item  Define the function $\omega(s) = \sqrt{(2 \log T) / s}$ {(with $\omega(0) = +\infty$)}.
    	\item {For all $i \in [K]$ and $c \in [C]$, maintain a counter $\tau_{i,t}(c)$ equal to the number of times we have observed the reward of arm $i$ in context $c$ up to round $t$, i.e., {\color{black}$\tau_{i,t}(c) = \sum_{t'=1}^{t-1} \mathbb{I}(I_{t'} =i, c\in \mathcal O_i(c_{t'}))$.} }
    	\item {For all $i \in [K]$ and $c \in [C]$, maintain a running total $\sigma_{i, t}(c)$ of the total reward observed from arm $i$ in context $c$ up to round $t$, i.e., {\color{black}$\sigma_{i,t}(c) = \sum_{t'=1}^{t-1} r_{i,t'}(c) \mathbb{I}(I_{t'} =i, c\in \mathcal O_i(c_{t'}))$.}} 
    	\item Write $\overline{r}_{i,t}(c)$ as shorthand for $\sigma_{i,t}(c)/\tau_{i,t}(c)$. If $\tau_{i, t}(c) = \sigma_{i, t}(c) = 0$, let $\overline{r}_{i,t}(c) = 0$.
    	
    	\item  For {$t=1$ to $T$},
    	\begin{itemize}[itemsep=0.25em]
    	    \item  Receive context $c_t$.
        	\item Let $I_t$ be the arm which maximizes $\overline{r}_{I_t,t-1}(c_t) + \omega(\tau_{I_t, t-1})$, breaking ties arbitrarily. {\color{black} That is, $I_t\in \arg\max_{i\in [K]} \overline{r}_{i,t-1}(c_t) + \omega(\tau_{i, t-1})$.}
			\item Pull arm $I_t$, receiving reward $r_{I_t, t}(c_t)$, and learning the value of $r_{I_t, t}(c)$ for all $c \in \cal O_{I_t}(c_t)$.
			\item For {each $c$ in $\cal {O}_{I_t}(c_t)$}, set $\tau_{I_t,t}(c) = \tau_{I_t, t-1}(c) + 1$. {\color{black}For each $c \notin \cal {O}_{I_t}(c_t)$, set $\tau_{I_t,t}(c) = \tau_{I_t, t-1}(c)$. For each $i\in [K] \setminus \{I_t\}$ and $c\in [C]$, set $\tau_{i,t}(c) = \tau_{i, t-1}(c)$.}
			\item For {each $c$ in $\cal {O}_{I_t}(c_t)$}, set $\sigma_{I_t,t}(c) = \sigma_{I_t, t-1}(c) + r_{I_t, t}(c)$. {\color{black}For each $c \notin \cal {O}_{I_t}(c_t)$, set $\sigma_{I_t,t}(c) = \sigma_{I_t, t-1}(c)$. For each $i\in [K] \setminus \{I_t\}$ and $c\in [C]$, set $\sigma_{i,t}(c) = \sigma_{i, t-1}(c)$.}
    	\end{itemize}
\end{itemize}
}
}
\vspace{0.25em}
\end{minipage}
}
\end{center}  
\vspace{0.3cm}
\caption{$O(\sqrt{\overline{\kappa}KT\log T})$ regret algorithm (\ucbcross{}) for the contextual bandits problem with  cross-learning where rewards are stochastic. Here, $\overline{\kappa} = \frac{1}{K}\sum_{i \in [K]} \kappa(G_i)$, and $\kappa(G_i)$, which  is the clique covering number of the CL graph $G_i$, is defined in Definition \ref{def:clique}.}
\label{alg:UCB1PC}
\end{policy}

{While there are similarities between $S$-UCB1 and  \ucbcross{}, the analysis of \ucbcross{} requires new ideas to deal with the fact that the observations of rewards may be drawn from a very different distribution than the desired exploration distribution. Very roughly, the analysis is structured as follows. Since rewards are stochastic, in every context $c$, there is a ``best arm'' $i^{\star}(c)$ that the optimal policy always plays. Every other arm $i$ is some amount $\Delta_{i}(c)$ worse in expectation than the best arm. Here,   $\Delta_{i}(c) = \mu_{i^{\star}(c)}(c)-\mu_i(c)$ and $\mu_i(c)$ is the average reward of arm $i$ under context $c$. After observing this arm $m_i(c) \approx \log(T)/\Delta_{i}^2(c)$ times, one can be confident that this arm is not the best arm. We can decompose the regret into the regret incurred ``before" and ``after" the algorithm is confident that an arm is not optimal in a specific context. The regret after the algorithm is confident, which we call ``post-regret," can be bounded using standard techniques from the bandit literature.  Our main contribution is the bound of the regret that \ucbcross{} incurs before it gets confident. We refer to this regret as ``pre-regret."}

Consider a complete cross-learning setting and fix an arm $i$ and let $X_{i}(c)$ be the number of times the algorithm pulls arm $i$ in context $c$ before pulling arm $i$ a total of $m_i(c)$ times across all contexts.
Because once arm $i$ is pulled $m_i(c)$ times, we are confident about the optimality of pulling that arm in context $c$, we only need to control the number pulls before $m_i(c)$. Therefore, the pre-regret  of arm $i$ is roughly $\sum_{c=1}^CX_{i}(c)\Delta_{i}(c)$.

We control the pre-regret by setting up a linear program in the variables $X_{i}(c)$ with the objective of $\sum_{c\in [C], i\in [K]}X_{i}(c)\Delta_{i}(c)$.
Because $X_i(c)$ counts all pulls of arm $i$ before $m_i(c)$, we have that $X_i(c) \le m_i(c)$. This inequality, while valid, does not lead to a tight regret bound. To obtain a tighter inequality, we first sort the contexts in terms of the samples needed to learn whether an arm is optimal, i.e., in increasing order of $m_i(c)$. Because a different context is realized in every round, we can consider the inequality $\sum_{c':m_i(c') \le m_i(c)} X_i(c') \le m_i(c)$, which counts the subset of first $m_i(c)$ pulls of arm $i$. This set of inequalities give us our desired bound for pre-regret. Under partial cross-learning, we have similar set of inequalities. These inequalities, however,  are slightly modified to incorporate the CL graph under partial cross-learning.     

\subsubsection{\ucbcross{} Algorithm for Partial  Cross-learning} In the case of partial cross-learning, we can run almost the same algorithm as described above, with the slight change that instead of updating our statistics for arm $i$ in every context, we only update these statistics for the contexts which we learn about -- i.e., exactly the contexts in $\mathcal{O}_i(c)$ (recall that an edge $c \rightarrow c'$ in $G_i$ indicates that playing action $i$ in context $c$, reveals the reward of action $i$ in context $c'$). Much of the same intuition applies to the analysis as well, with the change that now the solution of the linear program we obtain will depend on properties of the graphs $G_i$ (specifically, their clique numbers; see Theorem \ref{thm:ucbcross}).

\subsubsection{Regret of \ucbcross{} Algorithm} Let $\overline{\kappa} = \frac{1}{K}\sum_{i \in [K]} \kappa(G_i)$ be the average clique cover size of all graphs $G_i$. Recall that $\kappa(G)$ is the clique covering number of graph $G$ and is defined in Definition \ref{def:clique}.
In Theorem  \ref{thm:ucbcross}, we show that  algorithm \ucbcross{} incurs at most $O(\sqrt{\overline{\kappa}KT\log T})$ regret. Observe that when we have complete cross-learning, the CL graph $G_i$, $i\in [K]$,  is a complete graph with $\kappa(G_i) =1$. Thus, Theorem \ref{thm:ucbcross} gives the regret of $O(\sqrt{KT\log T})$ for the case of complete cross-learning. Observe that the dependency on the number of contexts $C$ is completely removed in the regret bound. 
For the case of no cross-learning, the CL graph $G_i$, $i\in [K]$, only contains self-loops 
with $\kappa(G_i) =C$. Then, with  no  cross-learning, Theorem \ref{thm:ucbcross} gives the regret of $O(\sqrt{KCT\log T})$, as expected.

\begin{theorem}[Regret of \ucbcross{}]\label{thm:ucbcross} 
Let {\color{black}$\Delta^* = \min_{i\in [K], c\in [C]} (\mu^{{\star}}(c) - \mu_i(c))>0$} (where $\mu^{{\star}}(c) = \max_{i\in [K]} \mu_{i}(c)$). Then, \ucbcross{} (Algorithm \ref{alg:UCB1PC}) has an expected gap-dependent regret of $O\left(\frac{K\bar \kappa \log T}{\Delta^*}\right)$ and an expected  gap-independent regret of $O(\sqrt{\overline{\kappa}KT\log T})$ for the contextual bandits problem with  cross-learning in the setting with stochastic rewards and adversarial contexts, where $\overline{\kappa} = \frac{1}{K}\sum_{i \in [K]} \kappa(G_i)$, and $\kappa(G_i)$, which  is the clique covering number of the CL graph $G_i$, is defined in Definition \ref{def:clique}. 
\end{theorem}

To develop some intuition for Theorem~\ref{thm:ucbcross}, consider the case when $G$ is the union of $r$ disjoint cliques. In this case, all metrics (including $\kappa(G)$) presented in Lemma \ref{lem:orderlemma} of Section \ref{sec:graph} take value $r$. Because each component is disjoint and fully connected, it is possible to cross learn for all contexts within each clique, but contexts from
different cliques provide no information on each other. Hence, the
learning problem perfectly decouples across cliques and  leads to a regret of $O(\sqrt{r K T \log T})$. The clique
covering number measures the number of cliques needed to cover the
graph and thus captures to what extent we can cross learn on graphs
that do not perfectly decompose into disjoint cliques.

\iffalse
	\begin{policy}[t]
\begin{center}
\fbox{ 
\begin{minipage}{0.9\textwidth}
{
\parbox{\columnwidth}{ \vspace*{3mm} \label{alg:determin_boosts}
\setlength{\parskip}{0.5em}
\textbf{\ucbcross{} Algorithm}
 \begin{itemize}[itemsep=0.25em]
    \item Define the function $\omega(s) = \sqrt{(2 \log T) / s}$.
    
    \item Pull each arm $i \in [K]$ once (pulling arm $i$ in round $i$). 
    
    \item Maintain a counter $\tau_{i,t}$, equal to the number of times arm $i$ has been pulled up to round $t$ (so $\tau_{i,K} = 1$ for all $i$). 
    
    \item  For all $i \in [K]$ and $c \in [C]$, initialize variable $\sigma_{i,K}(c)$ to $r_{i,i}(c)$. Write $\overline{r}_{i,t}(c)$ as shorthand for $\sigma_{i,t}(c)/\tau_{i,t}$. 
    
    \item For {$t=K+1$ to $T$},
    \begin{itemize}[itemsep=0.25em]
        \item Receive context $c_t$.
        \item Let $I_t$ be  $\arg\max_{i\in [K]}\overline{r}_{i,t-1}(c_t) + \omega(\tau_{i, t-1})$. 
        
			\item Pull arm $I_t$, receiving reward $r_{I_t, t}(c_t)$, and learning the value of $r_{I_t, t}(c)$ for all $c$. 
				\item For each $c$ in $[C]$,  set $\sigma_{I_t,t}(c) = \sigma_{I_t, t-1}(c) + r_{I_t, t}(c)$.
			
    \end{itemize}
\end{itemize}
}
}\vspace{0.25em}
\end{minipage}
}
\end{center}  
\vspace{0.1cm}
\caption{$O(\sqrt{KT\log T})$ regret algorithm (\ucbcross{}) for the contextual bandits problem with cross-learning where rewards are stochastic and contexts are adversarial.}\label{alg:UCB1C}
\end{policy}
\fi

\subsubsection{Proof of Theorem \ref{thm:ucbcross}}
\begin{proof}{Proof of Theorem \ref{thm:ucbcross}} 
 We begin by defining the following notation. Let $\mu_{i}(c)$ be the mean of distribution $\F_i(c)$. Let $i^{\star}(c) = \arg\max_{j\in [K]} \mu_{j}(c)$, and let $\mu^{\star}(c) = \mu_{i^{\star}(c)}(c)$. Let $\Delta_{i}(c) = \mu^{\star}(c) - \mu_{i}(c)$ be the gap between the expected reward of playing arm $i$ in context $c$ and of playing the optimal arm $i^{\star}(c)$ in context $c$. Let $\tau'_{i,t}(c)${\color{black}$ = \sum_{t'=1}^{t-1} \mathbb{I}(I_{t'} =i, c_{t'}=c)$} be the number of times arm $i$ has been pulled in context $c$ up to round $t$. 
  Note that the regret $\Reg({\cal A})$ of our algorithm is then equal to
\begin{eqnarray}
\Reg({\cal A}) = \sum_{i=1}^{K}\sum_{c=1}^{C}\Delta_{i}(c)\tau'_{i,T}(c)\nonumber  
= \sum_{i=1}^{K}\sum_{c=1}^{C}\sum_{t=1}^{T}\Delta_{i}(c)\mathbbm{I}(I_{t}=i, c_{t}=c).%\\
%&\le& \Delta_{\min}T +\sum_{i=1}^{K}\sum_{c=1}^{C}\sum_{t=1}^{T}\Delta_{i}(c)\mathbbm{I}(I_{t}=i, c_{t}=c, \Delta_i(c)\ge \Delta_{\min})
\label{eq:reg1}
\end{eqnarray}

Fix a value $\Delta_{min}$ to be chosen later. Note that the sum of all terms in the above expression with $\Delta_i(c) \leq \Delta_{min}$ is at most $\Delta_{min}T$. We can therefore write

$$\Reg({\cal A}) \leq \Delta_{min}T + \sum_{i=1}^{K}\sum_{c=1}^{C}\sum_{t=1}^{T}\Delta_{i}(c)\mathbbm{I}(I_{t}=i, c_{t}=c, \Delta_{i}(c) \geq \Delta_{min}).$$
For convenience of notation, from now on, without loss of generality, we assume that all $\Delta_{i}(c) \geq \Delta_{min}$, and suppress the condition $\Delta_{i}(c) \ge \Delta_{min}$ in the indicator variables.
%In what follows, to ease the exposition, we hide the condition that $\Delta_i(c)\ge \Delta_{\min}$.

Now, define $m_{i}(c) = \frac{8\log T}{\Delta_{i}^2(c)}$. This quantity represents the number of times one must pull arm $i$ to observe that arm $i$ is not the best arm in context $c$ (we will show this later). Define (as in Algorithm \ref{alg:UCB1PC}) %{\color{red} Given my definition above ($\tau_{i,t}(c) = \sum_{t'=1}^{t-1} \mathbb{I}(I_{t'} =i, c\in \mathcal O_i(c_{t'}))$) this does not seem right.}
$$\tau_{i, t}(c) = \sum_{c' \in \cal I_i(c)} \tau'_{i, t}(c').$$
Note that $\tau_{i, t}(c)$, which can be written as $\tau_{i,t}(c) = \sum_{t'=1}^{t-1} \mathbb{I}(I_{t'} =i, c\in \mathcal O_i(c_{t'}))$, is equal to the number of times up to round $t$ we observe the reward of arm $i$ in context $c$. %{\color{red}Was not this the definition of $\tau_{i,t}$? ``For all $i \in [K]$ and $c \in [C]$, maintain a counter $\tau_{i,t}(c)$ equal to the number of times we have observed the reward of arm $i$ in context $c$ up to round $t$"} \jonnote{addressed these comments} 
We now define 
\begin{align}\label{eq:reg_pre_partial}\Reg_{\pre} &= \sum_{i=1}^{K}\sum_{c=1}^{C}\sum_{t=1}^{T}\Delta_{i}(c)\mathbbm{I}(I_{t}=i, c_{t}=c, \tau_{i,t}(c) \leq m_{i}(c)),\\
%We now divide the sum in (\ref{eq:reg1}) into two parts. Define:
%\begin{align}\label{eq:reg_pre}\Reg_{\pre} = \sum_{i=1}^{K}\sum_{c=1}^{C}\sum_{t=1}^{T}\Delta_{i}(c)\mathbbm{I}(I_{t}=i, c_{t}=c, \tau_{i,t} \leq m_{i}(c)),\end{align}
\label{eq:reg_post}\Reg_{\post} &= \sum_{i=1}^{K}\sum_{c=1}^{C}\sum_{t=1}^{T}\Delta_{i}(c)\mathbbm{I}(I_{t}=i, c_{t}=c, \tau_{i,t}(c) > m_{i}(c)).\end{align}
These two quantities represent the regret incurred before and after (respectively) the algorithm ``realizes'' an arm is not optimal in a specific context. We refer to $\Reg_{\pre}$ and $\Reg_{\post}$ as pre- and post-regret, respectively.  With these quantities, almost surely, we have %(\ref{eq:reg1}) almost surely as
\begin{equation} \label{eq:reg2}
\sum_{i=1}^{K}\sum_{c=1}^{C}\sum_{t=1}^{T}\Delta_{i}(c)\mathbbm{I}(I_{t}=i, c_{t}=c) ~=~  \Reg_{\pre} + \Reg_{\post}\,.
\end{equation}

In the following two lemmas, we will  bound the expected values of $\Reg_{\pre}$ and $\Reg_{\post}$.  In particular, the following lemma that bounds $\E[\Reg_{\pre}]$ is our main technical contribution in this proof.

\begin{lemma}[Bounding the Pre-regret] \label{lem:ucbpart1} %\nc{Suppose that $\min_{i, c} \mu^{{\star}}(c) - \mu_i(c)\ge \Delta_{\min}>0$.}
Let $\Reg_{\pre}$ be the quantity defined in (\ref{eq:reg_pre_partial}). Then, 
$$\E\left[\Reg_{\pre}\right] \leq \frac{16\log T \left(\sum_{i=1}^{K}\kappa(G_i)\right)}{\Delta_{min}}.$$
\end{lemma}
\begin{proof}{Proof of Lemma \ref{lem:ucbpart1}}
Fix an action $i$, and let $S_1, S_2, \dots, S_{\kappa(G_i)}$ be a minimal clique covering of the graph $G_i$. Let $r(c)$ equal the value of $r$ such that $c \in S_r$. For each $r \in [\kappa(G_i)]$, define %{\color{red} As stated above, let's make sure the definition of $\tau$ is accurate}
$$\tilde{\tau}_{i, t}(r) = \sum_{c \in S_r}\tau'_{i,t}(c).$$
The quantity $\tilde{\tau}_{i, t}(r)$ can be thought of the number of times we play arm $i$ in a context belonging to clique $r$. Note that since $S_{r}$ is a clique, this is also (a lower bound on) the number of times we observe the reward of all the arms in clique $r$.

Note that for all $c$, $S_{r(c)} \subseteq \cal I_{i}(c)$ (since $S_{r(c)}$ is a clique, all contexts in $S_{r(c)}$ have an edge leading to $c$). It follows that $\tilde{\tau}_{i, t}(r(c)) \leq \tau_{i,t}(c)$. (Under complete cross-learning, $\tilde{\tau}_{i, t}(r(c)) = \tau_{i,t}(c)$.) Now, define $X_i(c)$ as
\begin{equation*}
X_i(c) = \sum_{t=1}^{T} \mathbbm{I}(c_t = c, I_t = i, \tilde{\tau}_{i, t}(r(c)) \leq m_i(c)). 
\end{equation*}
The quantity $X_i(c)$ can be thought of as the number of times action $i$ is played during context $c$ before the $m_i(c)$th time we have observed the payoff of action $i$ in this context (and the other contexts of $S_{r(c)}$).
Note that since $\tilde{\tau}_{i, t}(r(c)) \leq \tau_{i,t}(c)$, it is true that
$$\mathbbm{I}(c_t=c, I_t = i, \tilde{\tau}_{i,t}(r(c)) \leq m_i(c)) \geq \mathbbm{I}(c_t=c, I_t = i, \tau_{i,t}(c) \leq m_i(c)),$$
and therefore
$$\sum_{c=1}^{C}\sum_{t=1}^{T}\Delta_{i}(c)\mathbbm{I}(I_{t}=i, c_{t}=c, \tau_{i,t}(c) \leq m_{i}(c)) \leq \sum_{c=1}^{C}\Delta_{i}(c)X_i(c).$$

 Fix an $r$, and order the contexts in $S_r$ given by $c_{(1)}, c_{(2)}, \dots, c_{(|S_r|)}$ so that $\Delta_{i}(c_{(1)}) \geq \Delta_{i}(c_{(2)}) \geq \dots \geq \Delta_{i}(c_{(|S_r|)})$ (and thus $m_i(c_{(1)}) \leq m_i(c_{(2)}) \leq \dots \leq m_{i}(c_{(|S_r|)})$). From the ordering of the $m_i(c_{(j)})$, we have the following system of inequalities:
\begin{align}
X_i(c_{(1)}) &\leq  m_{i}(c_{(1)}) \nonumber  \\
X_i(c_{(1)}) + X_i(c_{(2)}) &\leq  m_{i}(c_{(2)}) \nonumber\\
&\vdots &\nonumber \\
X_i(c_{(1)}) + X_i(c_{(2)}) + \dots + X_i(c_{(|S_r|)}) &\leq  m_{i}(c_{(|S_r|)})\,. \label{eq:Xp}
\end{align}
To see why the above inequalities hold, for simplicity, focus on the second inequality (the same argument can be applied for other inequalities). First note that by the fact that $m_{i}(c_{(1)}) \leq m_{i}(c_{(2)})$, we have 
\[
X_{i}(c_{(1)})+ X_{i}(c_{(2)}) \le  \sum_{t=1}^{T}\mathbbm{I}(c_{t} = c_{(1)}, I_t = i, \tilde{\tau}_{i,t} (r) \leq m_{i}(c_{(2)}))+  \sum_{t=1}^{T}\mathbbm{I}(c_{t} = c_{(2)}, I_t = i, \tilde{\tau}_{i,t} (r) \leq m_{i}(c_{(2)}))\,.
\]
(Recall that $\tilde{\tau}_{i, t}(r) = \sum_{c \in S_r}\tau'_{i,t}(c)$.) This implies that 
\[
X_{i}(c_{(1)})+ X_{i}(c_{(2)})  ~\le~  \sum_{t=1}^{T}\mathbbm{I}((c_{t} = c_{(1)} \text{~or~} c_{t} = c_{(2)}), I_t = i, \tilde{\tau}_{i,t} (r)  \leq m_{i}(c_{(2)})).
\]

The right hand side of the above inequality is in turn at most $m_{i}(c_{(2)})$, since each time $c_{t} = c_{(1)}$ or $c_{(2)}$ and $I_{t} = i$, $\tilde{\tau}_{i,t}(r)$ increases by 1. This proves the second inequality in \eqref{eq:Xp}; the other inequalities follow similarly.

Now, we wish to bound $\sum_{j=1}^{|S_r|} \Delta_{i}(c_{(j)})X_{i}(c_{(j)})$. To do this, multiply the $j$th inequality in Equation (\ref{eq:Xp})  by $\Delta_{i}(c_{(j)}) - \Delta_{i}(c_{(j+1)})$ (for the last inequality, just multiply it through by $\Delta_{i}(c_{(|S_r|)})$), and sum all of these inequalities to obtain
\begin{eqnarray*}
\sum_{j=1}^{|S_r|}\Delta_{i}(c_{(j)})X_{i}(c_{(j)})&\leq & \Delta_{i}(c_{(|S_r|)})m_{i}(c_{(|S_r)|)}) + \sum_{j=1}^{|S_r|-1}(\Delta_{i}(c_{(j)}) - \Delta_{i}(c_{(j+1)}))m_i(c_{(j)}) \\
&=& 8\log T \left(\frac{1}{\Delta_{i}(c_{|S_r|})} + \sum_{j=1}^{|S_r|-1}\frac{\Delta_{i}(c_{(j)}) - \Delta_{i}(c_{(j+1)})}{\Delta_i(c_{(j)})^2}\right) \\
&\leq & 8\log T \left(\frac{1}{\Delta_{\min}} + \int_{\Delta_{\min}}^{1}\frac{dx}{x^2}\right) \\
&\leq & \frac{16 \log T}{\Delta_{\min}}\,,
\end{eqnarray*}
where the second equation follows because  $m_{i}(c) = \frac{8\log T}{\Delta_{i}^2(c)}$, and  the third equation holds because $ \Delta_{i}(c_{j}) \geq \Delta_{\min}$ for any $j\in [|S_r|]$.  
Therefore, summing over all $r \in [\kappa(G_i)]$, we have that
$$\sum_{c=1}^{C}\Delta_i(c)X_i(c) \leq \frac{16\kappa(G_i)\log T}{\Delta_{min}}.$$
Finally, summing over all actions $i$, we have that

$$\Reg_{\pre} \leq \frac{16\log T}{\Delta_{min}}\left(\sum_{i=1}^{K}\kappa(G_i)\right).\Halmos$$ 
\end{proof}

We next proceed to bound the expected value of $\Reg_{\post}$.  This follows from the standard analysis of UCB1. A proof is provided in the appendix.

\begin{lemma}[Bounding the Post-regret]\label{lem:ucbpart2}
Let $\Reg_{\post}$ be the quantity defined in (\ref{eq:reg_post}). Then, 
$$\E\left[\Reg_{\post}\right] \leq \frac{K\pi^2}{3}\,.$$
\end{lemma}

Substituting the results of Lemmas \ref{lem:ucbpart1} and \ref{lem:ucbpart2} into (\ref{eq:reg2}) with $\Delta_{\min}=\Delta^*$, we obtain
\begin{equation} \label{eq:reg4}
\E[\Reg({\cal{A}})] \leq \frac{16 K\log T}{\Delta^*} \left(\sum_{i=1}^{K}\kappa(G_i)\right)+ \frac{K\pi^2}{3} = O\left(\frac{K\log T}{\Delta^*} \bar \kappa \right)\,,
\end{equation}
where the inequality holds because by definition, $\bar \kappa =\frac{1}{K} \sum_{i=1}^{K}\kappa(G_i)$. 
This proves our gap-dependent regret bound. 
For the gap-independent bound, observe that $\E[\Reg({\cal{A}})] \le \Delta_{\min}T+\Reg_{\pre} + \Reg_{\post}$. 
Then, one can still apply the 
 the results of Lemmas \ref{lem:ucbpart1} and \ref{lem:ucbpart2} to bound $\Reg_{\pre}$ and $\Reg_{\post}$. By doing so, we obtain
\begin{equation*}
\E[\Reg({\cal{A}})] \leq \Delta_{min}T + \frac{16 K\overline{\kappa}\log T}{\Delta_{min}} + \frac{K\pi^2}{3}.
\end{equation*}

Substituting in $\Delta_{min} = \sqrt{\overline{\kappa}K\log T / T}$, it is straightforward to verify that $\E[\Reg({\cal{A}})] \leq O(\sqrt{\overline{\kappa}KT\log T})$, as desired.

\end{proof}

\medskip

\subsection{Algorithms for Adversarial Rewards and Stochastic Contexts}
Here, we consider  contextual bandits problem with cross learning when the rewards are adversarially chosen but contexts are stochastically drawn from some distribution $\D$.   {\color{black}We consider two settings: in the first setting, 
  the learner knows the distribution over contexts $\D$ and in the second setting, the context distribution is unknown.}

\subsubsection{\expcrossk{} Algorithm for Known Context Distribution}\label{sect:exp3cross2} 
In this section, we present an algorithm called \expcrossk{} (Algorithm \ref{alg:exp3PC}). 
Similar to $S$-EXP3, discussed in the introduction,   \expcrossk{}  maintains a weight for each action in each context, and updates the weights via multiplicative updates by an exponential of an unbiased estimator of the reward. There are two main differences between  $S$-EXP3  and  \expcrossk{}. First, 
 while $S$-EXP3 only updates the weight of the chosen action for the current context (i.e., $w_{I_t, t}(c_t)$), \expcrossk{} uses the information from cross-learning to update the weight of the chosen action for more contexts. Precisely, suppose that \expcrossk{} plays arm $I_t$  in round $t$ under context $c_t$. Then, \expcrossk{}  updates the weight of any context $c$ in  $\cal O_{I_t}(c_t)$. (For the case of complete cross-learning, the weight of all contexts $c$ are updated.)

Second, 
 to take advantage of the information from cross-learning, \expcrossk{} modifies $S$-EXP3 by changing the unbiased estimator in the update rule. For $S$-EXP3,  $\hat{r}_{i,t}(c) = (r_{i,t}(c)/p_{i,t}(c_t))\mathbbm{I}(I_t = i)$ is an unbiased estimator (over the algorithm's randomness) of the reward the adversary chooses from pulling arm $i$ in context $c$, where $p_{i,t}(c)$ is the probability the algorithm chooses action $i$ in round $t$ if the context is $c$. However, to minimize regret, \expcrossk{}   chooses an unbiased estimator with minimal variance (as the expected variance of this estimator shows up in the final regret bound). The new estimator in question is
\begin{align}\label{eq:estimator}\hat{r}_{i, t}(c) = \frac{r_{i,t}(c)}{\sum_{c' \in \cal I_i(c)} \Pr[c'] \cdot p_{i, t}(c')}\mathbbm{I}(I_t = i, c_t \in \cal I_i(c))\,.\end{align} 
It is easy to see that this is an unbiased estimator of the reward of arm $i$ under context $c$ (see the proof of Theorem \ref{thm:exppcross}). Observe that in the denominator of the estimator, we only consider contexts $c' \in \cal I_i (c)$. This is because we only learn the reward of arm $i$ under context $c$ when this arm is played under  context $c' \in \cal I_i (c)$. Note that by definition, $c \in \cal I_i (c)$.

Let $\bar{\lambda} = \frac{1}{K}\sum_{i \in [K]}\lambda(G_i)$ be the average size of the maximum acyclic subgraph over all graphs $G_i$. We will show that \expcrossk{} obtains at most $O(\sqrt{\overline{\lambda}KT \log(K)})$ regret. Then, with complete cross-learning,  Theorem \ref{thm:exppcross} provides regret of $O(\sqrt{KT \log(K)})$, completing removing the dependency on the number of contexts $C$. This is because under complete cross-learning, CL graphs are complete graphs and their  maximum acyclic subgraph number $\lambda(G)$ is one. In addition, as expected, with no cross-learning, Theorem \ref{thm:exppcross} provides regret of $O(\sqrt{KT C \log(K)})$.

\begin{policy}[htb]
\begin{center}
\fbox{ 
\begin{minipage}{0.9\textwidth}
{
\parbox{\columnwidth}{ \vspace*{3mm}
\setlength{\parskip}{0.5em}
\textbf{\expcrossk{} Algorithm}
\begin{itemize}[itemsep=0.25em]
    \item   Choose $\alpha = \beta = \sqrt{\frac{\log K}{\overline{\lambda}KT}}$ (where $\overline{\lambda} = \frac{1}{K}\sum_{i=1}^{K}\lambda(G_i)$).  
    	\item Initialize $K\cdot C$ weights, one for each pair of action $i$ and context $c$, letting $w_{i, t}(c)$ be the value of the $i$th weight for context $c$ at round $t$. Initially, set all $w_{i,0} = 1$.
   
   \item For {$t=1$ to $T$},
   \begin{itemize}[itemsep=0.25em]
        	 \item Observe  context $c_t\sim \D$.
			 \item For all $i \in [K]$ and $c \in [C]$, let \[p_{i,t}(c) = (1-K\alpha)\frac{w_{i,t-1}(c)}{ \sum_{j=1}^{K}w_{j,t-1}(c)} + \alpha.\]
			 \item Sample an arm $I_{t}$ from the distribution $p_{t}(c_t)=(p_{i,t}(c_t))_{i\in [K]}$. 
			 \item Pull arm $I_t$, receiving reward $r_{I_t, t}(c_t)$, and learning the value of $r_{I_t, t}(c)$ for all $c \in \cal O_i(c_t)$. 
			 \item For {each $c$ in $\cal O_{I_t}(c_t)$}, 
			set \[w_{I_t,t}(c) = w_{I_t,t-1}(c) \cdot \exp\Big(\beta \cdot \frac{r_{I_t,t}(c)}{\sum_{c' \in \cal I_{I_t}(c)}\Pr[c']\cdot p_{I_t,t}(c')}\Big).\]
			\end{itemize}
\end{itemize}
}
}
\vspace{0.25em}
\end{minipage}
}
\end{center}  
\vspace{0.3cm}
\caption{$O(\sqrt{\overline{\lambda}KT\log K})$ regret algorithm (\expcrossk{}) for the contextual bandits problem with  cross-learning where rewards are adversarial and contexts are stochastic. Here, $\bar{\lambda} = \frac{1}{K}\sum_{i \in [K]}\lambda(G_i)$, and $\lambda(G_i)$, which is the maximum acyclic subgraph number of $G_i$, is defined in Definition \ref{def:acyclic}. }\label{alg:exp3PC}
\end{policy}

\begin{theorem}[Regret of \expcrossk{}]\label{thm:exppcross}
\expcrossk{} (Algorithm \ref{alg:exp3PC}) has regret $O(\sqrt{\overline{\lambda}KT\log K})$ for the contextual bandits problem with  cross-learning when rewards are adversarial and contexts are stochastic, where $\bar{\lambda} = \frac{1}{K}\sum_{i \in [K]}\lambda(G_i)$, and $\lambda(G_i)$, which is the maximum acyclic subgraph number of $G_i$, is defined in Definition \ref{def:acyclic}. 
\end{theorem}

{At a high level, the fact that \expcrossk{} obtains a good regret bound follows from the quality of the estimator used in this algorithm. Once we have this estimator, we follow the standard analysis of multiplicative weights/EXP3 algorithms. More specifically,} we use the sum of the weights $W_{t}(c) = \sum_{i=1}^{K}w_{i,t}(c)$ as a proxy for the regret bound of the \expcrossk{}. Roughly speaking, the higher the sum of the weights $W_{t}(c)$, the better \expcrossk{} has performed till time $t$ (see how the weights are updated in this algorithm).  In light of this, in the proof,  for any context $c$, we lower/upper bound the expected value of  $\log\left( W_{T+1}(c) / W_0(c) \right)$ as a function of mean and variance of the estimator in \expcrossk{}, as well as,  the expected reward earned by \expcrossk{}. Comparing the lower bound with  the upper bound gives us the desired regret bound.

\subsubsection{\expcrossu{} Algorithm for Unknown Context Distribution}\label{sec:unknown_dist}

In Section \ref{sect:exp3cross2}, we presented an algorithm called \expcrossk{} with  regret of  $O(\sqrt{\overline{\lambda}KT\log K})$ for the setting with adversarial rewards and  stochastic contexts. This algorithm uses the knowledge of  the context distribution $\mathcal D$ to come up with an unbiased  estimator with low variance. In this section, we relax the assumption of knowing distribution $\cal D$, and  we present an algorithm for the contextual bandits problem with cross-learning in the setting when rewards are adversarial and contexts are stochastic, but when the learner does not know the distribution $\D$ over contexts. We call this algorithm \expcrossu{} (see Algorithm \ref{alg:exp3C1}). In this algorithm, we  additionally assume all the CL graphs $G_i$, $i\in [K],$ are the same and equal to a single graph $G$ (we will see that this assumption is critical to constructing a \emph{consistently biased} estimator for the rewards; see our discussion about the expectation of our estimator, i.e., $\E[\hat{r}_{i, t}(c)]$ later in this section). Since we assume that $G_i=G$ for any $i\in [K]$, in the algorithm, we simply write $\mathcal{O}(c)$ and $\mathcal{I}(c)$ in place of $\mathcal{O}_i(c)$ and $\mathcal{I}_i(c)$).

\begin{policy}[htb]
\begin{center}
\fbox{ 
\begin{minipage}{0.9\textwidth}
{
\parbox{\columnwidth}{ \vspace*{3mm}
\setlength{\parskip}{0.5em}
\textbf{\expcrossu{} Algorithm }
\begin{itemize}[itemsep=0.25em]
    \item    Choose $\alpha = (\log K / K^2T)^{1/3}$, and $\beta = \sqrt{\frac{\alpha\log K}{T}}$.
    	\item Initialize $K\cdot C$ weights, one for each pair of action $i$ and context $c$, letting $w_{i, t}(c)$ be the value of the $i$th weight for context $c$ at round $t$. Initially, set all $w_{i,0} = 1$.
    	
    	\item For {$t=1$ to $T$}, 
    	\begin{itemize}[itemsep=0.25em]
    	    \item  Observe context $c_t \sim \D$.
			\item  For all $i \in [K]$ and $c \in [C]$, let $p_{i,t}(c) = (1-K\alpha)\frac{w_{i,t-1}(c)}{ \sum_{j=1}^{K}w_{j,t-1}(c)} + \alpha$.
			\item Sample an arm $I_{t}$ from the distribution $p_{t}(c_t)$. 
			\item Pull arm $I_t$, receiving reward $r_{I_t, t}(c_t)$, and learning the value of $r_{I_t, t}(c)$ for all $c$. 
			\item For {each $c$ in $\mathcal{O}(c_t)$}, set $w_{I_t,t}(c) = w_{I_t,t-1}(c) \cdot \exp\left(\beta \cdot \frac{r_{I_t,t}(c)}{p_{I_t, t}(c_t)}\right)$.
    	\end{itemize}
        
\end{itemize}
}
}
\vspace{0.25em}
\end{minipage}
}
\end{center}  
\vspace{0.3cm}
\caption{$O(\lambda K^{1/3}T^{2/3}(\log K)^{1/3})$ regret algorithm (\expcrossu{}) for the contextual bandits problem with cross-learning where rewards are adversarial and contexts are stochastic and
 the distribution $\D$ over contexts is unknown. We additionally assume all the CL graphs $G_i$, $i\in [K],$ are equal to a single graph $G$ (and hence simply write $\mathcal{O}(c)$ and $\mathcal{I}(c)$ in place of $\mathcal{O}_i(c)$ and $\mathcal{I}_i(c)$, respectively). Further,  $\lambda = \lambda(G)$ is the maximum acyclic subgraph number of $G$ (defined in Definition \ref{def:acyclic}).}\label{alg:exp3C1}
\end{policy}

\iffalse
\begin{policy}[htb]
\begin{center}
\fbox{ 
\begin{minipage}{0.9\textwidth}
{
\parbox{\columnwidth}{ \vspace*{3mm}
\setlength{\parskip}{0.5em}
\textbf{\expcrossu{} Algorithm}
\begin{itemize}[itemsep=0.25em]
    \item    Choose $\alpha = (\log K / K^2T)^{1/3}$, and $\beta = \sqrt{\frac{\alpha\log K}{T}}$.
    	\item Initialize $K\cdot C$ weights, one for each pair of action $i$ and context $c$, letting $w_{i, t}(c)$ be the value of the $i$th weight for context $c$ at round $t$. Initially, set all $w_{i,0} = 1$, $i\in [K]$.
    	
    	\item For {$t=1$ to $T$}, 
    	\begin{itemize}[itemsep=0.25em]
    	    \item  Observe context $c_t \sim \D$.
			\item  For all $i \in [K]$ and $c \in [C]$, let $p_{i,t}(c) = (1-K\alpha)\frac{w_{i,t-1}(c)}{ \sum_{j=1}^{K}w_{j,t-1}(c)} + \alpha$.
			\item Sample an arm $I_{t}$ from the distribution $p_{t}(c_t)$. 
			\item Pull arm $I_t$, receiving reward $r_{I_t, t}(c_t)$, and learning the value of $r_{I_t, t}(c)$ for all $c$. 
			\item For {each $c$ in $[C]$}, set $w_{I_t,t}(c) = w_{I_t,t-1}(c) \cdot \exp\left(\beta \cdot \frac{r_{I_t,t}(c)}{p_{I_t, t}(c_t)}\right)$.

    	\end{itemize}
        
\end{itemize}
}
}
\vspace{0.25em}
\end{minipage}
}
\end{center}  
\vspace{0.3cm}
\caption{$\tilde{O}(K^{1/3}T^{2/3})$ regret algorithm (\expcrossu{}) for the contextual bandits problem with cross-learning when the distribution $\D$ over contexts is unknown.}\label{alg:exp3C1}
\end{policy}
\fi

\expcrossu{} is similar to \expcrossk{}, in that all the algorithms maintain a weight for each action in each context, and update the weights via multiplicative updates by an exponential of an estimator of the reward. 
The main difference between \expcrossu{} and \expcrossk{} is their estimators. The estimator in \expcrossk{}, given in Equation (\ref{eq:estimator}), uses the knowledge of distribution  $\cal D$ while the estimator in  \expcrossk{}, which is defined below, cannot use such knowledge: 
\begin{align}\label{eq:estimator_2}%\hat{r}_{i,t}(c) = \frac{r_{i,t}(c)}{ p_{i,t}(c_t)} \cdot \mathbbm{I}({I_t = i}).
\hat{r}_{i, t}(c) = \frac{r_{i,t}(c)}{p_{i, t}(c_t)}\mathbbm{I}(I_t = i, c \in \mathcal{O}(c_t))\,.
\end{align}
We highlight that 
unlike the estimator in \expcrossk{}, $\hat{r}_{i, t}(c)$ is not an unbiased estimator of $r_{i, t}(c)$ for partial cross learning. (This estimator is only unbiased under complete cross-learning.)  Indeed, we have that: 
$$\E[\hat{r}_{i, t}(c)] = \sum_{c' \in \mathcal{I}(c)} \Pr[c'] p_{i,t}(c') \cdot \frac{r_{i, t}(c)}{p_{i, t}(c')} = \left(\sum_{c' \in \mathcal{I}(c)} \Pr[c']\right) r_{i, t}(c)\,.$$ 
 However, note that we can write $\E[\hat{r}_{i, t}(c)]$ in the form $f(c)r_{i, t}(c)$, where $f(c)$ is a function which only depends on a context (and in this case is given by $f(c) = \sum_{c' \in \mathcal{I}(c)} \Pr[c']$). That is, our estimator is consistently biased across all actions $i\in [K]$ for a fixed context $c$. It turns out this property is enough to adapt the previous analysis of Theorem \ref{thm:exppcross}.

Given this estimator, the question is why does \expcrossu{} have regret of order $T^{2/3}$ (see Theorem~\ref{thm:exp3cross}) when the dependence on $T$ in \expcrossk{}  is only of order $\sqrt{T}$? The answer lies in understanding how the variance of the unbiased estimator used affects the regret bound of the algorithm. In the standard analysis {of EXP3 (and \expcrossk{} and $S$-EXP3)}, one of the quantities in the regret bound is the \textit{total expected variance of the  estimator of rewards}. In $S$-EXP3,  
this quantity takes the form
$$\sum_{t=1}^{T}p_{i,t}(c_t)\E[\hat{r}_{i,t}(c)^2] = \sum_{t=1}^{T} \frac{p_{i,t}(c_t)}{p_{i,t}(c_t)}\hat{r}_{i,t}(c)^2 = \sum_{t=1}^{T}\hat{r}_{i,t}(c)^2 \leq T.$$
However, in \expcrossu{} (where the desired exploration distribution $p_{i,t}(c)$ can differ from the exploration distribution due to cross-learning), this quantity becomes 
$$\sum_{t=1}^{T} p_{i,t}(c)\E[\hat{r}_{i,t}(c)^2] = \sum_{t=1}^{T} \frac{p_{i,t}(c)}{p_{i,t}(c_t)}\hat{r}_{i,t}(c)^2 \leq \frac{T}{\min_{i\in [K],t\in [T]} p_{i,t}(c)}.$$ 
Optimizing $\min_{i\in [K], t\in [T]} p_{i,t}(c)$ (through selecting the parameter $\alpha$) leads to a $\tilde{O}(\lambda T^{2/3}K^{1/3})$ regret bound. In the case of complete cross-learning, $\lambda = 1$ and we have a $\tilde{O}(T^{2/3}K^{1/3})$ regret bound.

\begin{theorem}[Regret of \expcrossu{}]\label{thm:exp3cross} 
Consider the contextual bandits problem with  adversarial rewards, and stochastic contexts, where all the CL graphs $G_i$ are equal to $G$ and the context distribution $\cal D$ is unknown. Let $\lambda = \lambda(G)$ be the maximum acyclic subgraph number of $G$ (defined in Definition \ref{def:acyclic}). Then \expcrossu{} (Algorithm \ref{alg:exp3C1}) incurs at most $O(\lambda K^{1/3}T^{2/3}(\log K)^{1/3})$ regret in this setting.
\end{theorem}

{\color{black} Observe that the regret bound of both \expcrossu{} and \expcrossk{} algorithm   scales with the  maximum acyclic subgraph number of $G$ (i.e., $\lambda(G)$ defined in Definition \ref{def:acyclic}), where $\lambda(G) =1$ for complete cross-learning settings and $\lambda(G) =C$ when there is no cross-learning between contexts. However, as stated earlier, while the regret of \expcrossk{} scales with $O(\sqrt T)$, the regret of  \expcrossu{} scales with $O(T^{2/3})$.} 
An interesting open problem is determining whether it is possible to achieve $\tilde O(\sqrt{T})$ regret in the adversarial reward regime without knowing the distribution over contexts. 

{\color{black}We have explored this open problem by considering an extension of \expcrossk{} algorithm. For this discussion, we focus on the complete cross-learning setting.  The main feature of \expcrossk{} is  its low-variance unbiased estimator $\hat{r}_{i, t}(c) = (r_{i,t}(c) / D_{i,t} )\mathbbm{I}(I_t = i)$, where the denominator is $D_{i,t} = \sum_{c'=1}^C \Pr[c'] \cdot p_{i, t}(c')$. However, this unbiased estimator and, in particular, its denominator $D_{i, t}$ require knowledge of the distribution over contexts. Our idea is  to replace this unbiased estimator  with a similar estimator $\tilde{r}_{i, t}(c) = (r_{i,t}(c) / \tilde{D}_{i, t})\mathbbm{I}(I_t = i)$, where $\tilde{D}_{i, t}$ is some sufficiently close approximation to $D_{i, t}$ that does not require knowledge of the distribution over contexts. One natural choice is to replace the true probability $\Pr[c']$ of context $c'$ with the current empirical probability $\widehat{\Pr}[c']$ to get 
$
\tilde{D}_{i, t} = \sum_{c'=1}^C \widehat{\Pr}[c']p_{i, t}(c')\,.
$
 While our empirical evaluation in Section \ref{sect:experiments} provides some convincing evidence that the proposed approach works well, we were not able to theoretically prove that the described algorithm obtains regret on the order of $O(\sqrt T)$. In Appendix \ref{sec:discuss}, we discuss the main technical challenges we encountered for analyzing  this algorithm. Nonetheless, we conjecture that this proposed algorithm incurs at most $O(\sqrt T)$ regret. 
}

\subsection{Adversarial Rewards and Adversarial Contexts}\label{sect:advlb}

A natural question is whether we can design an algorithm whose regret is lower than that of $S$-EXP3 when both the rewards and contexts are chosen adversarially (but where we still can cross-learn between different contexts). A positive answer to this question would subsume the results of the previous sections. Unfortunately, we next show  that even under complete cross-learning, any learning algorithm for the contextual bandits problem with cross-learning must necessarily incur $\Omega(\sqrt{CKT})$ regret (which is achieved by $S$-EXP3).

We will need the following regret lower-bound for the (non-contextual) multi-armed bandits problem from \cite{AuerCNS03}.

\begin{lemma}[\citealt{AuerCNS03}]\label{lem:mab-lb}
There exists a distribution over instances of the multi-armed bandit problem where any algorithm must incur an expected regret of at least $\Omega(\sqrt{KT})$.
\end{lemma}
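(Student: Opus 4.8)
The plan is to prove this classical minimax lower bound by a change-of-measure (information-theoretic) argument: I construct a hard family of Bernoulli instances on which no algorithm can identify the optimal arm quickly enough, and then average the regret over a uniform prior on the identity of the best arm.

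First I would fix a gap parameter $\epsilon \in (0, 1/4]$ (to be optimized at the end) and define $K$ ``alternative'' instances together with one reference (``null'') instance. In the null instance every arm returns an independent $\mathrm{Bernoulli}(1/2)$ reward each round. In instance $i$ (for $i \in [K]$) arm $i$ returns $\mathrm{Bernoulli}(1/2+\epsilon)$ and every other arm returns $\mathrm{Bernoulli}(1/2)$, so arm $i$ is the unique optimum; all rewards lie in $[0,1]$ as required by the model. The asserted distribution over instances is the uniform law on $\{\text{instance }1,\ldots,\text{instance }K\}$. Let $N_i$ be the (random) number of pulls of arm $i$ over the $T$ rounds, and let $\E_i,\Pr_i$ (resp. $\E_0,\Pr_0$) denote expectation and probability of the whole interaction history under instance $i$ (resp. the null). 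Since each pull of a suboptimal arm under instance $i$ costs $\epsilon$ in expected reward, and since $\E_i[\max_j\sum_t r_{j,t}]\ge T(1/2+\epsilon)$,
\[
\E_i[\Reg] \ge \epsilon\,\bigl(T - \E_i[N_i]\bigr),
\]
so after averaging over $i$ it suffices to upper bound $\frac{1}{K}\sum_{i}\E_i[N_i]$.

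The heart of the argument is to control how far $\E[N_i]$ can move when we switch from the null instance to instance $i$. Because $N_i$ is a $[0,T]$-valued measurable function of the observed history, Pinsker's inequality gives
\[
\E_i[N_i] - \E_0[N_i] \;\le\; T\cdot \mathrm{TV}(\Pr_i,\Pr_0) \;\le\; T\sqrt{\tfrac{1}{2}\,\mathrm{KL}(\Pr_0 \,\|\, \Pr_i)}.
\]
By the divergence decomposition (chain rule) for the bandit interaction, the only per-round contributions to $\mathrm{KL}(\Pr_0 \,\|\, \Pr_i)$ come from rounds in which arm $i$ is pulled, so $\mathrm{KL}(\Pr_0 \,\|\, \Pr_i) = \E_0[N_i]\cdot \mathrm{kl}(1/2 \,\|\, 1/2+\epsilon) \le 4\epsilon^2\,\E_0[N_i]$ for $\epsilon \le 1/4$. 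Summing over $i$, using $\sum_i \E_0[N_i] = T$ and Cauchy--Schwarz ($\sum_i \sqrt{\E_0[N_i]} \le \sqrt{KT}$), yields
\[
\frac{1}{K}\sum_i \E_i[N_i] \;\le\; \frac{T}{K} + T\epsilon\sqrt{\frac{2T}{K}}.
\]
Combining with the regret identity gives
\[
\frac{1}{K}\sum_i \E_i[\Reg] \;\ge\; \epsilon T\left(1 - \frac{1}{K} - \epsilon\sqrt{\frac{2T}{K}}\right),
\]
and choosing $\epsilon = \tfrac{1}{4}\sqrt{K/(2T)}$ (legitimate once $T\ge K/2$, with $K\ge 4$ so that $1/K\le 1/4$) makes the bracket at least $1/2$, producing average regret $\tfrac{1}{8\sqrt{2}}\sqrt{KT} = \Omega(\sqrt{KT})$. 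Since this is the expected regret under the uniform distribution over the $K$ instances, it establishes the lemma.

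I expect the main obstacle to be the information-theoretic step: justifying the divergence decomposition rigorously — that the KL divergence between the two laws of the \emph{full} interaction history factorizes and accumulates only on pulls of arm $i$ — and verifying that $N_i$ is a bounded measurable function of the history so that Pinsker applies with the factor $T$. The regret accounting and the final optimization over $\epsilon$ are routine once this change-of-measure inequality is in hand.
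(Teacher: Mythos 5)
Your proof is correct, and it is essentially the standard change-of-measure argument from the cited source: the paper itself gives no proof of this lemma, only a citation to \citet{AuerCNS03}, whose lower bound (Theorem 5.1 there) uses the same construction of Bernoulli instances with a single $\epsilon$-better arm, the same control of $\E_i[N_i]-\E_0[N_i]$ via the KL divergence accumulated only on pulls of arm $i$, and the same optimization $\epsilon \asymp \sqrt{K/T}$. Your accounting of the benchmark ($\E_i[\max_j \sum_t r_{j,t}] \ge T(1/2+\epsilon)$, hence $\E_i[\Reg]\ge\epsilon(T-\E_i[N_i])$), the Pinsker step for the $[0,T]$-valued statistic $N_i$, and the Cauchy--Schwarz aggregation over arms are all sound, so the argument goes through as written (modulo the harmless restrictions $K\ge 4$ and $T\ge K/2$, which do not affect the $\Omega(\sqrt{KT})$ conclusion).
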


For concreteness, we describe one instance of a reward distribution  over instances (parametrized by $K$ and $T$), denoted by $\mathcal{D}_{K, T}$, that satisfies Lemma \ref{lem:mab-lb}. This distribution is the uniform distribution over $K$ instances under which  rewards of each arm are drawn from independent Bernoulli distributions. In the $i$th instance, the rewards of arm $i$ are drawn from $\mathrm{Bern}(1/2 + \sqrt{K/T})$ (i.e., a Bernoulli distribution with mean $1/2 + \sqrt{K/T}$), and the rewards of all other arms $j\ne i$ are drawn from $\mathrm{Bern}(1/2)$ (i.e., in $i$th instance, arm $i$ is the optimal arm and this arm outperforms all other arms by at least $\sqrt{K/T}$).

With this lemma, we can construct the following lower-bound for the contextual bandits problem with (complete) cross-learning by connecting $C$ independent copies of these hard instances in sequence with one another so that cross-learning between instances is not possible.

\begin{theorem}[Regret Lower Bound for Adversarial Rewards and Adversarial Contexts]\label{thm:advlb}
 There exists a distribution over instances of the contextual bandit problem with complete cross-learning where any algorithm must incur a regret of at least $\Omega(\sqrt{CKT})$.
\end{theorem}
\begin{proof}{Proof of Theorem \ref{thm:advlb}}

Divide the $T$ rounds into $C$ epochs of $T/C$ rounds each. Label the $C$ contexts $c_1, c_2, \dots, c_C$, and adversarially assign contexts so that the context during the $j$th epoch is always $c_j$.

Next, assign rewards so that $r_{i,t}(c) = 0$ if $t$ is in the $j$th epoch and $c \neq c_j$. In other words, during the $j$th epoch in which the context is $c_j$, the rewards of all other contexts are zero. On the other hand, for $t$ in the $j$th epoch, set rewards $r_{i,t}(c_j)$ according to a hard instance for the multi-armed bandit problem sampled from the distribution $\mathcal{D}_{K, T/C}$ satisfying Lemma \ref{lem:mab-lb}. Call this random instance $P_j$, and let $i_j$ be the (random) optimal action to play in $P_j$, i.e., the action with rewards drawn from $\mathrm{Bern}(1/2 + \sqrt{K/(T/C)})$. Figure~\ref{fig:lb_instance} provides a pictorial representation of the worst-case instance.

\begin{figure}[t]
    \centering
    \begin{tabular}{cccccc}
         \multirow{4}{*}{\rotatebox[origin=c]{90}{contexts}} & \multicolumn{1}{c|}{$c_1$} & $P_1 \sim \mathcal D_{K,T/C}$ & 0 & $\cdots$ & 0 \\
         & \multicolumn{1}{c|}{$c_2$} & 0 & $P_2 \sim \mathcal D_{K,T/C}$ & $\cdots$ & 0 \\
         & \multicolumn{1}{c|}{$\vdots$} & $\vdots$ & $\vdots$ & $\ddots$ & $\vdots$ \\ 
         & \multicolumn{1}{c|}{$c_C$} & 0 & 0 & $\cdots$ & $P_C \sim \mathcal D_{K,T/C}$ \\ \cline{3-6}
         & &  $\underbrace{1,\ldots,\tfrac T C,}_{\substack{\text{epoch 1}\\c_t = c_1}}$ 
         &  $\underbrace{\tfrac T C + 1,\ldots,2 \tfrac T C,}_{\substack{\text{epoch 2}\\c_t = c_2}}$
         & $\cdots$
         & $\underbrace{(C-1)\tfrac T C + 1,\ldots,T}_{\substack{\text{epoch }T\\c_t = c_C}}$\\
         && \multicolumn{4}{c}{time}
     \end{tabular}
    \caption{Structure of the worst-case instance for the lower bound for adversarial rewards and adversarial contexts in Theorem~\ref{thm:advlb}. The entries of the table correspond to the distribution of rewards for each context/epoch pair.}
    \label{fig:lb_instance}
\end{figure}

Note that in this construction, cross-learning between contexts offers zero additional information, since all cross-learned rewards will be deterministically $0$ (and thus can be simulated by a learner without access to cross-learning). It suffices to show that any contextual bandits algorithm in the classic setting (i.e., without cross-learning) must incur regret at least $\Omega(\sqrt{CKT})$ on this distribution over instances.

Consider the (optimal) stationary strategy $\pi$ that plays $i_j$ (i.e., the optimal action under instance $P_j$) whenever the context is $c_j$. Fix an arbitrary contextual bandit algorithm $\mathcal{A}$ and consider its performance on the $j$th epoch. We argue that $\pi$ receives at least $\Omega(\sqrt{KT/C})$ more reward in expectation than algorithm $\mathcal{A}$ on this epoch because the length of the epochs is $T/C$. To see this, note that if this were not the case, by examining the restriction of $\mathcal{A}$ to the rounds in this epoch, we can construct an algorithm $\mathcal{A'}$ for the regular multi-armed bandit problem that would violate Lemma \ref{lem:mab-lb}.

It follows that over all $C$ epochs our strategy $\pi$ receives at least $\Omega(C \sqrt{KT/C}) = \Omega(\sqrt{CKT})$ more reward in expectation than algorithm $\mathcal{A}$. Thus, any algorithm $\mathcal{A}$ must have regret at least $\Omega(\sqrt{CKT})$ when compared to the optimal stationary strategy.\Halmos
\end{proof}

\section{Regret Lower Bounds under  Cross-Learning}\label{sec:lb_partial}

In this section, we prove some lower bounds on regret for contextual bandits with  cross-learning that complement the results of the previous two sections. In our lower bounds, we will consider a restricted set of instances where the graph $G_i$ of each arm $i$ is equal to the same graph $G$. We present two lower bounds. The first lower bound, presented in Theorem \ref{thm:partial-stoch-lb},  is for a setting with  stochastic rewards and stochastic contexts. We show that in this setting, any algorithm must incur expected regret of $\Omega(\sqrt{\nu_2(G)KT})$, where $\nu_2(G)$ is defined in Definition \ref{def:nu_2}.  The second lower bound, presented in Theorem \ref{thm:partial-adv-lb}, is for a setting with   stochastic  rewards and  adversarial contexts. 
We show that in this setting,  any algorithm must incur expected regret of $\Omega(\sqrt{\lambda(G)KT})$, where $\lambda(G)$  is the maximum acyclic subgraph number of graph $G$  and is defined in Definition \ref{def:acyclic}. 
Recall that by Lemma \ref{lem:orderlemma}, for all graphs $G$,
$ \nu_2(G) \leq \lambda(G) \leq \kappa(G)$, and by Theorem \ref{thm:ucbcross}, when $G_i =G$, $i\in [K]$, the expected regret of \ucbcross{} under stochastic rewards and stochastic/adversarial contexts is $O(\sqrt{{\kappa}(G)KT\log K})$. 

{\color{black}We note that per   Lemma \ref{lem:orderlemma},  if graph $G$ is the union of $r$ disjoint cliques, then  $\iota(G) = \nu_2(G) = \lambda(G) = \kappa(G)= r$. Note that for complete cross-learning setting  (i.e., fully connected CL graph), $r= 1$ and for no cross-learning setting (i.e., CL graphs with only self loops), $r$ is equal to the number of contexts $C$. This shows that our regret bounds are tight for certain graphs that are the union of $r$ disjoint cliques. In fact, our regret bounds are tight for an even larger family of graphs: all graphs where $\iota(G) = \kappa(G)$. This is true for unions of disjoint cliques, for the line graph in Figure~\ref{fig:partial}, and more generally for any undirected graph (i.e., directed graphs with symmetric edges) which is \textit{perfect}. In graph theory, a perfect (undirected) graph $G$ is a graph where for any subgraph $G'$ of $G$, the size of the largest clique of $G'$ is equal to the chromatic number of $G'$ (the minimum number of colors needed to color the vertices of $G'$ so that no two adjacent vertices share the same color). It follows as a simple corollary of the perfect graph theorem \citep{lovasz1972normal, lovasz1972characterization} that any perfect graph $G$ satisfies $\iota(G) = \kappa(G)$. Perfect graphs include all bipartite graphs, forests, interval graphs, and comparability graphs of posets (see, e.g., \citealt{west1996introduction}).}

\subsection{Regret Lower Bound with Stochastic Rewards and  Contexts}
The following theorem is the main result of this section.

\begin{theorem}[Lower Bound with Stochastic Rewards and  Contexts]\label{thm:partial-stoch-lb}
Any learning algorithm solving the contextual bandits problem with  cross-learning (for a fixed CL graph $G$) and stochastic rewards and stochastic contexts must incur expected regret of $\Omega(\sqrt{\nu_2(G)KT})$, where $\nu_2(G)$ is defined in Definition \ref{def:nu_2}.
\end{theorem}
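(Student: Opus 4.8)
The plan is to exhibit a single hard distribution over instances by taking the maximizer of the program defining $\nu_2(G)$ as the context distribution and planting an independent copy of a hard stochastic bandit instance in each context, with a context-dependent gap calibrated so that the rate at which the learner can gather information about a context $v$ — governed by how often an in-neighbor of $v$ is drawn — is exactly exhausted over the horizon.

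Concretely, let $f$ be the distribution on $V = [C]$ achieving the supremum in Definition \ref{def:nu_2}, and take $\D = f$. Write $F(v) = \sum_{w \in \cal I(v)} f(w)$ for the probability that a round's context lies in the in-neighborhood of $v$; this is precisely the rate at which the learner observes a reward sample for context $v$, since playing any arm in a context $w \in \cal I(v)$ reveals that arm's reward in $v$. For each context $v$ independently, I plant a hard instance in the style of Lemma \ref{lem:mab-lb}: every arm pays $\mathrm{Bernoulli}(1/2)$ except a uniformly random ``good'' arm $a_v$ paying $\mathrm{Bernoulli}(1/2 + \eps_v)$, with rewards in distinct contexts drawn independently. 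I set $\eps_v = c\sqrt{K/(F(v)T)}$ for a small absolute constant $c$ (truncated so that rewards stay in $[0,1]$ and the instance is nontrivial), so that $\eps_v^2 F(v)T$ is a small constant multiple of $K$.

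The heart of the argument is a per-context change-of-measure bound. The only observations whose law depends on the identity of $a_v$ are the samples of arm $a_v$ in context $v$, which are generated exactly on rounds where $c_t \in \cal I(v)$ and arm $a_v$ is pulled; because the planted rewards are independent across contexts, the overall likelihood factorizes over $v$, so the standard KL/Pinsker averaging argument (as in the proof of Lemma \ref{lem:mab-lb}) applies context by context with no cross-contamination, even though a single pull reveals rewards in several contexts at once. Since the total number of samples relevant to context $v$ equals the number of rounds with $c_t \in \cal I(v)$, which concentrates around $F(v)T$, and since $\eps_v^2 F(v)T = \Theta(K)$ (with small constant), a Cauchy--Schwarz step over the uniformly random good arm shows the learner's guess for $a_v$ is correct with probability bounded below $1$. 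Consequently it plays a suboptimal arm on a constant fraction of the $\approx f(v)T$ decision rounds for context $v$, incurring expected regret $\Omega(\eps_v f(v) T) = \Omega(\sqrt{KT}\, f(v)/\sqrt{F(v)})$ from context $v$ alone.

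Summing over all contexts gives
$$\E[\Reg] \;=\; \Omega\!\left(\sqrt{KT}\sum_{v\in V}\frac{f(v)}{\sqrt{\sum_{w\in\cal I(v)}f(w)}}\right) \;=\; \Omega\!\left(\sqrt{\nu_2(G)\,KT}\right),$$
where the final equality is exactly the definition of $\nu_2(G)$ evaluated at the optimizer $f$. I expect the main obstacle to be the information-theoretic step: one must argue that cross-learning does not let the learner ``amortize'' information about context $v$ beyond what $F(v)$ permits — that observing arm $i$'s reward simultaneously in all of $\cal O(w)$ does not distinguish $a_v$ any faster than $F(v)T$ independent samples would — which is precisely what licenses the clean per-context decomposition of the KL divergence. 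Secondary care is needed for the concentration of the (random) number of in-neighborhood rounds around $F(v)T$ and for truncating $\eps_v$ on contexts $v$ with $F(v)$ so small that the nominal gap would exceed $1/2$.
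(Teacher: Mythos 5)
Your proposal is correct and matches the paper's proof essentially step for step: the same hard instance (contexts drawn from a maximizer $f$ of the $\nu_2$ program, an independent planted-good-arm bandit instance in each context with gap $\Theta(\sqrt{K/(g(c)T)})$ where $g(c)=\sum_{c'\in\cal I(c)}f(c')$ is the in-neighborhood mass), the same per-context regret contribution $\Omega(f(c)\sqrt{KT/g(c)})$, and the same summation yielding $\Omega(\sqrt{\nu_2(G)KT})$. The only cosmetic difference is that the paper packages the per-context information-theoretic step as a per-round regret lemma for standard bandits (Lemma \ref{lem:mab-lb-strong}, proved by invoking a best-arm-identification lower bound), whereas you carry out the KL/Pinsker change-of-measure directly; both arguments rely on the same factorization of the likelihood across contexts and the same concentration of the number of in-neighborhood rounds around $g(c)T$.
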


Note also Theorem \ref{thm:partial-stoch-lb} holds as a lower bound in the adversarial rewards and stochastic contexts setting. Although the regret benchmarks differ slightly between the adversarial rewards and stochastic rewards setting, they differ by at most $\tilde O(\sqrt{T})$, which is subsumed by the  regret bound in Theorem \ref{thm:partial-stoch-lb}. See Appendix \ref{sec:benchmark_app} for more details.

{Roughly, the proof of Theorem \ref{thm:partial-stoch-lb} proceeds as follows. For each context, we choose a ``hard'' distribution of rewards such that any algorithm that has only seen $t$ samples of rewards must incur $\Omega(\sqrt{K}/t)$ expected regret in their next round. Now, if we fix a distribution $f(c)$ over contexts, then after $t$ rounds, we expect to have seen approximately $g(c)t$ total samples of rewards from context $c$, where $g(c) = \sum_{c' \in \cal{I}(c)}f(c')$. Over all $T$ rounds, our regret is therefore at least
$\sum_{c=1}^Cf(c)T\sqrt{\frac{K}{g(c)T}}\,.$ 
Taking the supremum over $f$, we find that the total expected regret is at least $\Omega(\sqrt{\nu_2(G)KT})$; see the definition of $\nu_2(G)$ in Definition \ref{def:nu_2}.}

\subsection{Regret Lower Bound with Stochastic Rewards and  Adversarial  Contexts}
We now present our second lower bound for the setting with stochastic rewards and adversarial contexts. When we allow the contexts to be adversarially chosen, we can improve the lower bound to $\Omega(\sqrt{\lambda(G)KT})$. 

\begin{theorem}[Lower Bound with Stochastic Rewards and  Adversarial  Contexts]\label{thm:partial-adv-lb}
Any learning algorithm solving the contextual bandits problem with  cross-learning (for a fixed CL graph $G$) with stochastic rewards and adversarial contexts must incur regret $\Omega(\sqrt{\lambda(G)KT})$, where $\lambda(G)$  is the maximum acyclic subgraph number of graph $G$  and is defined in Definition \ref{def:acyclic}.
\end{theorem}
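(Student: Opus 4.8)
The plan is to mirror the construction in the proof of Theorem \ref{thm:advlb}, but to exploit the acyclic ordering witnessed by $\lambda(G)$ so that $\lambda(G)$ (rather than $C$) independent hard bandit instances can be embedded. First I would let $v_1, v_2, \dots, v_{\lambda}$ (with $\lambda = \lambda(G)$) be a maximum acyclic subgraph of $G$, ordered as in Definition \ref{def:acyclic} so that there is no edge $v_i \to v_j$ whenever $i > j$; equivalently, playing any action in context $v_b$ reveals, among these chosen vertices, only rewards of contexts $v_{b'}$ with $b' \ge b$. I would then split the $T$ rounds into $\lambda$ epochs of length $T/\lambda$ and have the adversary present the contexts in \emph{reverse} order: $v_{\lambda}$ during the first epoch, $v_{\lambda - 1}$ during the second, and so on, with $v_1$ in the last epoch, never presenting any context outside $\{v_1, \dots, v_\lambda\}$. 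For each $j$ I would independently draw reward distributions $\F_i(v_j)$ from the hard distribution of Lemma \ref{lem:mab-lb}; these are stochastic, as required by the theorem.

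The crux is to argue that cross-learning confers no advantage in this instance. I would establish the key claim that, for every context $v_j$, the only observations carrying information about which arm is optimal in $v_j$ are the self-loop samples obtained by actually playing arms during $v_j$'s own epoch. Indeed, the reward distributions of distinct contexts are drawn independently, so any cross-learned reward of another context $v_b$ is uninformative about the instance in context $v_j$; and by the acyclic ordering, a sample of $v_j$'s rewards is produced only when an action is played in a context $v_{j'}$ admitting an edge $v_{j'} \to v_j$, which forces $j' \le j$, i.e.\ it occurs during $v_j$'s epoch (when $j'=j$) or strictly after it (when $j'<j$, which in reverse order is a later epoch). Samples arriving after $v_j$'s epoch cannot reduce the regret already incurred in that epoch. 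Hence, within each epoch the learner is effectively solving a standalone $K$-armed bandit instance over $T/\lambda$ rounds, seeing exactly the self-loop samples it would see in the non-contextual problem.

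With the reduction in hand, the bound follows by summing. Since the $\lambda$ instances are independent and the best stationary policy plays the optimal arm for context $v_j$ throughout $v_j$'s epoch, the total regret $\Reg$ decomposes as a sum of per-epoch regrets, and Lemma \ref{lem:mab-lb} forces each epoch to contribute expected regret $\Omega(\sqrt{KT/\lambda})$. Summing over the $\lambda$ epochs yields $\lambda \cdot \Omega(\sqrt{KT/\lambda}) = \Omega(\sqrt{\lambda(G)KT})$, as claimed.

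I expect the main obstacle to be the rigorous formalization of the ``cross-learning is useless'' claim: one must show that an algorithm for the contextual problem can be simulated by $\lambda$ independent algorithms for the standalone instances that use only self-loop samples, so that any per-epoch improvement from cross-learning would contradict Lemma \ref{lem:mab-lb}. This requires carefully invoking the independence of the reward distributions across contexts (to discard cross-learned rewards of other contexts) together with the reverse-order, no-backward-edge structure (to discard cross-learned rewards of $v_j$ that arrive only in later epochs). The divisibility detail that $\lambda$ divides $T$ is routine and can be absorbed into the constants.
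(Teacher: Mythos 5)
Your proposal is correct and follows essentially the same route as the paper: embed $\lambda(G)$ independent hard bandit instances from Lemma \ref{lem:mab-lb} into the vertices of a maximum acyclic subgraph, schedule the epochs so that cross-learned information about a context can only arrive during or after that context's own epoch, and conclude by summing the per-epoch $\Omega(\sqrt{KT/\lambda(G)})$ lower bounds. The only cosmetic difference is that you keep the ordering of Definition \ref{def:acyclic} and present the contexts in reverse, whereas the paper relabels the acyclic ordering so that forward edges are absent and presents the contexts in the natural order --- the two constructions coincide after relabeling, and your explicit attention to why late-arriving cross-learned samples cannot help is a welcome elaboration of a step the paper states more tersely.
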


Note that when the graphs are undirected, $\lambda(G) = \iota(G)$ (since in that case, the definition of acyclic subgraph and independent set coincide), and therefore $\lambda(G) = \nu_2(G) = \iota(G)$ (by Lemma \ref{lem:orderlemma}). It follows that when all $G_i$ are undirected and equal, the lower bound of Theorem \ref{thm:partial-adv-lb} matches the upper bound of Theorem \ref{thm:exppcross} in the setting where contexts are stochastic. Likewise, as stated earlier, when $G$ is the disjoint union of $r$ cliques, all of our graph invariants coincide, and our lower bounds are tight. In other settings and for other feedback structures an instance-dependent gap between the best upper bound and best lower bound persists; reducing this gap is an interesting open problem. %{\color{red} Please compare this discussion with the earlier discussion on our bounds being tight. let's decide about where to put this discussion.}

\section{Discussion on  Applications of Cross-Learning and Implementation of Proposed Algorithms  }\label{sect:applications}

In this section, we discuss how to apply our results on cross-learning to the problem of how to bid in a first-price auction. We show that our algorithms yield a non-trivial improvement over naively applying $S$-EXP3 or $S$-UCB. We further discuss how to efficiently implement our algorithms when the number of contexts is infinite. Before presenting our results for first-price auctions,  we provide other applications that enjoy cross-learning between contexts and thus fit our framework.

\subsection{Applications with Cross-learning: Beyond Bidding in First-price Auctions }
The following applications can be modeled as contextual bandit problems with cross-learning between contexts. Here, we present an overview of these applications  and we defer the details of how to apply our algorithms to these applications to Appendix \ref{sec:extra_app}.

\begin{enumerate} [leftmargin=*]
\item \textbf{Multi-armed bandits with exogenous costs: } Consider a multi-armed bandit problem where at the beginning of each round $t$, a cost $s_{i,t}$ for playing arm $i$ at this round is publicly announced. That is, choosing arm $i$ this round results in a net reward of $r_{i,t} - s_{i,t}$. This captures settings where, for example, a buyer must choose every round to buy one of $K$ substitutable goods -- they are aware of the price of each good (which might change from round to round) but must learn over time the utility each type of good brings them. 

This is a contextual bandits problem where the context in round $t$ is the $K$ costs $s_{i, t}$ ($i\in [K]$) at this time. Cross-learning between contexts is present in this setting: given the net utility of playing action $i$ with a given up-front cost $s_{i}$, one can infer the net utility of playing $i$ with any other up-front cost $s_{i}'$. For the problem of multi-armed bandits with exogenous costs, standard contextual bandit algorithms get regret $O(T^{(K+1)/(K+2)}K^{1/(K+2)})$. Our algorithms get regret $\tilde{O}(\sqrt{KT})$, which is tight. See Appendix \ref{sec:extra_app} for more details. 

\item
\textbf{Dynamic pricing with variable cost:} Consider a dynamic pricing problem where a firm offers a service (or sells a product) to a stream of customers who arrive sequentially over time. Consumers have private and independent willingness-to-pay and the cost of serving  a customer is exogenously given and customer dependent. After observing the cost, the firm decides on what price to offer to the consumer who decides whether to accept the service at the offered price. The optimal price for each consumer is contingent in the cost; for example, when demand is relatively inelastic consumers that are more costly to serve should be quoted higher prices. This extends dynamic pricing problems to cases where the firm has exogenous costs (see, e.g., \citealt{denBoer2016pricing} for an overview of dynamic pricing problems).

This is a special case of the multi-armed bandits with exogenous costs problem defined earlier, and hence an instance of contextual bandits with cross-learning.

\item
\textbf{Sleeping bandits: } Consider the following variant of ``sleeping bandits,'' where there are $K$ arms and in each round some subset $S_t$ of these arms are awake. The learner can play any arm and observe its reward, but only receives this reward if they play an awake arm. This problem was originally proposed in \cite{kleinberg2010regret}, where one of the motivating applications is ecommerce settings where not all sellers or items (and hence ``arms'') might be available every round.

This is a contextual bandits problem where the context is the set $S_t$ of awake arms. Again, cross-learning between contexts is present in this setting: given the observation of the reward of arm $i$, one can infer the received reward for any context $S_t'$ by just checking whether $i \in S_t'$. For our variant of sleeping bandits, standard contextual bandit algorithms get regret $\tilde{O}(\sqrt{2^K KT})$. Our algorithms get regret $\tilde{O}(\sqrt{KT})$, which is tight. By applying our algorithms, we can achieve regret $\tilde{O}(\sqrt{KT})$ in the original sleeping bandits setting studied in \cite{kleinberg2010regret}, which recovers their results and is similarly tight. 

\item
\textbf{Repeated Bayesian games with private types: } Consider a player participating in a repeated Bayesian game. Each round the player learns their (private and independent) type for the current game, performs some action, and receives some utility (which depends on their type, their action, and the other players' actions). Again, this can be viewed as a contextual bandit problem where a player's types are contexts, actions are actions, and utilities are rewards, and once again this problem allows for cross-learning between contexts (as long as the player can compute their utility based on their type and all players' actions). 
\end{enumerate}

\subsection{Bidding in first-price auctions}

In the problem of learning to bid in a first-price auction, every round $t$ (for a total of $T$ rounds) an item is put up for auction. This item has value $v_t \in [0, 1]$ to our bidder. Based on $v_t$, our bidder submits a bid $b_t \in [0, 1]$. Simultaneously, other bidders submit bids for this item; we let $h_t$ be the highest bid of the other bidders in the auction. If $b_t \geq h_t$, the buyer receives the item and pays $b_t$, obtaining an utility of $v_t - b_t$; otherwise, the buyer does not receive the item and pays nothing, obtaining a utility of zero. More formally, the net utility is $r_{b,t}(v) = (v-b) \cdot \mathbbm{I}(b \geq h_t)$. The buyer only learns whether or not they receive the item and how much they pay -- notably, they do not learn $h_t$ (i.e., this is a non-transparent first price auction). The bidder's goal is to maximize their total utility (total value of items received minus total payment) over the course of $T$ rounds.

{As stated in the introduction, this problem can be seen as a contextual bandits problem for the bidder where the context $c$ is the bidder's value for the item, the action is their bid, and their reward is their net utility from the auction: 0 if they do not win, and their value for the item minus their payment $p$ if they do win. This problem, indeed, 
 allows for cross-learning between contexts: The net utility $r_{i,t}(c')$ that would have been received if they had value $c'$ instead of value $c$ is just $(c'-p)\cdot \mathbbm{I}(\mbox{win item})$, which can be  computed from the outcome of the auction assuming the value and highest competing bid (that influences $\mathbbm{I}(\mbox{win item})$) are independent of each other.}

Here, we assume that the value $v_t$ and the highest competing bid $h_t$ are independently drawn each round from distributions $\D_v$ and $\D_h$, respectively, where both distributions are unknown to the bidder. The independence assumption is motivated by the fact that, in online advertising markets, most advertisers base bids on cookies, which are bits of information stored on users' browsers. Because cookies are private, cookie-based bids are typically weakly correlated. {\color{black}In Section~\ref{sect:experiments}, we conduct some experiments using data from a major advertising platform and observe that our cross-learning algorithms that assume complete cross-learning between values perform well even  when values are not perfectly independent of the other bids. Nonetheless, conservative learners might be still concerned about such correlation. Under such correlation, with the same action/bid, the chance of winning (i.e., $\mathbbm{I}(\mbox{win item})$) under two values that are far from each other may not be the same. For instance, when there is a positive correlation between values and highest competing bids, as the value increases, the highest competing bid may increase as well, reducing the chance of winning. 
To handle this, such learners can only allow for cross-learning between close values. We remark that, from the theoretical perspective, when the correlation between values and bids is arbitrary, cross-learning is impossible and the decision maker cannot do better than running a different learning algorithm for each context. A promising research direction is to incorporate correlation by introducing a statistical or behavioral model to capture the dependency between bids and values.}

Naively applying $S$-UCB to our problem by discretizing the value space and bid space into $C$ and $K$ pieces respectively results in a regret bound of $\tilde{O}(\sqrt{CKT} + T/C + T/K)$ (here the last two terms come from discretization error). Optimizing $C$ and $K$, we find that when $C = K = T^{1/4}$, we can achieve $\tilde{O}(T^{3/4})$ regret in this way. 
On the other hand, by taking advantage of (complete) cross-learning between contexts and  applying \ucbcross{}, after discretizing the bid space into $K$ pieces, results in a regret bound of $\tilde{O}(\sqrt{KT} + T/K)$. By optimizing this,  we get an algorithm which achieves $\tilde{O}(T^{2/3})$ regret. It follows from a reduction to known results about dynamic pricing that any algorithm must incur $\Omega(T^{2/3})$ regret when learning to bid (even when the value $v$ is fixed) -- see Appendix \ref{sect:fpa_lb} for details.\footnote{The regret lower bound of $\Omega(T^{2/3})$ is driven by our binary feedback structure. Under the binary feedback structure, which is commonly assumed in the literature, a bidder can only learn whether they win or lose in an auction. The follow-up paper \cite{han2020optimal} shows that a regret bound of $\tilde O(\sqrt T)$ is attainable when the highest bid is revealed at the end of each auction to all the bidders who lost the auction.}

Interestingly, in the case of bidding in first-price auctions, the decision maker can also potentially cross-learn across different actions/bids. For example, if the decision maker wins when submitting a bid $b_t$, then they simultaneously learn that any higher bid $b'$ would also win the auction. Conversely, if the decision maker does not win, then they learn that lower bids also would necessarily lose in the auction. While our algorithm does not explicitly take into account cross-learning across actions, the previous lower bound shows that, in the worst case, cross-learning across actions does not lead to any additional benefit (in terms of lower regret) if we are already cross-learning across contexts. We emphasize that our algorithms apply when the auctioneer runs other non-truthful auctions.

\subsection{Implementation of Proposed Algorithms}

We conclude with a brief note on implementation efficiency of our algorithms. Even though, under complete cross-learning, the regret bounds we prove in Section \ref{sect:crosslearning} do not scale with $C$, note that the computational complexity of all three of our algorithms from Section \ref{sect:crosslearning} (\ucbcross{}, \expcrossk{}, and \expcrossu{}) scales with the number of contexts $C$: both algorithms have time complexity $O(C + K)$ per round and space complexity $O(CK)$.  In many of the above settings, the number of contexts can be very large. For example, when the space of contexts is the interval $[0, 1]$, the number of contexts is infinite. However, these settings often also have additional structure which let us run these same algorithms with improved complexity. 

Most generally, for all the settings we consider, the observed reward is always affine with respect to a function $\rho(c)$ mapping a context into $\R^{d}$ for some small dimension $d$. The function $\rho$ is computable by the learner. That is, for each $i$ and $t$, it is possible to write $r_{i,t}(c) = a_{i,t}^{\top}\rho(c) + b_{i,t}$, where $a_{i,t} \in \R^d$ and $b_{i,t} \in \R$; moreover, the coefficients $a_{I_t, t}$ and $b_{I_t, t}$ are directly revealed to the learner each round. It in turn follows that the averages $\overline{r}_{i, t}(c)$ stored by \ucbcross{} are simply linear functions of $\rho(c)$. Since there is one such function for each arm $i$, this requires a total of $O(Kd)$ space (i.e., we simply store the running averages $\overline a_{i,t}$ and $\overline b_{i,t}$ and then determine the average reward using the formula $\overline{b}_{i, t} = \overline{a}_{i, t}^{\top}\rho(c) + \overline{b}_{i, t}$). Similarly, the coefficients can be updated each round in $O(d)$ time simply by updating the average for $I_t$. For example, for $b_{i,t}$ the update is given by
$\overline{b}_{I_t, t} = \frac{\tau_{I_t, t-1}\overline{b}_{I_t, t-1}+ b_{I_t, t}}{\tau_{I_t, t-1} + 1}.$

Likewise, the weights $w_{i,t}(c)$ stored by \expcrossk{}, for example, are always of the form $\exp(x_{i,t}\rho(c) + y_{i,t})$, and again it suffices to just maintain a linear function of $\rho(c)$ (with the caveat that to compute the estimators, we must be able to efficiently take expectations over our known distribution on contexts).

\section{Empirical Evaluation}\label{sect:experiments}

In this section, we empirically evaluate the performance of our contextual bandit algorithms on the problem of learning how to bid in a first-price auction. 

Recall that our cross-learning algorithms rely on cross-learning between contexts being possible: if the outcome of the auction remains the same, the bidder can compute their net utility they would receive given any value they could have for the item. This is true if the bidder's value for the item is independent of the other bidders' values for the item. Of course, this assumption (while common in much research in auction theory) does not necessarily hold in practice. We can nonetheless run our contextual bandit algorithms (with complete cross-learning) as if this were the case, and compare them to existing contextual bandit algorithms which do not make this assumption.

Our basic experimental setup is as follows. We take existing first-price auction data from a large ad exchange that runs first-price auctions on a significant fraction of traffic, remove one participant (whose true values we have access to), substitute in one of our bandit algorithms for this participant, and replay the auction. This experiment answers the question ``how well would this (now removed) participant do if they instead ran this bandit algorithm?'' 

We collected anonymized data from 10 million consecutive auctions from this ad exchange, which were then divided into 100 groups of $10^5$ auctions. To remove outliers, bids and values above the 90\% quantile were removed, and remaining bids/values were normalized to fit in the $[0, 1]$ interval.\footnote{Our numerical results in Appendix \ref{sec:numercis} show that the performance of our algorithms is robust to outliers and, thus, not sensitive to how outliers are handled.} We then replayed each group of $10^5$ auctions, comparing the performance of our  algorithms with cross-learning (\ucbcross{}, \expcrossk{}, and \expcrossu{}) and the performance of classic contextual bandits algorithms that take no advantage of cross-learning ($S$-EXP3, and $S$-UCB1). {\color{black} To run \expcrossk{}, we do not assume that we know the distribution over contexts. Instead, we replace the estimator in  \expcrossk{} with its empirical version, presented  at the end of Section \ref{sec:unknown_dist}. More specifically, we consider the following estimator
$
\tilde{r}_{i, t}(c) = (r_{i,t}(c)/ \tilde{D}_{i, t})\mathbbm{I}(I_t = i),
$
where $\tilde{D}_{i, t} = \sum_{c'=1}^C \widehat{\Pr}[c']\cdot p_{i, t}(c')
$  does not require knowledge of the distribution over contexts.  Here,  $\widehat{\Pr}[c']$ is the empirical estimate (sample mean) of  the true probability $\Pr[c']$ of context $c'$. }

All algorithms considered here  require a discretized set of actions. Thus,  allowable bids are discretized to multiples of $0.01$. Parameters for each of these algorithms (including level of discretization of contexts for $S$-EXP3 and $S$-UCB1) were optimized via cross-validation on a separate data set of $10^5$ auctions from the same ad exchange.

\begin{figure}[hbt]
\begin{center}
\includegraphics[scale=0.2]{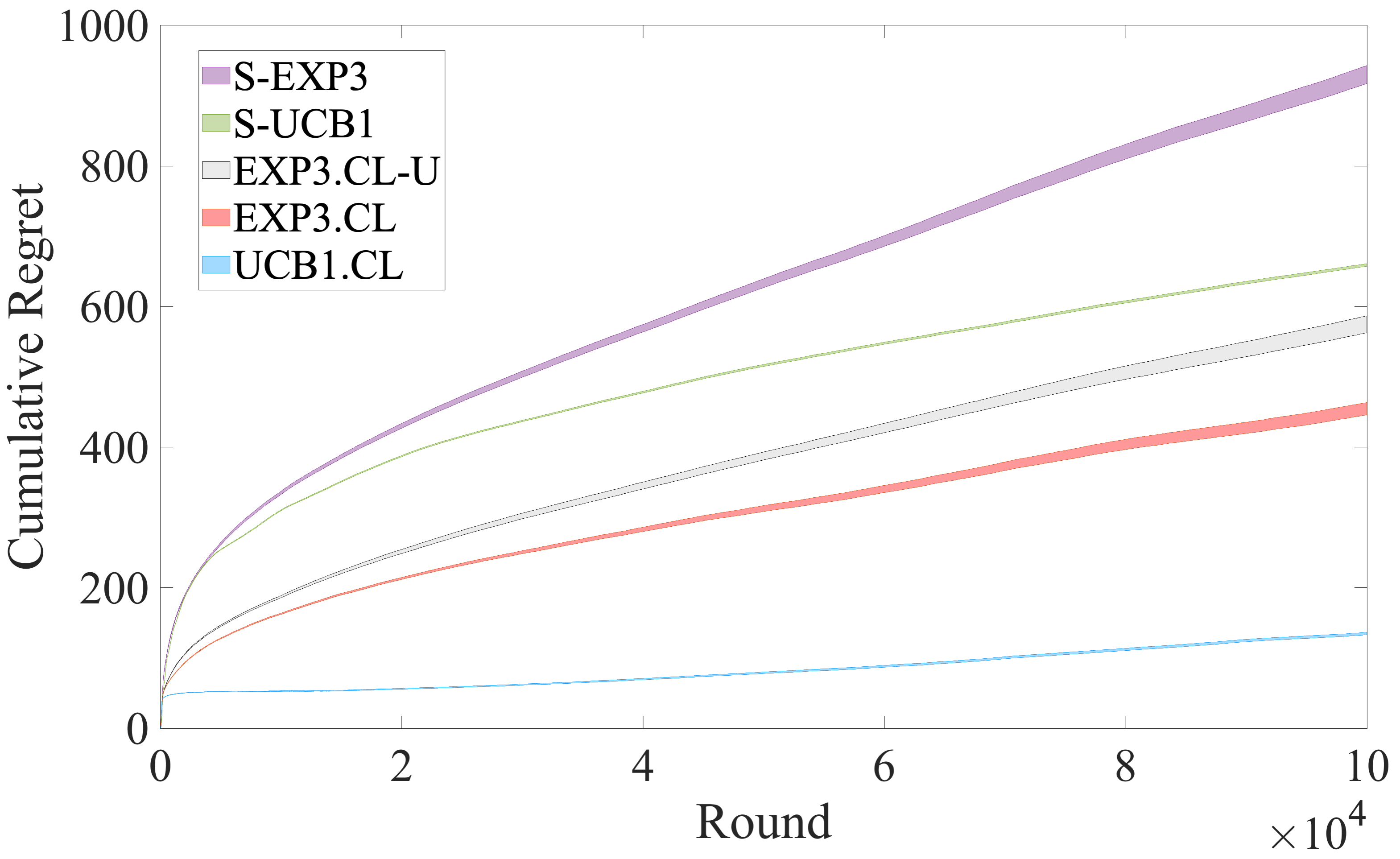}
\end{center}
\caption{Graph of average cumulative regrets of various learning algorithms (y-axis) versus time (x-axis). Shaded regions indicate 95\% confidence intervals. }
\label{fig:regrets}
\end{figure}

The results of this evaluation are summarized in Figure \ref{fig:regrets}, which plots the average cumulative regret of these algorithms over the $10^5$ rounds. To compute the regret of   all algorithms, we consider the best in hindsight benchmark (i.e., for each context/value, we choose the bid that performs the best for that value in hindsight after observing the bidder's values and highest competing bids in   all $T$ rounds). This regret benchmark, which does not require any assumption on the stochasticity of values or the highest competing bids, matches our stationary policy  under adversarial rewards and  adversarial contexts.   
The three algorithms that take advantage of cross-learning (\ucbcross{}, \expcrossk{}, and \expcrossu{}) significantly outperform the two algorithms that do not ($S$-EXP3 and $S$-UCB1). Note that  \ucbcross{} outperforms   \expcrossk{} and \expcrossu{}. {\color{black}Furthermore, it is worth highlighting that  \expcrossk{} performs better than \expcrossu{}. Recall that in our empirical studies, both \expcrossk{} and  \expcrossu{} do not know context/value distribution.}    

What is surprising about these results is that cross-learning works at all, let alone gives an advantage, given that the basic assumption necessary for cross-learning -- that values are independent from other players' bids, so that the learner can predict what would have happened if the value was different -- does not hold. Indeed, for this data, the Pearson correlation coefficient between the values $v$ and the maximum bids $r$ of the other bidders is approximately $0.4$. This suggests that these algorithms are somewhat robust to errors in the cross-learning hypothesis. It is an interesting open question to understand this phenomenon theoretically.

\section{Conclusion}

In this paper, we studied the contextual multi-armed bandit problem with cross-learning between contexts. Our model can be applied to various applications including bidding in non-truthful auctions,  dynamic pricing with variable  costs, and sleeping bandits.  We show that in all of these applications cross-learning between contexts can lead to a non-trivial improvement over the existing algorithms that do not consider cross-learning. 

We designed different learning algorithms that are tailored to distinct environments. These environments vary in several fronts: (i) how rewards and contexts are generated (stochastically versus adversarially), and (ii) to what extend cross-learning is possible (complete versus partial cross-learning). For all of these environments, we proposed effective learning algorithms that  harness the benefit of cross-learning.

An interesting future direction is to design algorithms that take advantage of cross learning between both actions and contexts. Learning algorithms might exploit the particular structure of the problem at hand to obtain regret bounds with better dependence on the number of actions. For example, in some settings the reward function could be Lipschitz continuous or concave in the actions. When rewards are adversarial and contexts are stochastic, we designed an algorithm that under complete cross-learning obtains regret $\tilde{O}(\sqrt{KT})$ when the distribution of contexts is known. It is an interesting research direction to obtain similar regret bounds when the distribution of contexts is unknown to the learner. Finally, an application of cross-learning that has received a considerable amount of attention lately is the problem of bidding in non-truthful auctions. Our algorithms for this problem assume that values are independent of the highest competing bid. Although these algorithms perform well in practice, a promising future direction is to study whether more value can be captured by algorithms that explicitly account for correlation between values and bids.

\bibliographystyle{ACM-Reference-Format}
\bibliography{ref}

%\ECSwitch

%\ECDisclaimer
%%%%%%%%%%%%%%%%%%%%%%%%%%%%%%%%%%%%%%%%%%%%%%%%%%%%%%%%%%

%%% Main head for the e-companion
\newpage
\begin{APPENDICES}

%\DoubleSpacedXI
%\SingleSpacedXI
\OneAndAHalfSpacedXI

\section{Regret in contextual bandits}\label{sect:regret}

{In this section, we elaborate on our discussion of regret in Section \ref{sect:contextual_prelims}, contrasting the difference between the different notions of regret for different settings. }

Recall that we define the regret of an algorithm $\cal A$ in the contextual setting as the difference between the performance of our algorithm and the performance of the best stationary strategy $\pi$. In other words,
$$\Reg(\cal A) = \sum_{t=1}^{T}r_{\pi(c_t),t}(c_t) - \sum_{t=1}^{T}r_{I_{t}, t}(c_t).$$
However, when rewards are adversarial and  contexts are stochastic, there are two different natural ways to define ``the best stationary policy'' $\pi$. The first maximizes the reward of this strategy for the specific contexts $c_t$ we observed in our run of algorithm $\cal A$:
$$\pi'(c) = \arg\max_{i\in [K]} \sum_{t=1}^{T}r_{i,t}(c)\mathbbm{I}({c_{t} = c})$$
The second way simply maximizes the reward of this strategy in expectation over all time 
$$\pi(c) = \arg\max_{i\in [K]} \sum_{t=1}^{T}r_{i,t}(c).$$ 
Note that the policy $\pi$  maximizes the expectation of performance over contexts $\sum_{t=1}^T \E_{c_t \sim \D}[ r_{\pi(c_t), t}(c_t)] = \sum_{c \in [C]} \Pr[c] \sum_{t=1}^T r_{\pi(c), t}(c)$.

These two stationary strategies give rise to two different definitions of regret. We call the regret against strategy $\pi'$ the \textit{ex post regret} $\Reg_{\post}(\cal A)$ (and denote the associated strategy by $\pi_{\post}$), and we call the regret against strategy $\pi$ the \textit{ex ante regret}, $\Reg_{\ante}(\cal A)$ (and denote the associated strategy by $\pi_{\ante}$). This captures the idea that to the adversary at the beginning of the game (who knows all the rewards, but not when each context will occur), the best stationary strategy in expectation is $\pi_{\ante}$. On the other hand, after the game has finished, the best stationary strategy in hindsight is $\pi_{\post}$. 

In this paper, our bounds for the adversarial rewards and stochastic contexts are for {ex ante regret}.  
One reason for this is that, while it is possible to eliminate the dependence on $C$ in the ex ante regret, it is impossible to do so for the ex post regret. In particular, for a large enough number of different contexts $C$, it is impossible to get ex post regret that is sublinear in $T$.

{
\begin{theorem}\label{theorem:expost-lower}
Under adversarial rewards and stochastic contexts,  for any algorithm $\cal A$, there is an instance of the contextual bandits problem with cross-learning where $\liminf_{C\rightarrow \infty} \E[\Reg_{\post}(\cal A)] \geq T/2$. 
\end{theorem}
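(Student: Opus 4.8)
The plan is to exhibit a distribution over instances on which \emph{every} algorithm, even one exploiting cross-learning, suffers expected ex post regret equal to $T/2$; the probabilistic method then yields a single fixed instance with $\E[\Reg_{\post}(\cal A)] \geq T/2$. The key idea is that when the number of contexts is enormous, so that no context is ever seen twice, the ex post benchmark $\pi_{\post}$ becomes clairvoyant: it may select a different arm for every round, and hence effectively plays the best arm in hindsight on each individual round. No online rule can match a per-round clairvoyant benchmark when the identity of the best arm is random and independent across rounds, and the construction is designed so that cross-learning does nothing to change this.

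Concretely, I would take $K=2$ arms and let the contexts $c_t$ be drawn i.i.d.\ from a continuous distribution (e.g.\ uniform on $[0,1]$), so that with probability one the $T$ realized contexts are all distinct. For the rewards, let nature draw a ``good arm'' $g_t \in \{1,2\}$ uniformly and independently each round, and set $r_{i,t}(c) = \mathbbm{I}(i = g_t)$ for every context $c$ (the rewards are constant in $c$, which is a perfectly valid, if degenerate, cross-learning instance). First I would verify the benchmark: since every context is realized in exactly one round, $\pi_{\post}(c_t) = \arg\max_i r_{i,t}(c_t) = g_t$, so the benchmark collects reward $1$ every round and its total reward is exactly $T$.

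Next I would bound the learner's reward. The crucial observation is that $g_t$ is independent of all information available to the algorithm at the moment it commits to $I_t$: the past observations (including everything revealed by cross-learning in rounds $1,\dots,t-1$) depend only on $g_1,\dots,g_{t-1}$, and the current context $c_t$ is independent of $g_t$. Cross-learning in round $t$ reveals $r_{I_t,t}(c')=\mathbbm{I}(I_t=g_t)$ for all $c'$, but this is disclosed only \emph{after} $I_t$ is chosen, and it concerns only round $t$, whose good arm is independent of every future $g_s$. Consequently $\Pr[I_t = g_t] = 1/2$ regardless of the algorithm, so the learner's expected total reward is $T/2$ and the expected ex post regret (over the random $g_t$, the contexts, and the algorithm's coins) equals $T - T/2 = T/2$. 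Averaging over the choice of $(g_t)_{t=1}^T$ then guarantees a fixed reward assignment for which $\E[\Reg_{\post}(\cal A)] \geq T/2$.

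I expect the main obstacle to be the careful independence bookkeeping underlying the claim $\Pr[I_t = g_t] = 1/2$ --- in particular, making precise that cross-learning, although it reveals the played arm's reward across all contexts, carries no predictive information about future rounds (because the good arms are independent across rounds) and none about the current round before the commitment is made. A secondary point is ensuring the benchmark really collects $T$: this requires the contexts to be distinct, which is why I use a continuous (or sufficiently large) context space; with only a small number of repeated contexts the benchmark's advantage degrades, so the construction genuinely relies on the number of contexts being large, consistent with the theorem's framing.
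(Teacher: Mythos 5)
Your proposal is correct and follows essentially the same construction as the paper: two arms, a context space large enough that no context repeats, a uniformly random "good arm" each round carrying reward $1$, so the ex post benchmark collects $T$ while independence across rounds caps any learner's expected reward at $T/2$. The only cosmetic difference is that you make the rewards constant across contexts and draw contexts from a continuous distribution, whereas the paper randomizes the good arm per $(t,c)$ pair over a large finite context set; both render cross-learning useless for the same reason.
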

\begin{proof}{Proof of Theorem \ref{theorem:expost-lower}}
We will consider an instance of the problem where there are $K=2$ actions and $C$ contexts, where the distribution $\D$ is uniform over all $C$ contexts. We will choose $C$ to be large enough so that with high probability all the observed contexts $c_t$ are distinct. The adversary will assign rewards as follows. For each round $t$ and context $c$, with probability $1/2$, the adversary sets $r_{1,t}(c) = 1$ and $r_{2,t}(c) = 0$, and with probability $1/2$ sets $r_{1,t}(c) = 0$ and $r_{2,t}(c) = 1$. 

Now consider the best strategy $\pi_{\post}$ in hindsight. Since each context only appears once with high probability, and since there is always an arm with reward $1$, for any context and any time, $\pi_{\post}$ will receive total reward $T$ as $C$ goes to $\infty$. To see this, let $B$ the event that all contexts are distinct, i.e., $\left|\{t \in [T] : c_t = c\} \right | \le 1 $ for all $c \in C$. The expected performance of best stationary policy $\pi_{\post}$ can be lower bounded as follows
\begin{align*}
    \mathbb E\left[ \sum_{t=1}^T r_{\pi_{\post}(c_t),t}(c_t) \right]
    \ge \mathbb E\left[ \sum_{t=1}^T r_{\pi_{\post}(c_t),t}(c_t) \mathbbm{I}(B)\right]
    = T \mathbb{P} (B)\,,
\end{align*}
where the first inequality follows because rewards are non-negative and the equality holds because the best strategy with the benefit of hindsight is to pick the arm with reward one because all contexts are distinct. Additionally, the probability that the contexts are distinct can be lower bounded by
\begin{align*}
    \mathbb{P} (B) = \frac{C!}{(C-T)! C^T} \ge \left( \frac C{C-T}\right)^{C-T} \exp(-T) \ge \exp(-T^2/C) \ge 1 - \frac{T^2}{C}\,,
\end{align*}
where the first equation follows from a combinatorial argument (out of all $C^T$ possible sample paths of contexts we have $C!/(C-T)!$ distinct ones). The first inequality follows from writing $C!/(C-T)! = \exp\left( \sum_{x=C-T+1}^C \log(x) \right)$ and then using the integral bound for summations. The second inequality holds because $\left( C/(C-T)\right)^{C-T} = \exp((C-T)\log(C/(C-T))$ and using that $\log(x) \ge 1 - 1/x$, and the last because $\exp(x) \ge 1 + x$. Combining both inequalities, we obtain that
\begin{align}\label{eq:worst-case-lb}
    \mathbb E\left[ \sum_{t=1}^T r_{\pi_{\post}(c_t),t}(c_t) \right]\ge T \left (1 - \frac{T^2}{C}\right)\,.
\end{align}

On the other hand, since each $r_{i,t}$ is completely independent of the rewards from previous rounds, the maximum expected reward any learning algorithm can guarantee is at most $T/2$ as $C$ goes to infinity. To see this, denote by $N_c = \left|\{t \in [T] : c_t = c\} \right |$ the number of times that context $c \in [C]$ appears and by $t_j(c)$ the time period corresponding to the $j$th occurrence of context $c$ for $j \in [N_c]$. The expected reward of a learning algorithm $\cal A$ can be written as follows:
\begin{align*}
    \mathbb E\left[ \sum_{t=1}^T r_{I_t,t}(c_t) \right]
    = \mathbb E\left[ \sum_{c=1}^C \sum_{t=1}^T r_{I_t,t}(c) \mathbbm{I}(c_t=c) \right]
    = \sum_{c=1}^C \mathbb E\left[ \sum_{j=1}^{N_c} r_{I_{t_j(c)},t_j(c)}(c) \right]\,.
\end{align*}
For every context $c \in [C]$, the first time the context is seen the best possible expected reward is $1/2$, which is obtained by playing any arm at random, because rewards are independent and, thus, the two arms are ex-ante identical to the learner. Therefore,
\[
 \mathbb E\left[ \sum_{j=1}^{N_c} r_{I_{t_j(c)},t_j(c)}(c) \right] \le \mathbb E\left[ \frac 1 2 \mathbbm{I}(N_c \ge 1) + (N_c - 1) \mathbbm{I}(N_c \ge 2) \right] = \mathbb E\left[ N_c \right] - \frac 1 2 \mathbb P(N_c \ge 1)\,,
\]
where the first inequality follows because the highest possible reward is one and the last equality because $(N_c - 1) \mathbbm{I}(N_c \ge 2) = (N_c - 1) \mathbbm{I}(N_c \ge 1) = N_c - \mathbbm{I}(N_c \ge 1)$. Now, using that $N_c$ is binomially distributed with $T$ trials as success probability $1/C$, we conclude that
\begin{align}\label{eq:worst-case-ub}
    \mathbb E\left[ \sum_{t=1}^T r_{I_t,t}(c_t) \right]
    &\le C \left( \mathbb E\left[ N_c \right] - \frac 1 2 \mathbb P(N_c \ge 1)\right)
    = T - \frac C 2 \left(1 - \left(1 - \frac 1 C\right)^T\right) \nonumber\\
    &= \frac T 2 + \frac 1 2 \left(T+ C \left(\left(1 - \frac 1 C\right)^T-1\right)\right) \nonumber\\
    &\le \frac T 2 + \frac 1 4 \frac {T^2}{C}\,,
\end{align}
where we used that $\mathbb E\left[ N_c \right] = T/C$ and $\mathbb P(N_c \ge 1) = 1-\mathbb P(N_c = 0) = 1 - (1-1/C)^T$ for the first equality, and the last inequality follows because $\left(1 - 1/C\right)^T = \exp(T \log(1-1/C)) \le \exp(-T/C) \le 1 - T/C + 1/2 (T/C)^2$ because $\log(1 - x)\le -x$ and $\exp(-x) \le 1- x + x^2/2$ for $x>0$. 

Putting everything together, from combining \eqref{eq:worst-case-lb} and \eqref{eq:worst-case-ub} it follows that any algorithm $\cal A$ must have 
\[
\liminf_{C\rightarrow \infty} \Reg_{\post}(\cal A) =
    \liminf_{C\rightarrow \infty} \mathbb E\left[ \sum_{t=1}^T r_{\pi_{\post}(c_t),t}(c_t) - \sum_{t=1}^T r_{I_t,t}(c_t) \right] 
    \ge \frac T 2\,. \Halmos
\]
\end{proof}
}

\subsection{Translating regret lower bounds from stochastic rewards to adversarial rewards}\label{sec:benchmark_app}

The previous section focuses on the stochasticity of the contexts. In this section, we focus on the stochasticity of the \textit{rewards} and how it affects our regret bounds. Specifically, we aim to show that any regret lower bound we prove in a model with stochastic rewards and contexts (see Theorem \ref{thm:partial-adv-lb}) continues to hold in a model with adversarial rewards and stochastic contexts (as long as this bound is at least $\tilde{\Omega}(\sqrt{T})$).

The reason why this is not entirely obvious is that the regret benchmarks we give for stochastic rewards and adversarial rewards are slightly different. When the rewards are stochastic, we compete against the ``pseudo-regret'' benchmark that plays the arm with the highest mean for each context, i.e.,
$$\pi_{\stoch}(c) = \arg\max_{i\in [K]} \mu_{i}(c).$$

When the rewards are adversarial, there are no reward distributions so the above strategy is undefined; instead, we play the best action in hindsight:
$$\pi_{\adv}(c) = \arg\max_{i \in [K]} \sum_{t=1}^{T} r_{i, t}(c).$$

In some sense, $\pi_{\stoch}$ and $\pi_{\adv}$ can be thought of as analogues to $\pi_{\post}$ and $\pi_{\ante}$ in thee previous section. 

\begin{theorem}
Assume that, for any algorithm $\mathcal{A}$, there is an instance of multi-armed bandits with learning with stochastic rewards and contexts where
$$\E\left[\sum_{t=1}^{T}(r_{\pi_{\stoch}(c_t), t}(c_t) - r_{I_t, t}(c_t))\right] \geq R\,,$$
where the expectation is with respect to rewards and contexts.  
Then, for any algorithm $\mathcal{A}$, there exists an instance of multi-armed bandits with learning with \textit{adversarial} rewards and stochastic contexts where
$$\E\left[\sum_{t=1}^{T}(r_{\pi_{\adv}(c_t), t}(c_t) - r_{I_t, t}(c_t))\right] \geq R - \sqrt{T\log(KT)}\,,$$
where the expectation is with respect to contexts and  the potential randomness in the adversarially  chosen rewards.   
\end{theorem}

\begin{proof}{Proof}

Fix an algorithm $\mathcal{A}$. Consider a stochastic bandits instance which achieves regret $R$. This bandits instance is parameterized by a distribution $\mathcal{D}$ over contexts and a distribution $\mathcal{F}_{i}(c)$ of rewards for each arm $i\in [K]$ and context $c\in [C]$.

We will now randomly sample a bandits instance with adversarial rewards and stochastic contexts as follows. The distribution over contexts will remain as $\mathcal{D}$. For each time step $t$, arm $i$, and context $c$, sample the reward $r_{i, t}(c)$ independently from $\mathcal{F}_i(c)$. We will prove that, in expectation, algorithm $\mathcal{A}$ achieves regret at least $R - \sqrt{T\log(KT)}$ on such a sampled instance. In particular, this implies that there exists a specific instance with adversarial rewards and stochastic contexts  where $\mathcal{A}$ incurs at least $R - \sqrt{T\log(KT)}$ regret, as desired.

Note that in expectation, $\E[\sum_{t=1}^T r_{I_t, t}(c_t)]$ is the same for both the stochastic rewards instance and the distribution over adversarial rewards instances (since $\mathcal{A}$ sees the same distribution over instances in both cases). It suffices to show that the two benchmarks perform similarly. 

First, note that both 

$$\E\left[\sum_{t=1}^T r_{\pi_{\stoch}(c_t), t}(c_t)\right] = \sum_{c=1}^C \Pr_{\mathcal{D}}[c] \E\left[\sum_{t=1}^T r_{\pi_{\stoch}(c), t}(c)\right],$$

\noindent
and

$$\E\left[\sum_{t=1}^T r_{\pi_{\adv}(c_t), t}(c_t)\right] = \sum_{c=1}^C \Pr_{\mathcal{D}}[c] \E\left[\sum_{t=1}^T r_{\pi_{\adv}(c), t}(c)\right],$$

\noindent
where in both cases we have applied the law of total expectation to the randomness in the contexts. 

Now, fix a context $c$. Let

$$R_{\adv}(c) = \E\left[\sum_{t=1}^T r_{\pi_{\adv}(c), t}(c)\right] = \E_{r_{i, t}(c) \sim \mathcal{F}_{i}(c)}\left[\max_{i \in [K]} \sum_{t=1}^{T} r_{i, t}(c)\right],$$

\noindent
and 

$$R_{\stoch}(c) = \E\left[\sum_{t=1}^T r_{\pi_{\stoch}(c), t}(c)\right] = T\cdot \max_{i\in [K]} \mu_{i}(c).$$

We will show that $R_{\adv}(c) - R_{\stoch}(c) \leq \sqrt{T\log(KT)}$, and this completes the proof.  Let $\Delta = \sqrt{T\log(KT)}/2$. Note that 
\begin{eqnarray*}
R_{\adv}(c) &=& \E\left[\max_{i \in [K]} \sum_{t=1}^{T} r_{i, t}(c)\right] \\
&= & \E\left[\max_{i \in [K]} \sum_{t=1}^{T} r_{i, t}(c) \mathbbm{I} \left( \max_{i \in [K]} \sum_{t=1}^{T} r_{i, t}(c) \le  R_{\stoch}(c) + \Delta \right) \right]
\\
&+& \E\left[\max_{i \in [K]} \sum_{t=1}^{T} r_{i, t}(c)  \mathbbm{I} \left( \max_{i \in [K]} \sum_{t=1}^{T} r_{i, t}(c) >  R_{\stoch}(c)  +\Delta \right) \right]\\
&\leq & R_{\stoch}(c) + \Delta + T \cdot \Pr\left[\max_{i \in [K]} \sum_{t=1}^{T} r_{i, t}(c) > R_{\stoch}(c) + \Delta\right]\\
&\leq & R_{\stoch}(c) + \Delta + T \sum_{i \in [K]} \Pr\left[\sum_{t=1}^{T} r_{i, t}(c) > T \mu_i(c) + \Delta\right] \\
&\leq & R_{\stoch}(c) + \Delta + T \sum_{i \in [K]} \exp(-2T(\Delta/T)^2) \\
&\leq & R_{\stoch}(c) + \Delta + T \sum_{i \in [K]} 1/(TK) \\
&\leq & R_{\stoch}(c) + 2\Delta\,,
\end{eqnarray*}
where the second equation follows from conditioning, the second inequality follows from an union bound together with $R_{\stoch}(c) \ge T \mu_i(c)$, the third by Hoeffding's inequality, and the fourth inequality from our definition of $\Delta$. It thus follows that $\E_{c}[R_{\adv}(c)] - \E_{c}[R_{\stoch}(c)] \leq \sqrt{T\log(KT)}$, as desired.
\end{proof}

\section{Lower bound for learning to bid}\label{sect:fpa_lb}

In this section, will show that any algorithm for learning to bid in a first-price auction must incur at least $\Omega(T^{2/3})$ regret even if there is only one value (so no potential for cross-learning between contexts). To show this, we will use a reduction to the problem of dynamic pricing. 

The problem of dynamic pricing is as follows. The learner must repeatedly (for $T$ rounds) sell an item to a buyer with value $x_t$ drawn i.i.d. from some unknown distribution $\D$. At each point in time, a price $p_t$ is proposed. If $x_t \geq p_t$, the buyer purchases the item and the learner receives payment $p_t$ (alternatively, regret $(x_t - p_t)$); otherwise if $x_t < p_t$ the buyer does not purchase the item and regret is $x_t$. The goal of this game is to maximize total revenue, or equivalently, minimize the total regret (with respect to the optimal fixed price $p^{\star}$).

\cite{kleinberg2003value} prove the following bounds on this problem.

\begin{theorem}[Theorem 4.3 in \cite{kleinberg2003value}]\label{thm:lowerbound23}
For any $T$, there exists a family of distributions $\mathcal{P} = \{\D_i\}$ on $[0,1]$ such that if $\D$ is sampled uniformly from $\mathcal{P}$ and the buyer's valuations are sampled iid according to $\D$, any pricing strategy must incur expected regret $\Omega(T^{2/3})$.
\end{theorem}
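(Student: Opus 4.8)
The plan is to prove this by reduction to a \emph{planted-bump} identification problem, i.e.\ the continuum-armed bandit lower bound specialized to valid demand curves. First I would fix a baseline demand curve that produces a perfectly flat revenue curve: taking $q_0(p) = 1 - F_0(p) = c/p$ on $[c,1]$ (and $q_0(p)=1$ for $p<c$) gives revenue $g_0(p) = p\,q_0(p) = c$ constant on $[c,1]$, and since $q_0$ is non-increasing, $F_0$ is a legitimate CDF. The point of the flat baseline is that it carries no gradient information about where to price. I would then partition $[c,1]$ into $K$ bins of width $\delta = \Theta(1/K)$ and, for each $j \in \{1,\dots,K\}$, define an instance $\D_j$ whose revenue curve equals $c$ everywhere except for a small bump of height $\eps$ supported on bin $j$, realized by a local perturbation of $q_0$. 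The family $\mathcal{P}$ is $\{\D_j\}_{j=1}^K$, together with the bumpless $\D_0$ used only in the analysis.

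The crucial geometric observation, and the source of the exponent $2/3$, is that the bump cannot be arbitrarily tall for its width. Since $g = p\,q$ with $q$ forced non-increasing, a revenue bump of height $\eps$ over width $\delta$ requires slope about $\eps/\delta$ on its rising edge, and monotonicity of $q = g/p$ forces $\eps/\delta = O(1)$, i.e.\ $\eps = O(1/K)$; thus the adversary's signal strength is capped by the bin resolution. Under $\D_j$ the best fixed price earns $c+\eps$, while every price outside bin $j$ earns at most $c$, so writing $N_j$ for the number of rounds the algorithm prices inside bin $j$, I would lower bound $\E_j[\Reg] \ge \eps\,\E_j[T - N_j]$.

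Next I would run the standard change-of-measure argument to show the algorithm cannot keep $N_j$ large for the realized $j$. Each round spent pricing in bin $j$ yields one $\mathrm{Ber}(q)$ observation, and with $q$ bounded away from $0$ and $1$ the per-round KL divergence between $\D_0$ and $\D_j$ is $O(\eps^2)$; by the chain rule $\mathrm{KL}(\Prob_0 \,\|\, \Prob_j) = O\!\left(\eps^2\,\E_0[N_j]\right)$. Averaging over a uniformly random planted index $j$ and applying Pinsker together with Cauchy--Schwarz (using $\sum_j \E_0[N_j] \le T$, hence $\sum_j \sqrt{\E_0[N_j]} \le \sqrt{KT}$) gives
$$\frac{1}{K}\sum_{j=1}^K \E_j[N_j] \;\le\; \frac{T}{K} + O\!\left(\eps\,T\sqrt{\tfrac{T}{K}}\right).$$
Consequently, whenever $\eps = O(\sqrt{K/T})$ and $K$ is a constant multiple above $1$, the right-hand side is at most $T/2$, so the average over $j$ of $\E_j[\Reg]$ is $\Omega(\eps T)$ and some instance $\D_j$ attains it.

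Finally I would optimize the two competing constraints on $\eps$: realizability forces $\eps = O(1/K)$, while statistical indistinguishability forces $\eps = O(\sqrt{K/T})$. Maximizing the guaranteed regret $\eps T$ subject to $\eps = \min\{1/K,\sqrt{K/T}\}$ balances the two at $\sqrt{K/T} = 1/K$, i.e.\ $K = \Theta(T^{1/3})$ and $\eps = \Theta(T^{-1/3})$, yielding expected regret $\Omega(\eps T) = \Omega(T^{2/3})$. I expect the main obstacle to be the realizability step: one must verify that the planted bumps can be implemented as genuine monotone demand curves (valid CDFs on $[0,1]$) whose sale probabilities differ from the baseline by only $O(\eps)$ per price, \emph{simultaneously} guaranteeing the geometric cap $\eps = O(1/K)$ and the $O(\eps^2)$ bound on the per-round KL. Everything downstream is the routine multi-armed-bandit calculation.
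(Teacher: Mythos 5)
The paper offers no proof of this statement: it is quoted directly from \cite{kleinberg2003value} (their Theorem 4.3) and used as a black box in the reduction of Theorem \ref{thm:reduction}, so the only meaningful comparison is with the original source — and your reconstruction follows essentially the same route as that source: a flat baseline revenue curve $g_0(p)=c$ on $[c,1]$, a family of $\Theta(T^{1/3})$ planted bumps whose height is capped at $O(1/K)$ by monotonicity of the demand curve $q=g/p$ (this is precisely the step responsible for the exponent $2/3$ rather than $1/2$), a per-instance KL bound of $O(\eps^2\,\E_0[N_j])$ combined with Pinsker and Cauchy--Schwarz, and the balance $K=\Theta(T^{1/3})$, $\eps=\Theta(T^{-1/3})$. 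The argument is correct, and the two items you flag as remaining obstacles are indeed the only ones and are routine: the bumps can be realized as valid non-increasing $[0,1]$-valued demand curves by restricting them to a subinterval such as $[(1+c)/2,1]$ and using piecewise-linear bumps of slope $O(1)$, and the same placement keeps $q$ bounded away from $0$ and $1$ so that the per-round KL is genuinely $O(\eps^2)$; your regret accounting (charging $\eps$ only for rounds priced outside bin $j$) is also sound, since rounds inside the bin need not be charged at all.
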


This lower bound can be matched (up to log factors) by discretizing (to $K = O(T^{1/3})$ intervals) and running EXP3. 

We now show this lower bound immediately implies a lower bound on the learning to bid problem, even when there is only one context/value.

\begin{theorem}[{Lower Bound for Learning to Bid}]\label{thm:reduction}
Any algorithm must incur $\Omega(T^{2/3})$ regret for the learning to bid in first price auctions problem, even if the value of the bidder is fixed (i.e., there is only one context).
\end{theorem}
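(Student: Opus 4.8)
The plan is to prove Theorem \ref{thm:reduction} by a direct reduction from dynamic pricing, invoking the lower bound of Theorem \ref{thm:lowerbound23}. The key observation is that a non-transparent first-price auction with a single fixed value is, after an order-reversing change of variables, exactly a dynamic pricing instance, with identical per-round payoff and identical (win vs.\ sale) feedback. Since the pricing lower bound already holds with a single latent value distribution, this will immediately give the claimed $\Omega(T^{2/3})$ bound even when there is only one context.

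Concretely, I would fix the bidder's value to $v = 1$ and let $x_t \sim \D$ denote the latent buyer value in the pricing problem. I would couple the two problems by setting the highest competing bid to $h_t = 1 - x_t$ and identifying a bid $b_t$ with the price $p_t = 1 - b_t$ (both remain in $[0,1]$). Under this coupling the bidder wins iff $b_t \geq h_t$, i.e.\ iff $1 - p_t \geq 1 - x_t$, i.e.\ iff $x_t \geq p_t$, which is exactly the event that the buyer purchases at price $p_t$. Moreover the bidder's utility on a win is $v - b_t = p_t$, the seller's revenue, and is $0$ on a loss, matching a no-sale; and the only feedback available in the non-transparent auction (the win/lose bit, since the payment on a win is the self-chosen $b_t$) coincides with the only feedback the seller receives (whether the item sold). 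It follows that the per-round payoff sequences, and hence the cumulative objectives, are identical; in particular the best fixed bid $b^\star$ corresponds to the optimal fixed price $p^\star = 1 - b^\star$, so the two regrets coincide.

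Given this coupling, the reduction is immediate: from any learning-to-bid algorithm $\cal A$ I would build a pricing algorithm $\cal A'$ that, each round, queries $\cal A$ for its bid $b_t$ (with fixed value $v=1$), posts the price $p_t = 1 - b_t$, observes whether a sale occurred, and returns exactly this win/lose indicator to $\cal A$. By the argument above, $\E[\Reg(\cal A')]$ on the pricing instance equals $\E[\Reg(\cal A)]$ on the coupled bidding instance. Taking $\D$ uniform over the hard family $\mathcal{P}$ from Theorem \ref{thm:lowerbound23} (so that the competing bids $h_t$ are distributed as $1 - \D$), any $\cal A$ achieving $o(T^{2/3})$ expected regret would yield an $\cal A'$ with $o(T^{2/3})$ pricing regret, contradicting Theorem \ref{thm:lowerbound23}. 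Hence every bidding algorithm must incur $\Omega(T^{2/3})$ regret, even with a single fixed value, so no cross-learning across contexts is available to help.

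The one place I would verify explicitly rather than wave at is the order reversal: the win condition $b_t \geq h_t$ and the sale condition $x_t \geq p_t$ point in opposite directions, so the reduction goes through only via the reflections $h_t = 1 - x_t$ and $p_t = 1 - b_t$; this is precisely why I fix $v = 1$, since a smaller value would force $h_t = v - x_t$ to leave $[0,1]$. I would also confirm that collapsing $v$ to a point mass is consistent with the model's independence assumption between values and competing bids (trivially so), and that matching the feedback is legitimate because the non-transparent auction reveals nothing beyond the win/lose indicator the pricing algorithm needs.
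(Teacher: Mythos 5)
Your proposal is correct and follows essentially the same route as the paper's proof: fix $v=1$, apply the order-reversing coupling $p_t = 1-b_t$, $h_t = 1-x_t$ (equivalently, pricing with value distribution $1-\D_h$), observe that payoffs and win/sale feedback coincide, and invoke the $\Omega(T^{2/3})$ dynamic-pricing lower bound of Theorem \ref{thm:lowerbound23}. The only cosmetic difference is that you phrase the reduction as building a pricing algorithm from a bidding algorithm, which is exactly the direction the paper uses as well.
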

\begin{proof}{Proof of Theorem \ref{thm:reduction}}
We will show how to use a learning algorithm for the learning to bid problem to solve the dynamic pricing problem. 

Consider an instance of the learning to bid problem where $v_t = 1$ always (i.e., $\D_v$ is the singleton distribution supported on $1$). If the bidder bids $b_t$ in this auction, then with probability $\Pr_{h \sim \D_h}[b_t \geq h]$,  the bidder wins the auction and receives reward $(1-b_t)$, and with probability $1 - \Pr_{h \sim \D_h}[b_t \geq h]$ the bidder loses the auction and receives reward $0$. Here, $\D_h$ is the distribution of the highest competing bid in the auction. 

Now consider pricing when the value of the buyer is drawn from $\D = 1 - \D_h$ (that is, one can sample from $\D$ by sampling $x$ from $\D_h$ and returning $1-x$). If we set a price $p_t$ in this auction, then with probability $\Pr_{x \sim \D}[x \geq p_t]$, the item is sold and the seller receives reward $p_t$, and with probability $1 - \Pr_{x \sim \D}[x \geq p_t]$, the item is not sold and the seller receives reward $0$.

But note that $\Pr_{x \sim \D}[x \geq p_t] = \Pr_{h \sim \D_h}[1-h \geq p_t] = \Pr_{h \sim \D_h}[1-p_t \geq h]$. In particular, setting a price of $p_t$ in the pricing problem with distribution $1 - \D_h$ results in the exact same feedback and rewards as bidding $1-p_t$ in the learning to bid problem with distribution $\D_h$. One can therefore use any algorithm for the learning to bid problem to solve the dynamic pricing problem with the same regret guarantee; since Theorem \ref{thm:lowerbound23} implies any learning algorithm must incur $\Omega(T^{2/3})$ regret on the dynamic pricing problem, it follows that any learning algorithm must incur $\Omega(T^{2/3})$ regret for the learning to bid problem.
\Halmos\end{proof}

\section{Omitted proofs}

\subsection{Proof of Lemma \ref{lem:ucbpart2}} 
Essentially, we must show that after observing arm $i$ $m_{i}(c)$ times, we no longer lose substantial regret from that arm in context $c$. 
Begin by noting that
\begin{eqnarray*}
\sum_{i=1}^{K}\sum_{c=1}^{C}\sum_{t=1}^{T}\Delta_{i}(c)\mathbbm{I}(I_{t}=i, c_{t}=c, \tau_{i,t}(c) > m_{i}(c)) & \leq & \sum_{i=1}^{K}\sum_{c=1}^{C}\sum_{t=1}^{T}\mathbbm{I}(I_{t}=i, c_{t}=c, \tau_{i,t}(c) > m_{i}(c)) \\
&=& \sum_{i=1}^{K}\sum_{t=1}^{T}\mathbbm{I}(I_{t}=i, \tau_{i,t}(c_t) > m_{i}(c_t))\,,
\end{eqnarray*}
where the inequality holds since the reward of each arm $i$ (and consequently the gap $\Delta_{i}(c)$) is bounded in $[0,1]$.  %{\color{red} $\tau_{i,t}(c)$ or $\tau_{i,t}(c_t)$?}

In expectation, this is equal to
$$\sum_{i=1}^{K}\sum_{t=1}^{T}\Pr[I_{t}=i, \tau_{i,t}(c_t) > m_{i}(c_t)].$$

Now, define $U_{i,t}(c) = \overline{r}_{i,t}(c) + \omega(\tau_{i, t}(c))$ to be the upper confidence bound for arm $i$ under context $c$ in round $t$. Note that if $I_{t}=i$, then $U_{i,t-1}(c_t) \geq U_{j,t-1}(c_t)$ for any other arm $j$. This holds because the algorithm chooses the arm with the highest upper confidence bound. It follows that (fixing $i$ and $t$) 
$$\Pr[I_{t}=i, \tau_{i,t}(c_t) > m_{i}(c_t)] \leq \Pr\left[U_{i,t-1}(c_t)\geq U_{i^{\star}(c_t), t-1}(c_t), \tau_{i,t}(c_t) > m_i(c_t)\right].$$
(Here $i^{\star}(c_t)$ is the optimal arm under context $c_t$).

Define $t_{i}(n,c)$ to be the minimum round $t$ such that $\tau_{i,t}(c) = n$, and define $\overline{x}_{i,n}(c) = \overline{r}_{i,t_{i}(n, c)}(c)$ (in other words, $\overline{x}_{i,n}(c)$ is the average value of the first $n$ rewards from arm $i$, in context $c$). Note that if $\tau_{i,t}(c) \geq m_i(c)$, then $U_{i, t-1}(c) \geq U_{i^{\star}(c), t-1}(c)$ implies that %{\color{red} should not we have $n< t$ on both sides the following inequality?}
$$\max_{m_{i}(c_t) \leq n < t} \overline{x}_{i,n}(c) + \omega(n) \geq \min_{0<n'<t} \overline{x}_{i^{\star}(c), n'}(c) + \omega(n').$$

We can therefore write 
\begin{eqnarray*}
&\;& \Pr\left[U_{i,t-1}(c_t)\geq U_{i^{\star}(c_t), t-1}(c_t), \tau_{i,t}(c_t) > m_i(c_t)\right] \\
&\leq & \Pr\left[\max_{m_{i}(c_t) \leq n < t} \overline{x}_{i,n}(c_t) + \omega(n) \geq \min_{0<n'<t} \overline{x}_{i^{\star}(c_t), n'}(c_t) + \omega(n')\right] \\
&\leq & \sum_{n=m_{i}(c_t)}^{t}\sum_{n'=1}^{t}\Pr\left[\overline{x}_{i,n}(c_t) + \omega(n) \geq  \overline{x}_{i^{\star}(c_t), n'}(c_t) + \omega(n')\right].
\end{eqnarray*}
Here the last inequality follows from applying the union bound over all choices of $n$ and $n'$. 

Finally, observe that if $\overline{x}_{i,n}(c_t) + \omega(n) \geq \overline{x}_{i^{\star}(c_t), n'}(c_t) + \omega(n')$, then one of the following events must occur:

\begin{enumerate}
\item $\overline{x}_{i^{\star}(c_t),  n'}(c_t) \leq \mu^{\star}(c_t) - \omega(n')$.
\item $\overline{x}_{i, n}(c_t) \geq \mu_i(c_t) + \omega(n)$.
\item $\mu^{\star}(c_t) < \mu_{i}(c_t) + 2\omega(n)$.
\end{enumerate}

Now, recall that $m_{i}(c) = \frac{8\log T}{\Delta_{i}(c)^2}$. Note that since $n \geq m_i(c_t)$, we have that $\omega(n) \leq \omega(m_{i}(c_t)) \leq \Delta_{i}(c_t)/2$, so $\mu_{i}(c_t) + 2\omega(n) \leq \mu_{i}(c_t) + \Delta_{i}(c_t) \leq \mu^{\star}(c_t)$, and therefore the third event can never occur. 

The first two events both occur with probability at most $t^{-4}$ by Hoeffding's inequality. For example, for the first event, Hoeffding's inequality implies that
$$\Pr\left[\overline{x}_{i^{\star}(c_t),  n'}(c_t) - \mu^{\star}(c_t) \leq - \omega(n') \right] \leq \exp(-2n'\omega(n')^2) = \exp(-4\log T) \leq t^{-4}.$$

It is similarly true that the probability of the second event is at most $t^{-4}$. We thus have that 
\begin{eqnarray*}
\Pr[I_{t}=i, \tau_{i,t}(c_t) > m_{i}(c_t)] &\leq & \sum_{n=m_{i}(c_t)}^{t}\sum_{n'=1}^{t}\Pr\left[\overline{x}_{i,n}(c_t) + \omega(n) \geq  \overline{x}_{i^{\star}(c_t), n'}(c_t) + \omega(n')\right] \\
&\leq & \sum_{n=m_{i}(c_t)}^{t}\sum_{n'=1}^{t} 2t^{-4} \leq  2t^{-2}.
\end{eqnarray*}
Further summing this over all $i \in [K]$ and $t \in [T]$, we have that 
$$\sum_{i=1}^{K}\sum_{t=1}^{T}\Pr[I_{t}=i, \tau_{i,t}(c_t) > m_{i}(c_t)] \leq \frac{K\pi^2}{3}.$$

\subsection{Proof of Theorem \ref{thm:exppcross}}
We proceed similarly to the analysis of EXP3. Begin
by defining the estimator
$$\hat{r}_{i,t}(c) = \frac{r_{i,t}(c)}{\sum_{c' \in \cal I_i(c)} \Pr[c'] \cdot p_{i, t}(c')}\mathbbm{I}(I_t = i, c_t \in \cal I_i(c)).$$
Note that
$$\Pr[I_{t} = i, c_t \in \cal I_i(c)] = \sum_{c' \in \cal I_i(c)}\Pr[c']\cdot p_{i,t}(c'),$$
\noindent
so taking expectations over the algorithm's choice of $I_{t}$, we have that 
$$\E[\hat{r}_{i,t}(c)] = r_{i,t}(c),$$ 
\noindent
and 
$$\E[\hat{r}_{i,t}(c)^2] = \frac{r_{i,t}(c)^2}{\sum_{c' \in \cal I_i(c)}\Pr[c']\cdot p_{i,t}(c')}.$$
Now, let $W_{t}(c) = \sum_{i=1}^{K}w_{i,t}(c)$. Note that
\begin{eqnarray*}
\frac{W_{t+1}(c)}{W_{t}(c)} &=& \sum_{i=1}^{K} \frac{w_{i,t}(c)}{W_{t}(c)} \cdot e^{\beta \hat{r}_{i,t}(c)} \\
&=& \sum_{i=1}^{K} \frac{p_{i,t}(c) - \alpha}{1-K\alpha} e^{\beta \hat{r}_{i,t}(c)} \\
&\leq &  \frac{1}{1-K\alpha}\sum_{i=1}^{K} (p_{i,t}(c)-\alpha)\left(1 + \beta\hat{r}_{i,t}(c) + (e-2)\beta^2\hat{r}_{i,t}(c)^2\right) \\
&\leq & 1 + \frac{\beta}{1-K\alpha}\sum_{i=1}^{K}p_{i,t}(c)\hat{r}_{i,t}(c) + \frac{(e-2)\beta^2}{1-K\alpha}\sum_{i=1}^{K}p_{i,t}(c)\hat{r}_{i,t}(c)^2\,,
\end{eqnarray*}
where the first equation holds because for any $c\in [C]$, $w_{i, t+1} (c)= w_{i,t}(c) \cdot e^{\beta \hat{r}_{i,t}(c)}$, and the second equation follows because $p_{i,t}(c) = (1-K\alpha)\frac{w_{i,t}(c)}{W_{t}(c)} + \alpha$.
In the first inequality,  we have used the fact that $\beta\hat{r}_{i,t}(c) \leq \beta r_{i,t}(c)/\alpha \leq 1$ (since $\beta/\alpha \leq 1$ for any choice of $T$ and $K$), that $e^{x} \leq 1 + x + (e-2)x^2$ for $x \in [0, 1]$, and that all rewards $r_{i,t}(c)$ are bounded in $[0,1]$. Now, using the fact that $\log (1+x) \leq x$, we have that:
$$\log\left( \frac{W_{t+1}(c)}{W_{t}(c)}\right) \leq \frac{\beta}{1-K\alpha}\sum_{i=1}^{K}p_{i,t}(c)\hat{r}_{i,t}(c) + \frac{(e-2)\beta^2}{1-K\alpha}\sum_{i=1}^{K}p_{i,t}(c)\hat{r}_{i,t}(c)^2\,, $$

\noindent
and therefore (summing over all $t$)
\begin{equation}\label{eq:W_T}
\log \left( \frac{W_{T}(c)}{W_{0}(c)}\right) \leq \frac{\beta}{1-K\alpha}\sum_{t=1}^{T}\sum_{i=1}^{K}p_{i,t}(c)\hat{r}_{i,t}(c) + \frac{(e-2)\beta^2}{1-K\alpha}\sum_{t=1}^{T}\sum_{i=1}^{K}p_{i,t}(c)\hat{r}_{i,t}(c)^2 .
\end{equation}

Recall that we compute regret against the optimal stationary policy $\pi(c) = \arg\max_{i}\sum_{t=1}^{T}r_{i,t}(c)$. Then,
\begin{eqnarray}
\log \left(  \frac{W_{T}(c)}{W_{0}(c)}\right) & \geq & \log \frac{w_{\pi(c), T}(c)}{K} \nonumber \\
&=& \beta \sum_{t=1}^{T} \hat{r}_{\pi(c),t}(c) - \log K\,,  \label{eq:W_T_pi}
\end{eqnarray}
where the first inequality holds because (i) $w_{i,0} (c) =1$ for any $i\in [K]$ and as a result,  $W_{0}(c) = K$, and (ii) $W_{T}(c) =  \sum_{i=1}^{K}w_{i,T}(c) \ge w_{\pi(c), T}(c)$. From (\ref{eq:W_T}) and (\ref{eq:W_T_pi}), we get
\begin{equation}\label{eq:4}
\frac{\beta}{1-K\alpha}\sum_{t=1}^{T}\sum_{i=1}^{K}p_{i,t}(c)\hat{r}_{i,t}(c) + \frac{(e-2)\beta^2}{1-K\alpha}\sum_{t=1}^{T}\sum_{i=1}^{K}p_{i,t}(c)\hat{r}_{i,t}(c)^2 \geq \beta\sum_{t=1}^{T} \hat{r}_{\pi(c),t}(c) - \log K.
\end{equation}
Simplifying (\ref{eq:4}) (multiplying through by $(1-K\alpha)/\beta$ and applying the fact that $r_{i,t}(c)$ is bounded), this becomes\footnote{Note that for $T \geq K\log K$, $\alpha \leq 1/K$, so $1-K\alpha$ is always positive.}
\begin{equation}\label{eq:5}
\sum_{t=1}^{T} \hat{r}_{\pi(c),t}(c) - \sum_{t=1}^{T}\sum_{i=1}^{K}p_{i,t}(c)\hat{r}_{i,t}(c) \leq \frac{\log K}{\beta} + (e-2)\beta\sum_{t=1}^{T}\sum_{i=1}^{K}p_{i,t}(c)\hat{r}_{i,t}(c)^2 + KT\alpha .
\end{equation}

We now take expectations (with respect to all randomness, both of the algorithm and of the contexts) of both sides of (\ref{eq:5}). 
\begin{align} \nonumber
&\sum_{t=1}^{T} r_{\pi(c),t}(c) - \sum_{t=1}^{T}\sum_{i=1}^{K}\E[p_{i,t}(c)]r_{i,t}(c) \\
&\leq \frac{\log K}{\beta} + (e-2)\beta\sum_{t=1}^{T}\sum_{i=1}^{K}\E\left[\frac{p_{i,t}(c)}{\sum_{c'\in \cal I_i(c)}\Pr[c']\cdot p_{i,t}(c')}\right]r_{i,t}(c)^2 + KT\alpha\,. \label{eq:6c}
\end{align}
Note that the expected regret $\E[\Reg({\cal A})]$ of our algorithm is equal to
\begin{eqnarray*}
\E[\Reg({\cal A})] &=&  \E\left[\sum_{t=1}^{T} r_{\pi(c_t), t}(c_t) - \sum_{t=1}^{T} r_{I_t(c_t), t}(c_t)\right] \\
&=& \sum_{t=1}^{T} \E\left[r_{\pi(c_t), t}(c_t) - r_{I_t(c_t), t}(c_t)\right] \\
&=& \sum_{t=1}^{T} \sum_{c=1}^{C}\Pr[c]\E\left[r_{\pi(c), t}(c) - r_{I_t(c), t}(c)\right] \\
&=& \sum_{t=1}^{T} \sum_{c=1}^{C}\Pr[c]\left(r_{\pi(c), t}(c) - \E\left[r_{I_t(c), t}(c)\right]\right)\,.
\end{eqnarray*}
Since arm $I_t$ is drawn from distribution $p_t(c)$, we have 
\begin{eqnarray*}
\E[\Reg({\cal A})] &=& \sum_{t=1}^{T} \sum_{c=1}^{C}\Pr[c]\left(r_{\pi(c), t}(c) - \sum_{i=1}^{K}\E[p_{i,t}(c)]r_{i,t}(c)\right) \\
&=& \sum_{c=1}^C \Pr[c]\left(\sum_{t=1}^{T} r_{\pi(c),t}(c) - \sum_{t=1}^{T}\sum_{i=1}^{K}\E[p_{i,t}(c)]r_{i,t}(c)\right)\,.
\end{eqnarray*}
From Equation  (\ref{eq:6c}), we get that
\begin{eqnarray*}
\E[\Reg({\cal A})] &\leq & \sum_{c=1}^{C} \Pr[c]\left(\frac{\log K}{\beta} + (e-2)\beta\sum_{t=1}^{T}\sum_{i=1}^{K}\E\left[\frac{p_{i,t}(c)}{\sum_{c' \in \cal I_i(c)}\Pr[c']\cdot p_{i,t}(c')}\right]r_{i,t}(c)^2 + KT\alpha\right) \\
&=& \frac{\log K}{\beta} + (e-2)\beta \sum_{t=1}^{T}\sum_{i=1}^{K}\sum_{c=1}^{C}\Pr[c]\cdot \E\left[\frac{p_{i,t}(c)}{\sum_{c' \in \cal I_i(c)}\Pr[c']\cdot p_{i,t}(c')}\right] r_{i,t}(c)^2 + KT\alpha  \\
&\leq& \frac{\log K}{\beta} + (e-2)\beta \sum_{t=1}^{T}\sum_{i=1}^{K}\E\left[\sum_{c=1}^{C}\frac{\Pr[c]p_{i,t}(c)}{\sum_{c' \in \cal I_i(c)}\Pr[c']\cdot p_{i,t}(c')}\right] + KT\alpha  \\
&\leq& \frac{\log K}{\beta} + (e-2)\beta \sum_{t=1}^{T}\sum_{i=1}^{K}\nu(G_i) + KT\alpha\\
&= & \frac{\log K}{\beta} + (e-2)\beta \overline{\nu}KT + KT\alpha \\
&=& O(\sqrt{\overline{\nu}KT\log K}).
\end{eqnarray*}
where $\bar \nu = \frac{1}{K} \sum_{i =1} \nu(G_i)$ and { $\nu(G_i)=\sup_{\substack{f : [C] \rightarrow \R^{+} \\ \sum_{c=1}^C f(c) = 1}} \sum_{c=1}^{C} \frac{f(c)}{\sum_{c' \in \cal I_i(c)} f(c')}$.  
The last inequality follows because  
\[
\sum_{c=1}^{C}\frac{\Pr[c]p_{i,t}(c)}{\sum_{c' \in \cal I_i(c)}\Pr[c']\cdot p_{i,t}(c')} 
\leq 
\sup_{\substack{f : [C] \rightarrow \R^{+} \\ \sum_{c=1}^C f(c) = 1}} \sum_{c=1}^{C} \frac{f(c)}{\sum_{c' \in \cal I_i(c)} f(c')} = \nu(G_i) = \lambda(G_i)\,.
\]
Here, the inequality holds by setting $f(c) = \Pr[c]\cdot p_{i,t}(c)$ and renormalizing such that $\sum_{c=1}^C f(c) = 1$}, and the last equality follows from  Lemma~\ref{lem:mas_equiv}.

\subsection{Proof of Lemma \ref{lem:mas_equiv}}
{\color{black}Let
\[\nu(G) =\sup_{\substack{f : [C] \rightarrow \R^{+} \\ \sum_{c=1}^C f(c) = 1}} \sum_{c=1}^{C} \frac{f(c)}{\sum_{c' \in \cal I(c)} f(c')}\,,\]
where $\cal I(c)$ is the set of in-neighbors of node/context $c$ in graph $G$.}
We begin by showing that $\nu(G) \geq \lambda(G)$.

Let $(v_1, v_2, \dots, v_{\lambda(G)})$ be an acyclic subgraph of $G$ of maximum size. Fix a large $M > 1$, and consider the following function $f:V \rightarrow \mathbbm{R^{+}}$: $f(v) = M^{i}$ if $v = v_i$, $i\in [\lambda(G)]$,  and $f(v) = 1$ otherwise. We claim that as $M \rightarrow \infty$, the quantity
\begin{equation}
\sum_{v\in V} \frac{f(v)}{\sum_{w \in \cal I(v)} f(w)}
\end{equation}

\noindent
approaches a value larger than {or equal to} $\lambda(G)$. To do this, we will simply show that for each $v_i$ in our acyclic subgraph, the quantity
$$\frac{f(v_i)}{\sum_{w \in \cal I(v_i)} f(w)}$$

\noindent
approaches a value larger than {or equal to} $1$. {Since there are at least $\lambda(G)$ such terms, and since all terms are always nonnegative, this implies the desired result.} 

To see this, note that by the definition of an acyclic subgraph, for all $j > i$, there is no edge $v_j \rightarrow v_i$. Therefore, for every $w \in \cal I(v_i)$ (with the exception of $v_i$  itself), $f(w) \leq M^{i-1}$ because every $w$ in $\cal I(v_i)$ is of the form $v_j$ for some $j < i$, and therefore $\sum_{w \in \cal I(v_i)} f(w) \leq |V|M^{i-1} + M^{i}$. It follows that
$$\frac{f(v_i)}{\sum_{w \in \cal I(v_i)} f(w)} \geq \frac{M^{i}}{|V|M^{i-1} + M^{i}}.$$

The right hand side of this expression converges to 1 as $M$ approaches infinity. 

The proof that $\nu(G) \leq \lambda(G)$ follows from Lemma 10 in \cite{alon2017nonstochastic}. 

\subsection{Proof of Lemma \ref{lem:orderlemma}}

We prove the inequalities in order.

\paragraph{\underline{$\iota(G) \leq \nu_2(G)$:}}

Let $S$ be an independent set in $G$ of size $\iota(G)$. Define the distribution $f$ via $f(v) = \frac{1 - \eps}{\iota(G)}$ (for some small $\eps$) for $v \in S$ and $f(v) = \frac{\eps}{|V| - \iota(G)}$ for $v \not\in S$. As $\eps \rightarrow 0$, we have that

$$\sum_{v \in V} \frac{f(v)}{\sqrt{\sum_{w \in \cal I(v)} f(w)}} \longrightarrow \sum_{v \in S} \frac{1/\iota(G)}{\sqrt{1/\iota(G)}} = \sqrt{\iota(G)}.$$

It follows that

$$\nu_2(G) = \sup_{\substack{f : V \rightarrow \R^{+} \\ \sum_{v\in V} f(v) = 1}}\left(\sum_{v \in V} \frac{f(v)}{\sqrt{\sum_{w \in \cal I(v)} f(w)}}\right)^2 \geq \iota(G).$$

\paragraph{\underline{$\nu_2(G) \leq \lambda(G)$:}}

By {Jensen's inequality}, for any distribution $f$ over $V$, we have that 
$$\left(\sum_{v \in V} \frac{f(v)}{\sqrt{\sum_{w \in \cal I(v)} f(w)}}\right)^2 \leq \sum_{v \in V} \frac{f(v)}{\sum_{w \in \cal I(v)} f(w)}.$$
To see why note that by Jensen's inequality, for any concave function $\phi$ and positive weights $a(v)$ and any numbers $x(v)$, we have $\phi(\frac{\sum_v a(v) x(v)}{\sum_v a(v)})\ge \frac{\sum_v a(v) \phi(x(v))}{\sum_v a(v)}$. Here, $\phi(y) = \sqrt{y}$, weights are $f(v)$'s and $x(v)$'s are $\frac{1}{\sum_{w\in \cal I(v)} f(w)}$. 
Taking suprema of both sides, it follows that
$$\nu_2(G) = \sup_{\substack{f : V \rightarrow \R^{+} \\ \sum_{v\in V} f(v) = 1}}\left(\sum_{v \in V} \frac{f(v)}{\sqrt{\sum_{w \in \cal I(v)} f(w)}}\right)^2 \leq \sup_{\substack{f : V \rightarrow \R^{+} \\ \sum_{v\in V} f(v) = 1}}\sum_{v \in V} \frac{f(v)}{\sum_{w \in \cal I(v)} f(w)} = \lambda(G),$$

where the last equality follows from Lemma \ref{lem:mas_equiv}.

\paragraph{\underline{$\lambda(G) \leq \kappa(G)$:}}

Let $(S_1, S_2, \dots, S_{\kappa(G)})$ be a minimum size clique covering of $G$. Note that no two elements $v, v'$ in the same $S_i$ can belong to the same acyclic subgraph (since by the definition of a clique, there exist edges $v \rightarrow v'$ and $v' \rightarrow v$). It follows that the size of the largest acyclic subgraph is at most $\kappa(G)$, and thus $\lambda(G) \leq \kappa(G)$.

\paragraph{Unions of cliques}

We now show that when $G$ is a disjoint union of $r$ cliques, $\iota(G) = \nu_2(G) = \lambda(G) = \kappa(G) = r$. To do so it suffices (from the above inequalities) to show that $\iota(G) = r$ and $\kappa(G) = r$. The independence number $\iota(G) = r$ since choosing one element from each clique creates an independent set, and any set of $r+1$ or more vertices must have two vertices from the same clique. The clique covering number $\kappa(G) = r$ since we can cover the graph with the $r$ given cliques, and any covering with fewer than $r$ sets must combine elements in disjoint cliques (thus violating the fact that each set is a clique).

\subsection{Proof of Theorem \ref{thm:exp3cross}}

As in our analysis of  \expcrossk{}, we define the estimator 
$$\hat{r}_{i, t}(c) = \frac{r_{i,t}(c)}{p_{i, t}(c')}\mathbbm{I}(I_t = i, c \in \mathcal{O}(c')).$$

Note that $\hat{r}_{i, t}(c)$ is not an unbiased estimator of $r_{i, t}(c)$. Indeed, we have that: 
$$\E[\hat{r}_{i, t}(c)] = \sum_{c' \in \mathcal{I}(c)} \Pr[c_t] p_{i,t}(c') \cdot \frac{r_{i, t}(c)}{p_{i, t}(c')} = \Big(\sum_{c' \in \mathcal{I}(c)} \Pr[c']\Big) r_{i, t}(c).$$
However, note that we can write $\E[\hat{r}_{i, t}(c)]$ in the form $f(c)r_{i, t}(c)$, where $f(c)$ is a function which only depends on a context (and in this case is given by $f(c) = \sum_{c' \in \mathcal{I}(c)} \Pr[c']$). It turns out this property is enough to adapt the previous analysis of Theorem \ref{thm:exppcross}.

Indeed, proceeding in the same way as the proof of Theorem \ref{thm:exppcross}, we arrive at the inequality
\begin{equation}\label{eq:5c_rep}
\sum_{t=1}^{T} \hat{r}_{\pi(c),t}(c) - \sum_{t=1}^{T}\sum_{i=1}^{K}p_{i,t}(c)\hat{r}_{i,t}(c) \leq \frac{\log K}{\beta} + (e-2)\beta\sum_{t=1}^{T}\sum_{i=1}^{K}p_{i,t}(c)\hat{r}_{i,t}(c)^2 + KT\alpha.
\end{equation}

In addition, note that it is still true that
$$\E[\hat{r}_{i,t}(c)^2] \leq \frac{r_{i, t}(c)^2}{\alpha}.$$

Taking expectations of \eqref{eq:5c_rep}, we therefore have that:
\begin{eqnarray*}
f(c) \left(\sum_{t=1}^{T} r_{\pi(c),t}(c) - \sum_{t=1}^{T}\sum_{i=1}^{K}\E[p_{i,t}(c)]r_{i,t}(c)\right) &\leq & \frac{\log K}{\beta} + (e-2)\beta\sum_{t=1}^{T}\sum_{i=1}^{K}\frac{\E[p_{i,t}(c)]}{\alpha}r_{i,t}(c)^2 + KT\alpha \\
&\leq & \frac{\log K}{\beta} + (e-2)KT\frac{\beta}{\alpha} + KT\alpha \\
&=& O(K^{1/3}T^{2/3}(\log K)^{1/3}).
\end{eqnarray*}

Now, multiply both sides of this inequality by $\Pr[c]/f(c)$, and sum over all $c$. On the left hand side, we have
$$\sum_{c=1}^{C}\sum_{t=1}^{T} \Pr[c](r_{\pi(c),t}(c) - \E[p_{i,t}(c)]r_{i,t}(c)) = \E[\Reg(\mathcal{A})]\,.$$

\noindent
(by the same logic as in  Theorem \ref{thm:exppcross}). On the other hand, the right hand side is now
$$\left(\sum_{c=1}^{C} \frac{\Pr[c]}{\sum_{c' \in \mathcal{I}(c)}\Pr[c']}\right) \cdot O(K^{1/3}T^{2/3}(\log K)^{1/3}) = O(\lambda K^{1/3}T^{2/3}(\log K)^{1/3}),$$
where this inequality follows due to Lemma \ref{lem:mas_equiv}. It follows that the expected regret is at most $O(\lambda K^{1/3}T^{2/3}(\log K)^{1/3})$.

\subsection{Proof of Theorem \ref{thm:partial-stoch-lb}
}

To prove this, we will need a slightly stronger variant of Lemma \ref{lem:mab-lb}. Recall that Lemma \ref{lem:mab-lb} states that in the non-contextual multi-armed bandit setting, any algorithm incurs an expected cumulative regret of $\Omega(\sqrt{KT})$. The following lemma states that in the same setting, any algorithm incurs an expected regret of at least $\Omega(\sqrt{K/T})$ per round. 

\begin{lemma}\label{lem:mab-lb-strong}
There exists a distribution over instances of the multi-armed bandit problem (with $K$ arms and $T$ rounds) where for any round $t \in [T]$, any algorithm must incur an expected regret of at least $\Omega(\sqrt{K/T})$ in round $t$.
\end{lemma}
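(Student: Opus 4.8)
The plan is to prove a per-round version of the classical bandit lower bound by calibrating the standard hard instance to the \emph{full} horizon $T$ rather than to the current round, and then controlling, for each fixed $t$, the probability that the learner has already identified the best arm. Concretely, I would use the usual family of instances: let $a^{\star}$ be an arm chosen uniformly at random from $[K]$; under the resulting instance, arm $a^{\star}$ yields Bernoulli rewards with mean $1/2 + \eps$ and every other arm yields Bernoulli rewards with mean $1/2$, where I set $\eps = c\sqrt{K/T}$ for a small absolute constant $c$ to be fixed at the end. Let $\mathbb{P}_i$ denote the law of the observed history under the instance with best arm $i$, and let $\mathbb{P}_0$ denote the law under the ``null'' instance in which \emph{every} arm has mean $1/2$. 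Fixing a round $t$, the expected per-round regret incurred in round $t$ equals $\eps\bigl(1 - \tfrac1K\sum_{i} \Pr_i[I_t = i]\bigr)$, so it suffices to show that $\tfrac1K\sum_i \Pr_i[I_t=i]$ is bounded away from $1$ uniformly in $t$.

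The heart of the argument is a change-of-measure bound. Since the event $\{I_t = i\}$ is measurable with respect to the history observed in rounds $1,\dots,t-1$, we have $\Pr_i[I_t=i] \le \Pr_0[I_t=i] + \mathrm{TV}(\mathbb{P}_0^{\,t-1}, \mathbb{P}_i^{\,t-1})$, where $\mathbb{P}^{\,t-1}$ denotes the law of the first $t-1$ observations. By Pinsker's inequality together with the standard divergence-decomposition identity --- the two instances differ only in arm $i$, so the KL divergence of the histories is $\E_0[N_i(t-1)]\cdot \mathrm{kl}(1/2,\,1/2+\eps)$, where $N_i(t-1)$ is the number of pulls of arm $i$ in the first $t-1$ rounds --- I would bound $\mathrm{TV}(\mathbb{P}_0^{\,t-1}, \mathbb{P}_i^{\,t-1}) \le 2\eps\sqrt{\E_0[N_i(t-1)]}$, using that $\mathrm{kl}(1/2,1/2+\eps) \le 8\eps^2$ for $\eps$ small.

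Summing over $i$ then finishes the proof. Using $\sum_i \Pr_0[I_t=i]=1$ and Cauchy--Schwarz together with $\sum_i \E_0[N_i(t-1)] = t-1 \le T$, I get
$$\frac1K\sum_i \Pr_i[I_t=i] \le \frac1K + \frac{2\eps}{K}\sum_i \sqrt{\E_0[N_i(t-1)]} \le \frac1K + 2\eps\sqrt{\frac{t-1}{K}} \le \frac1K + 2c,$$
where the last step substitutes $\eps = c\sqrt{K/T}$ and uses $t-1 < T$. Choosing $c$ a small enough constant (and $K \ge 2$) makes the right-hand side at most $3/4$, so the expected per-round regret in round $t$ is at least $\eps/4 = \Omega(\sqrt{K/T})$, as claimed.

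The main obstacle --- and the reason the statement is stronger than Lemma~\ref{lem:mab-lb} --- is ensuring the bound holds uniformly over all $t\in[T]$, including $t$ near $T$ where a learner has the most data. The crucial point is that the gap $\eps$ is tied to the total horizon $T$ rather than to $t$: the total information any learner can accumulate by round $t$ is at most $t-1\le T$ samples, and since $\eps^2 T \asymp K$, the aggregate divergence across arms stays $O(1)$, so no learner can concentrate enough samples to distinguish the planted arm even at the end of the game. Averaging over the uniformly random best arm is what converts this information bottleneck into a per-round regret bound via the Cauchy--Schwarz step.
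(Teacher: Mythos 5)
Your proof is correct, and it takes a genuinely different route from the paper on the one step that matters. You and the paper use the identical hard instance (a uniformly random planted arm with gap $\eps = \Theta(\sqrt{K/T})$ calibrated to the full horizon), but the paper disposes of the key claim --- that at every round $t \le T$ the learner plays the planted arm with probability bounded away from $1$ --- by citing Theorem 4 of \cite{audibert2010best} on best-arm identification, whereas you prove it from scratch by a change of measure against the all-arms-fair null instance: Pinsker plus the divergence-decomposition identity give $\mathrm{TV}(\mathbb{P}_0^{t-1},\mathbb{P}_i^{t-1}) \le 2\eps\sqrt{\E_0[N_i(t-1)]}$, and Cauchy--Schwarz with $\sum_i \E_0[N_i(t-1)] = t-1 \le T$ collapses the average over $i$ to $1/K + 2\eps\sqrt{(t-1)/K} \le 1/K + 2c$. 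The arithmetic checks out (with the implicit $K \ge 2$ and $c$ small, say $c = 1/8$), and the only point worth making explicit is that $\{I_t = i\}$ depends on the algorithm's internal randomness as well as the observed history; since that randomness is instance-independent, the total-variation bound on the joint law still applies. What your version buys is a self-contained argument that makes transparent \emph{why} the bound is uniform in $t$ --- the aggregate divergence stays $O(1)$ precisely because $\eps^2 T \asymp K$ --- while the paper's version buys brevity at the cost of deferring to an external result whose proof rests on essentially the same change-of-measure machinery.
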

\begin{proof}{Proof of Lemma \ref{lem:mab-lb-strong}}
See Section \ref{sec:proof:lem:mab-lb-strong}.
\Halmos\end{proof}

Now, let $f: [C] \rightarrow \R^{+}$ be any distribution on contexts (i.e., $\sum_{c=1}^C f(c) = 1$). Define $g(c) = \sum_{c'\in \cal I(c)}f(c')$. Consider the following distribution over instances of the contextual bandits problem with partial cross-learning:

\begin{itemize}
    \item Every round, the context $c_t$ is drawn independently from the distribution $f$.
    \item The distribution of rewards for a context $c$ is drawn from the distribution over hard instances in Lemma \ref{lem:mab-lb-strong} for a multi-armed bandit problem with $K$ arms and $g(c)T/2$ rounds. 
\end{itemize}

Note that in the second point, the distribution over rewards changes per context depending on $g(c)$. Intuitively, this is because we expect to observe (through learning) the performance of some action in context $c$ in approximately $g(c)T$ rounds.

For each context $c$ and round $t$, let $\tau_c(t) = \sum_{s=1}^{t}\mathbbm{I}(c_s \in \cal I(c))$ be the number of rounds up to round $t$ where we observe the performance of some action in context $c$. Let $T_c$ be the total number of rounds $t$ where $c_t = c$ and $\tau_c(t) \leq g(c)T$. We claim that any algorithm  must incur regret at least 
\begin{equation}\label{eq:context_reg_lb}
\Omega\left(\E[T_c]\sqrt{\frac{K}{g(c)T}}\right)
\end{equation}
\noindent
from the rounds where $c_t = c$. To see this, let $\{t_1, t_2, \dots, t_{\min(\tau_c(T), g(c)T)}\}$ be the set of (the first $g(c)T$) rounds where {$c_t \in \cal{I}(c)$}   and let $S(c) = \{j | c_{t_j} = c\}$ be the subset of indices where $c_{t_j}$ equals $c$. We claim that, conditioned on $S(c)$, any algorithm  must incur expected regret at least
$$\Omega\left(|S(c)|\sqrt{\frac{K}{g(c)T}}\right)$$
\noindent
from the rounds $t_j$ for $j \in S(c)$. If not, this means that there is one $j \in S(c)$ where the expected regret from this round is $o(\sqrt{K/(g(c)T)})$; but this would violate Lemma \ref{lem:mab-lb-strong} (in particular, this gives a regular multi-armed bandits algorithm which incurs expected regret $o(\sqrt{K/(g(c)T)})$ in round $j$). Since $|S_c| = T_c$, taking expectations over $T_c$, equation (\ref{eq:context_reg_lb}) follows.

Now, we claim that $\E[T_c] = \Omega(f(c)T)$. This follows since 
\begin{eqnarray*}
\E[T_c] &=& \sum_{j=1}^{g(c)T} \Pr[c_{t_j} = c]\cdot \Pr[\tau_c(T) \geq j] \\
&=& \frac{f(c)}{g(c)}\sum_{j=1}^{g(c)T} \Pr[\tau_c(T) \geq j] \\
&\geq & \frac{f(c)}{g(c)}(g(c)T/2)\cdot \Pr[\tau_c(T) \geq g(c)T/2] \\
&\geq & \frac{f(c)T}{2}\cdot (1 - \exp(-g(c)^2T/2)) \\
&\geq & \Omega(f(c)T)\,,
\end{eqnarray*}
\noindent
where the second inequality holds because conditioned on cross-learning  the reward under context $c$, the probability that context is $c$ is $f(c)/g(c)$. Recall that we learn  the reward under context $c$ when the realized  context  belongs to set $ \cal I(c)$ and by our definition, $g(c) =\sum_{c'\in \cal I(c)} f(c')$. 
Furthermore, in the last step, we use that $(1 - \exp(-g(c)^2T/2)) \geq \Omega(1)$ for sufficiently large $T$.

This implies that the expected regret from rounds where $c_t = c$ is at least $\Omega(f(c)\sqrt{KT/g(c)})$. Summing over all contexts $c$, the total expected regret is at least

$$\Omega\left(\left(\sum_{c=1}^{C}\frac{f(c)}{\sqrt{g(c)}}\right)\sqrt{KT}\right).$$

Since $\nu_2(G) = \sup_{f}\left(\sum_{c=1}^{C}\frac{f(c)}{\sqrt{g(c)}}\right)^2$, taking the supremum over $f$ we find that any algorithm must incur expected regret at least $\Omega(\sqrt{\nu_2(G)KT})$, as desired.

\subsection{Proof of Lemma \ref{lem:mab-lb-strong}}\label{sec:proof:lem:mab-lb-strong}

Consider the following distribution over instances of the multi-armed bandit problem. Let $\eps = \Theta(\sqrt{K/T})$ (the precise value to be chosen later). An arm $i$ is drawn uniformly at random from $[K]$. The rewards from arm $i$ are distributed according to $B((1+\eps)/2)$, and the arms for all $j \neq i$ are distributed according to $B((1-\eps)/2)$ (where here $B(p)$ is the Bernoulli distribution with probability $p$). 

We wish to claim that at any round $t \leq T$, the probability any learner plays the optimal arm $i$ is less than $1/2$, and therefore the learner must incur $\Omega(\eps) = \Omega(\sqrt{K/T})$ regret this round. This is therefore a best-arm identification problem. Theorem 4 in \cite{audibert2010best} implies there exists some $\eps = \Theta(\sqrt{K/T})$ such that this result holds for our distribution of instances.
%\"}

\subsection{Proof of Theorem \ref{thm:partial-adv-lb}}
Let $\{v_1, v_2, \dots, v_{\lambda(G)}\}$ be a maximum acyclic subset of $G$ (with the property that if $i < j$, there is no edge $v_i \rightarrow v_j$). We now proceed as in the proof of Theorem \ref{thm:advlb}. Divide the $T$ rounds into $\lambda(G)$ epochs of $T/\lambda(G)$ rounds each. The adversary must decide both the contexts every round, and the reward distributions for each context. The adversary will do so as follows:

\begin{itemize}
    \item For each round $t$ in epoch $i$, the adversary will set the context $c_t = v_i$.
    \item For each context $c$, the adversary will set the reward distribution equal to a hard instance for the multi-armed bandit problem sampled from the distribution from Lemma \ref{lem:mab-lb}.
\end{itemize}

Note that since the contexts $v_i$ belong to an acyclic subset of $G$, any information cross-learned in epoch $i$ will reveal nothing about the reward distribution for any context $v_j$ with $j > i$ (and hence nothing about the reward distribution in any epoch $j > i$). Since the hard instances are all independent of each other, any algorithm for the contextual bandits problem with partial cross-learning which achieves $o(\sqrt{\lambda(G)KT})$ expected regret on this instance must achieve $o(\sqrt{KT/\lambda(G)})$ expected regret on one of the individual instances, which contradicts Lemma \ref{lem:mab-lb}. %{\color{red} todo: update this after the proof of Theorem 4 is updated.}

\section{Settings with Adversarial  Rewards and Stochastic Contexts, and Unknown Context Distribution}
\label{sec:discuss}
One of the biggest open questions in this work is whether there exists an algorithm in the adversarial rewards setting that can achieve regret that scales with $O(\sqrt{T})$ (as opposed to $O(T^{2/3})$) in the setting where the context distribution $\D$ is \textit{unknown}. Since there is an $\tilde{O}(\sqrt{KT})$ regret in the case where the context distribution is known (\expcrossk{}), it is natural to ask whether there is some way to generalize this algorithm to the setting where the context distribution is unknown (perhaps by using an empirical estimate of the context distribution in place of the known distribution). 

We conjecture that doing this should indeed work and result in a $\tilde{O}(\sqrt{KT})$ regret -- one piece of empirical evidence in favor of this is that the variant of \expcrossk{} tested in our empirical simulations in Section \ref{sect:experiments} does exactly this (uses the empirical distribution of contexts observed thus far to compute the value estimator in \expcrossk{}). Nonetheless, showing that this modified algorithm achieves regret that scales with $O(\sqrt{T})$ remains stubbornly out of reach of our current analytical tools. In this appendix, we provide a short discussion of some of the difficulties that come with rigorously analyzing this modified algorithm.

Before we proceed, we quickly present a brief reminder of the main properties of \expcrossk{}. For this discussion, we will work entirely in the complete cross-learning setting, and suppress dependence on $K$ (which we can take to be a constant) and logarithmic terms in $T$. The main feature of \expcrossk{} is that it updates the weights by multiplying them by the unbiased estimator
\begin{equation}\label{app:eq1}
\hat{r}_{i, t}(c) = \frac{r_{i,t}(c)}{\sum_{c'=1}^C \Pr[c'] \cdot p_{i, t}(c')}\mathbbm{I}(I_t = i).
\end{equation}
This unbiased estimator has low variance, but computing the denominator requires knowledge of the distribution over contexts. We would like to replace it with a similar estimator

\begin{equation}\label{app:eq2}
\tilde{r}_{i, t}(c) = \frac{r_{i,t}(c)}{\tilde{D}_{i, t}}\mathbbm{I}(I_t = i),
\end{equation}

\noindent
where $\tilde{D}_{i, t}$ is some sufficiently close approximation to the quantity $\sum_{c'=1}^C \Pr[c'] \cdot p_{i, t}(c')$ (which we will refer to from now on as $D_{i, t}$) that does not require full knowledge about $\D$ to compute. One natural choice for $\tilde{D}_{i, t}$ is
\begin{equation}\label{eq:app_approx}
\tilde{D}_{i, t} = \sum_{c'=1}^C \widehat{\Pr}_t[c']p_{i, t}(c'),
\end{equation}
where we have replaced the true probability $\Pr[c']$ of context $c'$ with the empirical probability $\widehat{\Pr}_t[c']$ from our observation of contexts thus far.

The first difficulty we run into is that even when $\tilde{D}_{i, t}$ is a good approximation to $D_{i, t}$, since these terms occur in the denominators of equations \eqref{app:eq1} and \eqref{app:eq2}, tiny additive errors can be amplified if both quantities are small. To be more concrete, assume that with high probability $|\tilde{D}_{i, t} - D_{i, t}|$ is at most $\delta$ (at best we should expect $\delta$ to be around $T^{-1/2}$, since we have at most $T$ samples of contexts). We can then relate the first and second moments of our new approximate estimator $\tilde{r}_{i, t}(c)$ to those of our original unbiased estimator:
$$\E[\tilde{r}_{i,t}(c)] = \E[\hat{r}_{i, t}(c)] + \frac{D_{i, t} - \tilde{D}_{i, t}}{\tilde{D}_{i, t}}r_{i,t}(c)$$
and
$$\E[\tilde{r}_{i,t}(c)^2] = \E[\hat{r}_{i, t}(c)^2] + \frac{\left(\sum_{c'=1}^C\Pr[c'] p_{i,t}(c')\right)^2 - \tilde{D}_{i, t}^2}{\tilde{D}_{i, t}^2D_{i, t}}r_{i,t}(c)^2.$$

If $D_{i, t}$ and $\tilde{D}_{i, t}$ are both reasonably large (e.g. bounded away from $0$ by a constant), these approximations are quite good: in both cases, we have an $O(\delta)$ additive approximation to the corresponding moment for the unbiased estimator $\hat{r}_{i, t}(c)$.  These $O(\delta)$ additive approximations end up contributing an extra $O(\delta T)$ regret, which is fine if $\delta = O(T^{-1/2})$. 

But when $D_{i,t}$ and $\tilde{D}_{i, t}$ are close to zero, these additive errors explode. And although it may be rare, it is possible for $D_{i, t}$ to be small (this just means that it is unlikely to pick action $i$ this round over all contexts). In this case, the best direct lower bound we can show for $D_{i,t}$ is the exploration constant $\alpha$ -- but setting $\alpha$ to a value smaller than $O(1/\sqrt{T})$ would result more than $\sqrt{T}$ regret, and lower bounding $D_{i, t}$ by $1/\sqrt{T}$ increases the additive error in these approximations to $O(\delta \sqrt{T})$. Since we sustain this additive error each round, for $\delta = O(T^{-1/2})$ this leads to $O(1)$ extra regret per round for a total of $O(T)$ regret, so we get no meaningful regret guarantee. (By balancing $\delta$ and $\alpha$ it is possible to recover an $\tilde{O}(T^{2/3})$ regret algorithm, albeit one more complicated than \expcrossu).

The second difficulty we face is more subtle, but arguably more pernicious. Recall that earlier we mentioned that with $O(T)$ samples from $\D$, we should be able to approximate $D_{i, t}$ to within $O(T^{-1/2})$ with high probability. Ordinarily, this type of bound would follow from Hoeffding's inequality: if we have $S$ samples from $\D$, we can rewrite the definition of $\tilde{D}_{i, t}$ in Equation \eqref{eq:app_approx} as $\tilde{D}_{i,t} = \sum_{s=1}^S p_{i,t}(c_s)$, which is a sum of $S$ bounded independent random variables (where the $s$th such variable takes on value $p_{i, t}(c_s)$ if $c_s$ is the $s$th context we observe), which should not differ from their mean by more than $O(1/\sqrt{S})$. But there is one big caveat here: in practice, we do not see a fresh {(independent)} set of samples from $\D$ each round, but rather we add a single new sample to our empirical distribution each round. 
Since our choice of function $p_{i, t}(c)$ in round $t$ depends on the contexts we have observed before round $t$, our sample of contexts at round $t$ \textit{is not statistically independent from} the values of $p_{i, t}(c)$ at round $t$. To wit, if we use the realized contexts observed so far $c_1,\ldots,c_t$ as samples, we could write $\tilde{D}_{i,t} = \sum_{s=1}^t p_{i,t}(c_s)$ and, now, the random variables $p_{i,t}(c_s)$ are no longer independent because the historical contexts $c_1,\ldots,c_{t-1}$ are used to compute $p_{i,t}$. This prevents the above random variables from being independent, and hence prevents us from applying Hoeffding's inequality as desired.

\section{Other Applications of Cross-Learning}\label{sec:extra_app}

\paragraph{Multi-armed bandits with exogenous costs:}
In this problem, as in the standard stochastic multi-armed bandit problem, a learner must repeatedly (for $T$ rounds) make a choice between $K$ options, where the reward $r_{i,t} \in [0, 1]$ from choosing option $i$ is drawn from some distribution $\D_{i}$ with mean $\mu_i$. However, in addition to this, at the beginning of each round $t$, a cost $s_{i,t} \in [0, 1]$ for playing arm $i$ is adversarially chosen and publicly announced (and choosing arm $i$ this round results in a net reward of $r_{i,t} - s_{i,t}$). The learner's goal is to get low regret compared to the optimal strategy, which always chooses the option which maximizes $\mu_{i} - s_{i,t}$. 

This can be thought of as a contextual bandits problem where the context $c_t$ is the cost vector $s_{t}$. Discretizing the context space $[0, 1]^K$ into $(1/\eps)^K$ pieces and running $S$-UCB results in an overall regret bound of $\tilde{O}(\sqrt{TK\eps^{-K}} + \eps T)$. Optimizing over $\eps$ yields $\eps = (K/T)^{1/(K+2)}$, which results in a regret of $\tilde{O}(T^{(K+1)/(K+2)}K^{1/(K+2)})$.

Again, cross-learning between contexts is possible. Applying \ucbcross{}, this immediately leads to an algorithm which achieves regret $\tilde{O}(\sqrt{KT})$ (which is optimal since the standard stochastic multi-armed bandit problem is a special case of this problem).

\paragraph{Sleeping bandits:}
In this variant of sleeping bandits, there are $K$ arms (with stochastically generated rewards in $[0, 1]$) and in each round some nonempty subset $S_t$ of these arms are awake. The learner can play any arm and observe its reward, but only receives this reward if they play an awake arm. The learner would like to get low regret compared to the best policy (which always plays the awake arm whose distribution has the highest mean). 

This is a contextual bandits problem where the context $c_t$ is the set $S_t$ of awake arms. Since there are $2^{K} - 1$ possible contexts, naively applying $S$-UCB results in an regret bound of $\tilde{O}(\sqrt{2^{K}KT})$. On the other hand, cross-learning between contexts is again present in this setting: given the observation of the reward of arm $i$, one can infer the received reward for any context $S_t'$ by just checking whether $i \in S_t'$. Applying \ucbcross{}, this leads to an optimal $\tilde{O}(\sqrt{KT})$ regret algorithm for this problem. 

In the setting of sleeping bandits originally studied by %Kleinberg, Niculescu-Mizi, and Sharma,
\cite{kleinberg2010regret}, the learner can neither play nor observe sleeping arms. We can capture this setting via contextual bandits with partial cross-learning. 
We adjust the previous setting so that if a learner chooses an asleep arm, they receive zero reward and observe nothing else. Note that in this case, we have the following partial learning structure between contexts:

\begin{itemize}
    \item If arm $I_t \in S_t$, rewards $r_{I_t, t}(S)$ are revealed for all other subsets $S$. These rewards are given by $r_{I_t, t}(S) = \mathbbm{I}(I_t \in S)r_{I_t, t}(S_t)$.
    \item If arm $I_t \not\in S_t$, rewards $r_{I_t, t}(S)$ are revealed only for subsets $S$ where $I_t \not\in S$. These rewards are given by $r_{I_t, t}(S) = 0$.
\end{itemize}

In other words, $G_i$ is the following graph: there is an edge from $S_1 \rightarrow S_2$ if either $i \in S_i$ or if $i \not\in S_1 \cup S_2$. Note that $G_i$ has clique cover number $\kappa(G_i) = 2$; the set of subsets containing $i$ and the set of subsets not containing $i$ both form subcliques of $G_i$. It follows from Theorem \ref{thm:ucbcross} that running Algorithm \ref{alg:UCB1PC} results in an optimal regret bound of $\tilde{O}(\sqrt{KT})$.

In \cite{KanadeMB09}, the authors study a variant of sleeping bandits where the subsets $S_t$ are generated stochastically, but the rewards are chosen adversarially. They demonstrate an algorithm for this setting with $\tilde{O}(K^{1/5}T^{4/5})$ regret, which was later improved to $\tilde{O}(K^{1/3}T^{2/3})$ regret by \cite{neu2014online}. In our language, this is simply the adversarial contexts/stochastic rewards variant of the above problem. Applying Theorem \ref{thm:exppcross}, it follows that if the distribution over subsets is known, Algorithm \ref{alg:exp3PC} incurs regret at most  $\tilde{O}(\sqrt{KT})$ for this problem.

\section{Further Numerical Studies} \label{sec:numercis}

In this section, we investigate if our algorithms -- when used in the problem of bidding in the first-price auctions -- are sensitive to how outliers are handled. To do so, we generate synthetic auction data in which the value of a bidder and their highest competing bid are generated from a  correlated log-normal distribution with a correlation coefficient of $0.4$. (Recall that in our empirical results presented in Section \ref{sect:experiments}, the correlation coefficient between the bidder's value and their highest competing bid is $0.4$.) In the generated auction data, the probability that the maximum of the bidder's value and the highest competing bid are larger than $1$ is $0.1$. (The bidder's bid and highest competing bid are independent across auctions.)

We consider two settings. In the first setting, we remove all the auctions in which the maximum of the bidder's value and the highest competing bid is greater than $1$, and in the second setting, we keep all the auctions. In both of these settings, in all the considered algorithms, the submitted bids by the bidders are restricted to the range of $[0,1]$, and allowable bids are discretized to multiples of $0.01$. 

Figures \ref{fig:regret_with_removing_outliers} and \ref{fig:regret_without_removing_outliers} show the cumulative  regret versus time $t$ of  our three algorithms (\ucbcross{}, \expcrossk{}, and \expcrossu{}) that use cross-learning between values/contexts and the two benchmark algorithms  $S$-EXP3, and $S$-UCB1  under the two aforementioned settings. We again observe that our three algorithms outperform  $S$-EXP3, and $S$-UCB1, demonstrating robustness of our algorithms to outliers. In addition, as in Figure \ref{fig:regrets}, \expcrossk{} surpasses \expcrossu{} despite the fact that both algorithms do not know the context distribution.  % consider complete cross-learning between contexts despite the correlation between the bidder's value and highest competing bid. 

\begin{figure}[t]
  \centering
  \begin{subfigure}[t]{0.8\textwidth}
    \centering    \includegraphics[height=7cm]{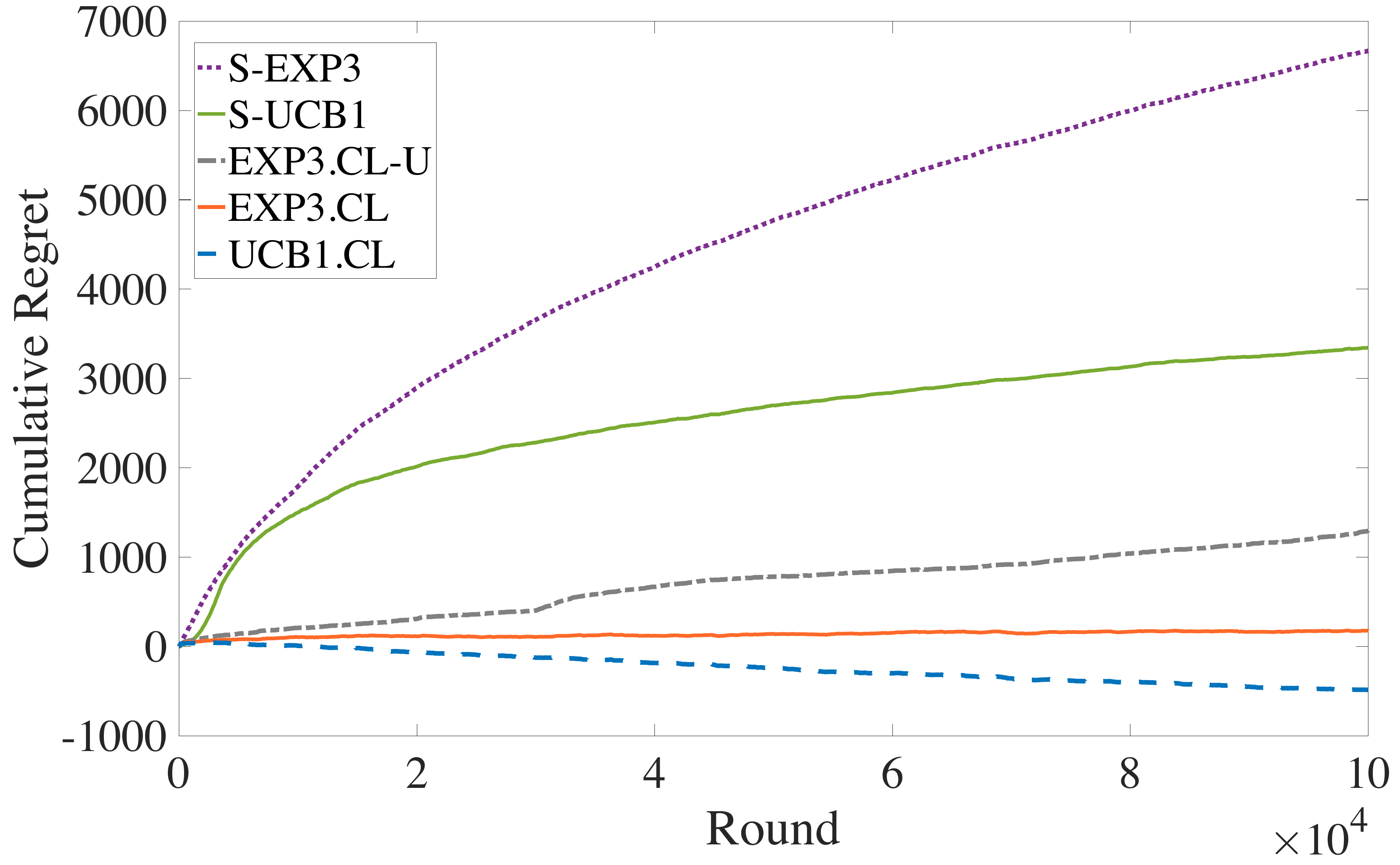}
    \caption{Outliers are removed}
    \label{fig:regret_with_removing_outliers}
  \end{subfigure}%
  \\
  \bigskip
  \begin{subfigure}[t]{0.8\textwidth}  
    \centering
    \includegraphics[height=7cm]{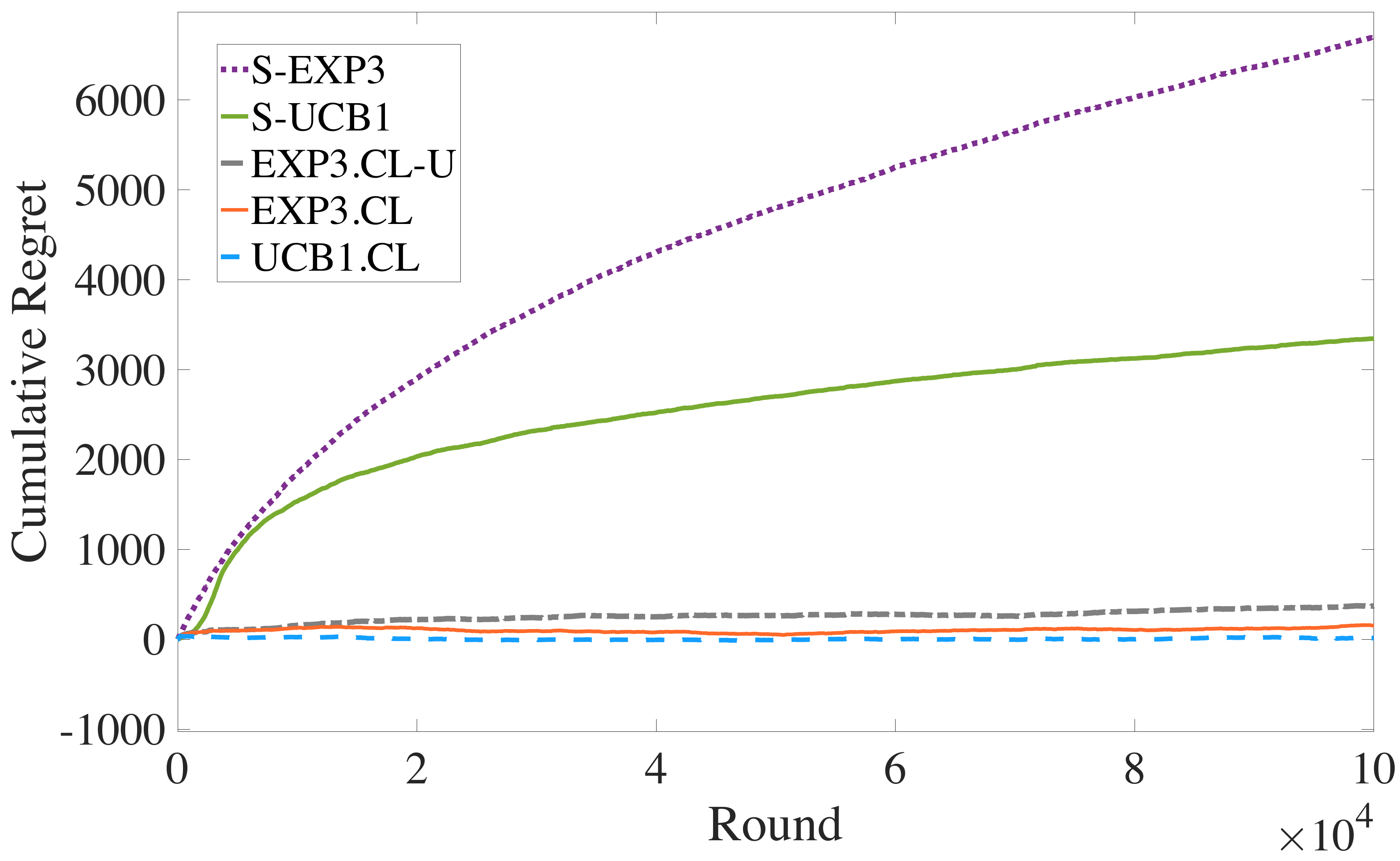}
    \caption{Outliers are not removed}
    \label{fig:regret_without_removing_outliers}
  \end{subfigure}  
  \caption{Graph of average cumulative regrets of various learning algorithms (y-axis) versus time (x-axis). Taking advantage of cross-learning via our algorithms (\ucbcross{}, \expcrossk{}, and \expcrossu{}) leads to a lower regret, regardless of how outliers are handled.  }
\end{figure}

\end{APPENDICES}

\end{document}